\newtheorem{condition}{Condition}%
\begin{document}

\title{Deep Neural Networks with General Activations: Super-Convergence in Sobolev Norms}

\author{\name Yahong Yang \email yyang3194@gatech.edu \\
       \addr School of Mathematics\\
       Georgia Institute of Technology\\
686 Cherry Street,
Atlanta, GA 30332-0160, USA
       \AND
       \name Juncai He\thanks{\hspace{-2pt}Corresponding author.} 
       \email jche@tsinghua.edu.cn \\
       \addr Yau Mathematical Sciences Center\\
       Tsinghua University\\
       Haidian District, Beijing 100084, China}

\editor{TBD}

\maketitle

\begin{abstract}
This paper establishes a comprehensive approximation result for deep fully-connected neural networks with commonly-used and general activation functions in Sobolev spaces $W^{n,\infty}$, with errors measured in the $W^{m,p}$-norm for $m < n$ and $1\le p \le \infty$. The derived rates surpass those of classical numerical approximation techniques, such as finite element and spectral methods, exhibiting a phenomenon we refer to as \emph{super-convergence}. Our analysis shows that deep networks with general activations can approximate weak solutions of partial differential equations (PDEs) with superior accuracy compared to traditional numerical methods at the approximation level. Furthermore, this work closes a significant gap in the error-estimation theory for neural-network-based approaches to PDEs, offering a unified theoretical foundation for their use in scientific computing.
\end{abstract}

\begin{keywords}
Deep neural networks; General activation functions; Super-convergence; Sobolev approximation
\end{keywords}

\section{Introduction}

Deep neural networks (DNNs) have had a significant impact on scientific and engineering fields, including the numerical solution of partial differential equations (PDEs) \cite{Lagaris1998,weinan2017deep,raissi2019physics,de2022error}. Compared to learning a target function in regression or classification tasks, solving PDEs is more challenging, since the DNN must approximate the target function with small discrepancies not only in magnitude but also in its derivatives—often up to higher orders. For example, when using the Deep Ritz method \cite{weinan2017deep} to solve a PDE such as
\begin{equation}
\begin{cases}
-\Delta u = f, & \text{in } \Omega, \\
\displaystyle \frac{\partial u}{\partial \nu} = 0, & \text{on } \partial \Omega,
\end{cases}
\label{PDE}
\end{equation}
the corresponding loss function can be written as
\[
\mathcal{E}_D(\vtheta) \;:=\; \frac{1}{2} \int_{\Omega} \lvert \nabla \phi(\vx;\vtheta)\rvert^2 \,\mathrm{d}\vx
\;+\; \frac{1}{2} \Bigl(\!\int_{\Omega} \phi(\vx;\vtheta)\,\mathrm{d}\vx\Bigr)^{2}
\;-\; \int_{\Omega} f(\vx)\,\phi(\vx;\vtheta)\,\mathrm{d}\vx,
\]
where $\vtheta$ denotes all network parameters and $\Omega = [0,1]^d$ is the domain. Proposition 1 in \cite{lu2021priori} establishes that
\[
c\,\bigl\lVert \phi(\,\cdot\,;\vtheta)-u^* \bigr\rVert_{H^1(\Omega)}
\le
\mathcal{E}_D(\vtheta)
\le
C\,\bigl\lVert \phi(\,\cdot\,;\vtheta)-u^* \bigr\rVert_{H^1(\Omega)},
\]
where \(u^*(\vx)\) denotes the exact solution of \eqref{PDE}, and \(c,C>0\) are constants independent of the neural network. Here, the Sobolev norm is defined by
\[
\lVert v \rVert_{H^1(\Omega)} \;:=\; \Biggl(\sum_{|\boldsymbol{\alpha}|\le 1} \bigl\lVert D^{\boldsymbol{\alpha}} v \bigr\rVert_{L^2(\Omega)}^2\Biggr)^{1/2}.
\]
For physics-informed neural networks (PINNs) \cite{raissi2019physics},
the residual loss can likewise be controlled by an \(H^{2}\)-type loss:
by the trace theorem and the continuity of the differential operator in the
underlying PDE, the PINN loss is bounded above by the corresponding
\(H^{2}\)-norm of the error.
A detailed proof is given in \cite[Lemma 4.3]{yang2024deeper}.

There have been several works on applying neural networks to approximate target functions in Sobolev spaces, measured by Sobolev norms \cite{guhring2020error,guhring2021approximation,yang2023nearly,yang2023nearlys,yang2024deeper,yang2024near,abdeljawad2022approximations,shen2022approximation,de2021approximation}. In \cite{guhring2020error,guhring2021approximation,abdeljawad2022approximations,shen2022approximation,de2021approximation}, the approximation rates achieved by continuous function approximators are nearly optimal in the sense of \cite{devore1989optimal}, but they do not improve upon classical methods for solving PDEs (e.g., finite element or spectral methods). In other words, those works provide approximation rates that are comparable to traditional PDE solvers, but they do not explain why one would prefer neural networks in this context. For a more detailed discussion of continuous versus discontinuous approximators, see \cite{siegel2022optimal,yang2024near,yarotsky2020phase}.

On the other hand, \cite{yang2023nearly,yang2024deeper,yang2024near,shen2022approximation} establish optimal approximation rates of deep neural networks measured by Sobolev norms—rates that outperform continuous approximators. We refer to this phenomenon as \emph{super-convergence} \cite{shen2019nonlinear,shen2020deep,shen2022optimal,lu2021deep}. However, these results are all based on the rectified linear unit (ReLU) activation function \cite{glorot2011deep}, whose weak derivative is piecewise constant and hence cannot capture smooth features accurately. For example, consider the one‐dimensional Poisson problem
\begin{equation}\label{eq:poisson}
  \Delta u(x) = -\pi^2 \sin(\pi x), 
  \quad
  x\in[-1,1],
  \qquad
  u(-1)=u(1)=0.
\end{equation}
As shown in Figure~\ref{fig:compare-b}, although a deep ReLU network can approximate the solution \(u(x)\) very accurately, its approximation of the first‐order derivative \(u'(x)\) exhibits large discrepancies.

\begin{figure}[!th]
\centering
\begin{subfigure}[t]{0.43\textwidth}
    \centering
    \includegraphics[width=0.99\textwidth]{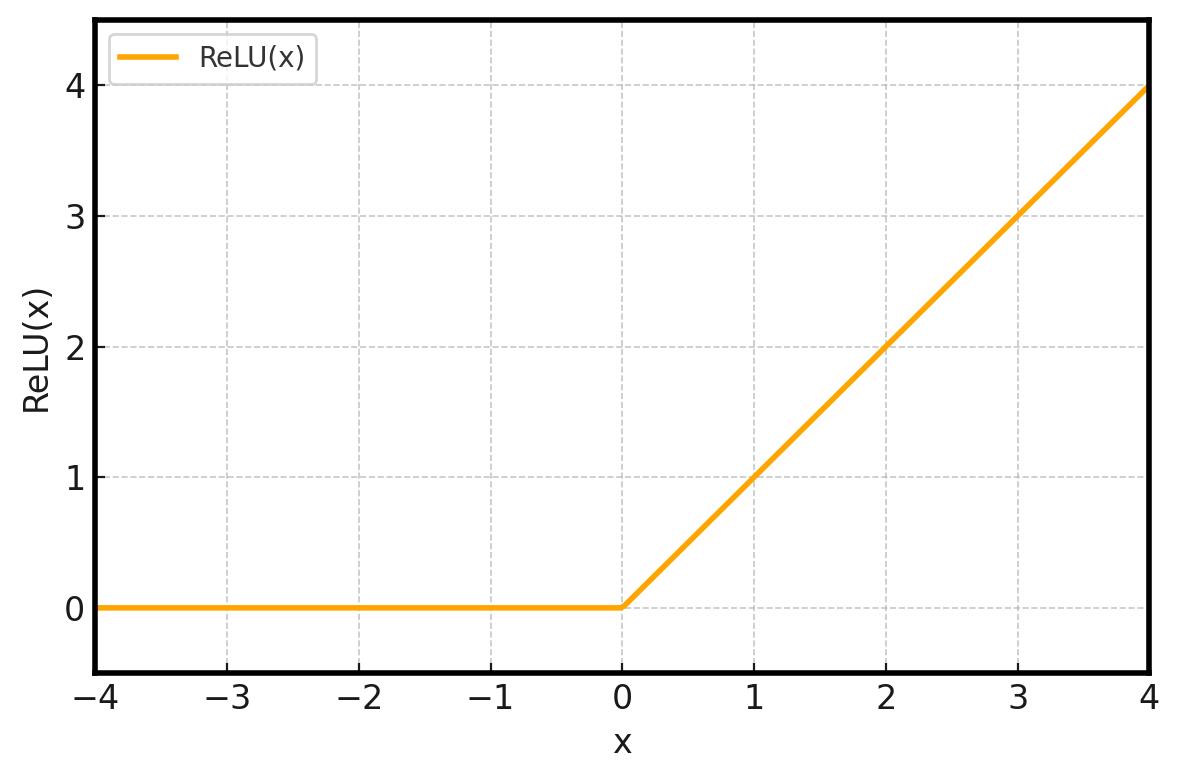}
    \caption{ReLU}
    \label{fig:compare-a}
\end{subfigure}
\hfill
\begin{subfigure}[t]{0.50\textwidth}
    \centering
    \includegraphics[width=0.99\textwidth]{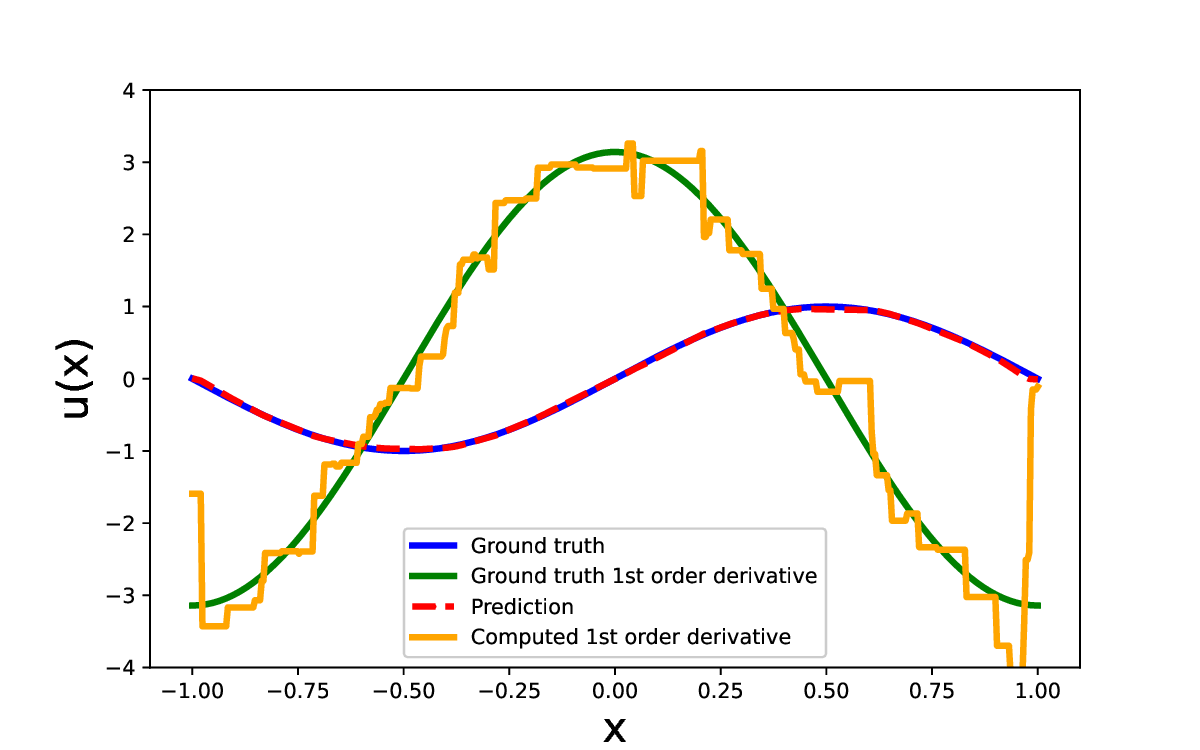}
    \caption{Deep ReLU networks to solve \eqref{eq:poisson}}
    \label{fig:compare-b}
\end{subfigure}
\caption{(a) The ReLU activation function. (b) Approximation of \(u\) and \(u'\) by a deep ReLU network for the Poisson problem \eqref{eq:poisson}.}
\label{fig:compare}
\end{figure}

To overcome this issue, \cite{yang2023nearlys} combines ReLU and the square of ReLU to smooth the network; however, this approach introduces two activation functions. In practice, most neural networks are built with a single activation function to simplify network design and training. Therefore, in this paper, we establish approximation results for deep neural networks with single smooth activation functions.

In order to establish super-convergence rates for deep neural networks with general activation functions, one might first apply a $W^{m,\infty}$‐approximation result for deep ReLU or the power of ReLU networks, and then use a fixed‐size subnetwork with a smooth activation (e.g.\ GELU) to approximate the ReLU or the power of ReLU activation itself to arbitrarily small error. By composing these two approximations, one would obtain super-convergence for deep networks with general smooth activations, since the additional cost of the fixed‐size subnetwork does not change the overall network size. This idea and framework were developed in \cite{zhang2024deep} for the $L^\infty$‐norm. For instance, recall that
\[
\mathrm{GELU}(x) \;=\; \frac{x}{\sqrt{2\pi}}\int_{-\infty}^x e^{-t^2/2}\,\mathrm{d}t.
\]
It is well known that for any fixed $M>0$, 
\[
\lim_{K\to\infty}\Bigl\|\tfrac{\mathrm{GELU}(Kx)}{K} - \mathrm{ReLU}(x)\Bigr\|_{L^\infty([-M,M])} \;=\; 0.
\]
However, this convergence does not hold in the $W^{m,\infty}$‐sense, even for $m=1$, because ReLU is not differentiable at the origin. In particular, no fixed‐size GELU subnetwork can be uniformly $W^{1,\infty}$‐close to a ReLU activation. Instead, one only has
\[
\lim_{K\to\infty}\Bigl\|\tfrac{\mathrm{GELU}(Kx)}{K} - \mathrm{ReLU}(x)\Bigr\|_{W^{1,p}([-M,M])} = 0
\qquad
\text{for each } p\in[1,\infty).
\]
But this weaker $W^{1,p}$‐approximation is not sufficient to recover the argument in \cite{zhang2024deep}, even if the final goal is only a $W^{1,p}$‐approximation. Their key composition estimate
\begin{equation}\label{eq:composition‐est}
\|\,g_1\circ f - g_2\circ f\|_{W^{1,p}(\Omega)}
\;\le\;
\|\,g_1 - g_2\|_{W^{1,\infty}(I)}\Bigl(|\Omega|^{1/p} + \|\nabla f\|_{L^p(\Omega)}\Bigr),
\end{equation}
requires a $W^{1,\infty}$‐bound on $g_1 - g_2$ over
\[
I \;\supset\; f(\Omega) \;=\; \{\,f(x): x\in\Omega\}.
\]
One cannot replace $\|g_1 - g_2\|_{W^{1,\infty}(I)}$ with $\|g_1 - g_2\|_{W^{1,p}(I)}$ in \eqref{eq:composition‐est}. Consequently, the method of \cite{zhang2024deep} fails even if one aims only for a final $W^{1,p}$‐approximation. Therefore, in the present work, we take a different route: rather than reducing to the ReLU case, we construct a single deep neural network using the chosen smooth activation (e.g., GELU) and prove its super-convergence directly in $W^{m,\infty}$.

The most comprehensive results on super approximation in the $W^{m,\infty}$ norm for smooth functions are given in~\cite{yang2022approximation,yang2023nearlys}. These works rely on two key ideas: (1) constructing smooth partitions of unity using ${\rm ReLU}^m$ activations; and (2) employing ReLU networks for bit-extraction. Building on these insights, we show that any activation function satisfying the following two properties yields super-convergence in $W^{m,\infty}(\Omega)$.
Specifically, for any \(M,\varepsilon,\delta>0\), there exist \(\sigma\)-networks \(\phi_1\) and \(\phi_2\) with fixed depth and width, independent of \(M\), \(\delta\), and \(\varepsilon\), while the network parameters may depend on \(M\), \(\delta\), and \(\varepsilon\), such that
\begin{enumerate}[label=(\roman*)]
  \item\label{itm:phi1}
  \[
    \bigl\|\phi_{1}-\mathrm{ReLU}^{m+1}\bigr\|_{W^{m,\infty}([-M,M])}
    \le \varepsilon.
  \]

  \item\label{itm:phi2}
  \[
    \bigl\|\phi_{2}-\mathrm{ReLU}\bigr\|_{L^{\infty}([-M,M])}
    \le \varepsilon,
    \qquad
    \bigl\|\phi_{2}-\mathrm{ReLU}\bigr\|_{W^{m,\infty}([-M,-\delta]\cup[\delta,M])}
    \le \varepsilon.
  \]
\end{enumerate}
\begin{remark}
    Here we would like to remark that, as \(\delta\to 0\), \(\varepsilon\to 0\), or \(M\to\infty\), the network parameters may diverge. This phenomenon suggests that the super-convergent neural network approximations may require large parameter values. Based on our current proof strategy, this appears to be unavoidable.
\end{remark}

These two properties of the activation function $\sigma(x)$ offer an intuitive pathway toward explaining the super-convergence rate from a constructive perspective. Specifically, they enable a three-stage network construction: First, we approximate characteristic step functions by leveraging the fact that any such activation can approximate \(\mathrm{ReLU}\) uniformly in the \(W^{m,\infty}\)-norm on \([-M,-\delta]\cup[\delta,M]\); Second, we build a fixed-size subnetwork that approximates \(\mathrm{ReLU}^{m+1}\), and by combining this with the polynomial-approximation result of \cite{he2023expressivity}, we obtain fixed-size networks for arbitrary polynomials on \([-M,M]\); Finally, we merge these subnetworks to implement a smooth partition of unity over the domain. Applying the Bramble–Hilbert Lemma \cite[Lemma~4.3.8]{brenner2008mathematical} on each cell of the partition then completes the proof of Theorem~\ref{firstmian}.

It is not difficult to verify that the activation function \(\operatorname{ReLU}^{m+1}\) satisfies the two key properties stated above; see Chapter~\ref{reluk} for more details. By contrast, for more general activation functions, such as \(\tanh\), \(\operatorname{GELU}\), and others, verifying these properties directly is generally less transparent. For this reason, we introduce two more general and easily checkable sufficient conditions on the activation function. Their role is to ensure that the activation can approximate \(\operatorname{ReLU}\) and \(\operatorname{ReLU}^{m+1}\) well enough for the super-convergence argument to go through. Thus, the proof mechanism itself does not change; rather, these conditions provide a practical criterion that makes the theory applicable to a wider class of activation functions.

A key motivation comes from the identity
\begin{equation}\label{eq:defReLUK}
\mathrm{ReLU}^k(x) = x^k H(x),
\quad\text{where}\quad
H(x) = \mathrm{ReLU}^0(x) :=
\begin{cases}
0, & x \le 0, \\
1, & x > 0.
\end{cases}\notag
\end{equation}
This inspires the following unified framework for establishing super-convergence in the $W^{m,\infty}$ norm.
\begin{condition}[Global: $m$-th order quasi-decay]\label{condition1}
An activation function $\sigma:\mathbb R\to\mathbb R$ is said to satisfy the
global \(m\)-th order quasi-decay condition if for every integer $0\le k\le m$ and every $x\neq0$,
\begin{equation}\label{eq:Cond1}
   \left|\sigma^{(k)}(x)-H^{(k)}(x)\right|
   \le
   \min \left\{\dfrac{C}{|x|^{k+1}}, G\right\},
\end{equation}
where \(C>0\) and \(G>0\) are fixed constants.
\end{condition}
\begin{remark}\label{rmk:condition1}
The representational power of deep neural networks remains invariant under affine transformations of the activation function. Specifically, shifting and scaling an activation function $\sigma(x)$ does not alter the network’s expressive capacity. Accordingly, we say that an activation function $\sigma(x)$ satisfies Condition~\ref{condition1} in the affine-invariant sense if there exist constants $\alpha, \beta \in \mathbb{R}$ such that the transformed function $\widetilde{\sigma}(x) = \beta (\sigma(x)+\alpha)$ satisfies Condition~\ref{condition1}.

A more direct and sufficient approach to verify this condition is to ensure both that $\sigma \in W^{m,\infty}_{\mathrm{loc}}(\mathbb{R})$ and that the following decay estimate holds:
$$
\left|\sigma^{(k)}(x) - H^{(k)}(x)\right| \le \frac{C}{|x|^{k+1}} \quad \text{as } x \to \pm\infty,
$$
for all $k \le m$.

Here we would like to point out a limitation of this condition. Roughly speaking, it requires that \(\sigma\) (or later \(\sigma/x\)) behave approximately like a constant or a linear function for large inputs. In other words, this is an asymptotic condition on the activation function at infinity. Such a condition is satisfied by many commonly used activation functions, but it excludes \(\operatorname{ReLU}^k\) for \(k\ge 2\). We emphasize, however, that this does not mean that \(\operatorname{ReLU}^k\)-networks cannot achieve super-convergence. Rather, \(\operatorname{ReLU}^k\) falls outside this general sufficient condition and therefore needs to be treated separately. We discuss this case in detail in Chapter~\ref{reluk}.
\end{remark}

\begin{condition}[Local: nonlinear behavior]\label{condition2}
A function $\sigma$ is said to exhibit nonlinear behavior if there exist $a \in \mathbb{R}$ and $\delta_* > 0$ such that
\[
\sigma \in C^{\infty}(a - \delta_*, a + \delta_*) \quad \text{and} \quad \sigma''(a) \ne 0.
\]
In other words, $\sigma$ is locally smooth and non-affine over some interval.
\end{condition}

Under either Conditions~\ref{condition1}--\ref{condition2} or Items~\ref{itm:phi1}--\ref{itm:phi2}, we establish the following main theorem.

\begin{theorem}\label{thm:mian}
Suppose that one of the following holds:
\begin{itemize}
\item[(a)] \(\sigma\) satisfies Condition~\ref{condition2}, and either \(\sigma(x)\) or
    \(
    \frac{\sigma(x)-\sigma(0)}{x}
    \)
    is well defined on \(\mathbb{R}\) and satisfies Condition~\ref{condition1};
    \item[(b)] \(\sigma\) satisfies Items~\ref{itm:phi1} and \ref{itm:phi2}.
\end{itemize}
Then neural networks with activation \(\sigma\) achieve super-convergence rates in \(W^{m,\infty}\).

More precisely, for any \(f \in W^{n,\infty}(\Omega)\) with \(0 \le m < n\), and for any \(N, L \in \mathbb{N}_+\), there exists a \(\sigma\)-neural network \(\phi\) with depth \(C_1 L \log L\) and width \(C_2 N \log N\) such that
\begin{equation}\label{eq:maintheorem}
\|f - \phi\|_{W^{m,\infty}(\Omega)} \le C_0 \|f\|_{W^{n,\infty}(\Omega)} N^{-2(n-m)/d} L^{-2(n-m)/d},
\end{equation}
where the constants \(C_0, C_1, C_2 > 0\) are independent of \(N\) and \(L\).
\end{theorem}
\begin{remark}
    For notational simplicity in the proofs of Chapter \ref{apprch}, we impose the additional assumption \(\sigma(0)=0\). This assumption is made only to simplify the exposition and does not affect the generality of the theorem statement.
\end{remark}

The approximation result in Theorem~\ref{thm:mian} exhibits a super-convergence phenomenon: it outperforms classical continuous approximation methods such as finite element and spectral methods. Indeed, the typical approximation rate achieved by such methods is \(M^{-\frac{n-m}{d}}\), where \(n\) denotes the Sobolev regularity of the target function, \(m\) is the order of the norm used to measure the error, \(d\) is the input dimension, and \(M\) is the number of degrees of freedom. This rate is known to be optimal for continuous approximators; see \cite{devore1989optimal}. By contrast, Theorem~\ref{thm:mian} yields an error bound of order \((NL)^{-\frac{2(n-m)}{d}}\). Since the total number of parameters scales as \(\mathcal{O}(N^2L)\), up to logarithmic factors, it follows that when the width \(N\) is fixed, we have \(M\sim L\), and thus the approximation rate becomes \(L^{-\frac{2(n-m)}{d}}\). This is strictly faster than the classical rate \(L^{-\frac{n-m}{d}}\). The key reason for this improvement is that deep neural networks form a highly nonlinear and discontinuous approximation class; see \cite{siegel2022optimal}.

Based on Theorem~\ref{thm:mian}, super-convergence can be obtained whenever the activation function satisfies either Conditions~\ref{condition1} and \ref{condition2} or Items~\ref{itm:phi1} and \ref{itm:phi2}. In general, Conditions~\ref{condition1} and \ref{condition2} are easier to verify and are satisfied by many popular activation functions, as discussed in Section~\ref{further}. For this reason, in Section~\ref{apprch} we first focus on applying these two conditions to establish super-convergence. On the other hand, some activation functions, such as \(\operatorname{ReLU}^k\) with \(k\ge 2\), do not satisfy Conditions~\ref{condition1} and \ref{condition2}; see Remark~\ref{rmk:condition1}. Even so, they still satisfy Items~\ref{itm:phi1} and \ref{itm:phi2}, and therefore Theorem~\ref{thm:mian} shows that they also achieve super-convergence. This case is discussed in Section~\ref{reluk}.
For convenience, Section~\ref{preliminaries} collects several lemmas that will be used throughout the paper.

\section{Further Discussions}\label{further}
In this section, we further elaborate on several aspects related to the main theorem, focusing on commonly used activation functions and relevant literature.

\subsection{Activation functions satisfying the two conditions}

We summarize the corresponding super-convergence results in \(W^{m,\infty}(\Omega)\) for different activation functions in Table~\ref{tab:summary}. Details on verifying whether these activation functions satisfy the Conditions \ref{condition1} and \ref{condition2} can be found in Appendix~\ref{activtion section}.
\begin{table}[h!]
\centering
\begin{tabular}{|c|c|c|}
\hline
\(W^{m,\infty}(\Omega)\) in~\eqref{eq:maintheorem} & Applicable activations & Reference \\
\hline
\(m = 0\) & All popular activations & \cite{zhang2024deep} \\
\hline
\(m \le \max\{1, n - 1\}\) & \makecell{ReLU, LeakyReLU, HardTanh, HardSigmoid,\\ ELU (\(\alpha \ne 1\)), SELU (\(\alpha \ne 1\))} & \makecell{\cite{yang2023nearly} \\ This work} \\
\hline
\(m \le \max\{2, n - 1\}\) & Softsign, CELU, ELU (\(\alpha = 1\)), SELU (\(\alpha = 1\)) & This work \\
\hline
\(0 \le m \le n-1\) & \makecell{Sigmoid, Tanh, \(\arctan\), dSiLU, SRS,\\ Softplus, SiLU (Swish), Mish, GELU} & This work \\
\hline
\end{tabular}
\caption{Summary of super-convergence results in \(W^{m,\infty}(\Omega)\) from~\eqref{eq:maintheorem} for \(f\in W^{n,\infty}(\Omega)\) (\(n \ge 1\)), corresponding to various activation functions and references.}
\label{tab:summary}
\end{table}

\subsection{Related Work}
There has been extensive research on the expressivity and approximation capabilities of neural networks, spanning different architectures, error measures, and activation functions. This work focuses on Sobolev-type error estimates for deep fully-connected neural networks with both commonly used and general activation functions.

A key distinction in the literature lies between the approximation properties of shallow (single-hidden-layer) and deep networks. From the 1990s onward, a series of foundational works~\cite{hornik1989multilayer, cybenko1989approximation, jones1992simple, leshno1993multilayer, ellacott1994aspects, pinkus1999approximation} established the universal approximation capabilities of shallow networks. Quantitative approximation rates—particularly in the $L^2$ norm—were subsequently investigated in works such as~\cite{barron1993universal, klusowski2018approximation, e2019priori, siegel2020approximation, e2021kolmogorov, siegel2022sharp, siegel2022high}, which also characterized the function classes that admit efficient approximation. These include the Barron space~\cite{barron1993universal,e2019priori}, motivated by integral representations, and generalized variation spaces~\cite{siegel2022sharp, siegel2022high}, which are particularly relevant for activations such as ${\rm ReLU}^k$.
For more general activation functions, a classical work~\cite{mhaskar1996neural} analyzed approximation rates in the $L^\infty$ norm for a broad class of activations, while~\cite{siegel2020approximation} established error bounds in Sobolev norms $H^m$ for all non-polynomial activation functions, highlighting their approximation capacity beyond the ReLU-type framework.

In contrast, approximation theory for deep neural networks is a more recent development. As discussed in the introduction, several works have studied both continuous and discontinuous approximations in Sobolev norms. Our focus here is on super-convergence phenomena for general activation functions, extending beyond ReLU and ${\rm ReLU}^k$. For instance,~\cite{lu2021deep, guhring2020error, muller2022error, de2021approximation, guhring2021approximation} investigate the approximation of smooth functions or functions in Sobolev spaces using deep networks, with error measured in $L^p(\Omega)$ or $W^{s,p}(\Omega)$. However, these results are often suboptimal or restricted to $L^p$ norms. For example,~\cite{lu2021deep} relies on Taylor expansions to approximate smooth functions, which do not directly generalize to Sobolev spaces, while~\cite{zhang2024deep} extends these ideas to more general activations.
In~\cite{guhring2020error}, the authors show that ReLU networks can approximate functions in $W^{1,p}(\Omega)$, though with convergence rates similar to classical methods such as finite element approximations. Further,~\cite{he2020relu, he2023deep} prove optimal approximation results in $W^{k,p}(\Omega)$ for $k = 0,1$ and $p < \infty$, based on the exact representation of finite element functions using ReLU and ReLU-ReLU$^2$ activations.

In this work, we establish super-convergence results for deep fully-connected networks of arbitrary depth and width, without relying on asymptotic arguments, and under general activation functions. The derived error estimates hold in any $W^{m,p}$ norm and represent, to the best of our knowledge, the most general and comprehensive approximation theory for deep networks with commonly-used and also general activations. These results have broad applicability, including rigorous generalization error bounds for PINNs and the Deep Ritz method.

\section{Preliminaries}\label{preliminaries}
		Let us summarize all basic notations used in the DNNs as follows:
		
		\textbf{1}. Matrices are denoted by bold uppercase letters. For example, $\vA\in\sR^{m\times n}$ is a real matrix of size $m\times n$ and $\vA^\T$ denotes the transpose of $\vA$.
		
		\textbf{2}. Vectors are denoted by bold lowercase letters. For example, $\vv\in\sR^n$ is a column vector of size $n$. Furthermore, denote $\vv(i)$ as the $i$-th elements of $\vv$.
		
		\textbf{3}. For a $d$-dimensional multi-index $\valpha=[\alpha_1,\alpha_2,\cdots\alpha_d]\in\sN^d$, we denote several related notations as follows: $(a)~ |\boldsymbol{\alpha}|=\left|\alpha_1\right|+\left|\alpha_2\right|+\cdots+\left|\alpha_d\right|$; $(b)~\boldsymbol{x}^\alpha=x_1^{\alpha_1} x_2^{\alpha_2} \cdots x_d^{\alpha_d},~ \boldsymbol{x}=\left[x_1, x_2, \cdots, x_d\right]^\T$; $ (c)~\boldsymbol{\alpha} !=\alpha_{1} ! \alpha_{2} ! \cdots \alpha_{d} !.$
		
		\textbf{4}. Assume $\vn\in\sN_+^n$, then $f(\vn)=\fO(g(\vn))$ means that there exists positive $C$ independent of $\vn,f,g$ such that $f(\vn)\le Cg(\vn)$ when all entries of $\vn$ go to $+\infty$.

		\textbf{5}. Define $L,N\in\sN_+$, $N_0=d$ and $N_{L+1}=1$, $N_i\in\sN_+$ for $i=1,2,\ldots,L$, then a $\sigma$ neural network $\phi$ with the width $N$ and depth $L$ can be described as follows:\[\boldsymbol{x}=\tilde{\boldsymbol{h}}_0 \stackrel{W_1, b_1}{\longrightarrow} \boldsymbol{h}_1 \stackrel{\sigma}{\longrightarrow} \tilde{\boldsymbol{h}}_1 \ldots \stackrel{W_L, b_L}{\longrightarrow} \boldsymbol{h}_L \stackrel{\sigma}{\longrightarrow} \tilde{\boldsymbol{h}}_L \stackrel{W_{L+1}, b_{L+1}}{\longrightarrow} \phi(\boldsymbol{x})=\boldsymbol{h}_{L+1},\] where $\vW_i\in\sR^{N_i\times N_{i-1}}$ and $\vb_i\in\sR^{N_i}$ are the weight matrix and the bias vector in the $i$-th linear transform in $\phi$, respectively, i.e., $\boldsymbol{h}_i:=\boldsymbol{W}_i \tilde{\boldsymbol{h}}_{i-1}+\boldsymbol{b}_i, ~\text { for } i=1, \ldots, L+1$ and $\tilde{\boldsymbol{h}}_i=\sigma\left(\boldsymbol{h}_i\right),\text{ for }i=1, \ldots, L.$ In this paper, an DNN with the width $N$ and depth $L$, means
		(a) The maximum width of this DNN for all hidden layers less than or equal to $N$.
		(b) The number of hidden layers of this DNN less than or equal to $L$.

        All the analysis in this paper is based on Sobolev spaces:\begin{definition}[Sobolev Spaces]
			Denote $\Omega$ as $[0,1]^d$, $D$ as the weak derivative of a single variable function and $D^{\valpha}=D^{\alpha_1}_1D^{\alpha_2}_2\ldots D^{\alpha_d}_d$ as the partial derivative where $\valpha=[\alpha_{1},\alpha_{2},\ldots,\alpha_d]^T$ and $D_i$ is the derivative in the $i$-th variable. Let $n\in\sN$ and $1\le p\le \infty$. Then we define Sobolev spaces\[W^{n, p}(\Omega):=\left\{f \in L^p(\Omega): D^{\valpha} f \in L^p(\Omega) \text { for all } \boldsymbol{\alpha} \in \sN^d \text { with }|\boldsymbol{\alpha}| \leq n\right\}\] with a norm \[\|f\|_{W^{n, p}(\Omega)}:=\left(\sum_{0 \leq|\alpha| \leq n}\left\|D^{\valpha} f\right\|_{L^p(\Omega)}^p\right)^{1 / p},\] if $p<\infty$, \[\|f\|_{W^{n, \infty}(\Omega)}:=\max_{0 \leq|\alpha| \leq n}\left\|D^{\valpha} f\right\|_{L^\infty(\Omega)}.\]
			Furthermore, for $\vf=(f_1,f_2,\ldots,f_d)$, $\vf\in W^{m,\infty}(\Omega,\sR^d)$ if and only if $ f_i\in W^{m,\infty}(\Omega)$ for each $i=1,2,\ldots,d$ and \[\|\vf\|_{W^{m,\infty}(\Omega,\sR^d)}:=\max_{i=1,\ldots,d}\{\|f_i\|_{W^{m,\infty}(\Omega)}\}.\]
		\end{definition}

\begin{lemma}\label{lem:Wm_inf_bounds}
Let $\Omega\subset\mathbb{R}^d$ be a bounded domain and $m\in\mathbb{N}_+$.  Then for all sufficiently smooth functions on~$\Omega$ the following hold:

\begin{enumerate}
  \item \emph{(Product estimate)}  
    If $f,g\in W^{m,\infty}(\Omega)$, then
    \[
      \|f\,g\|_{W^{m,\infty}(\Omega)}
      \;\le\;
      \sum_{k=0}^m \binom{m}{k}
        \|f\|_{W^{k,\infty}(\Omega)}\,
        \|g\|_{W^{m-k,\infty}(\Omega)}
      \;\le\;
      2^m\,\|f\|_{W^{m,\infty}(\Omega)}\,
           \|g\|_{W^{m,\infty}(\Omega)}.
    \]

  \item \emph{(Composition difference)}  
    Let $F\in W^{m+1,\infty}(I)$ on an interval $I\supset g(\Omega)\cup h(\Omega)$ and $g,h\in W^{m,\infty}(\Omega)$.  Then
    \[
      \|F\circ g - F\circ h\|_{W^{m,\infty}(\Omega)}
      \;\le\;
     C_{m}\bigl(\|g\|_{W^{m,\infty}},\,\|h\|_{W^{m,\infty}}\bigr)\,
      \|F\|_{W^{m+1,\infty}(I)}\,
      \|g - h\|_{W^{m,\infty}(\Omega)},
    \]
    where $C_{m}(\|g\|_{W^{m,\infty}},\|h\|_{W^{m,\infty}})$ depends only on $m$, $\|g\|_{W^{m,\infty}(\Omega)}$, and $\|h\|_{W^{m,\infty}(\Omega)}$.

  \item \emph{(Post‐composition difference)}  
    If $f_1,f_2\in W^{m,\infty}(I)$ and $g\in W^{m,\infty}(\Omega)$ with $g(\Omega)\subset I$, then
    \[
      \|f_1\circ g - f_2\circ g\|_{W^{m,\infty}(\Omega)}
      \;\le\;
      C_{m}(\,\|g\|_{W^{m,\infty}(\Omega)})\,
      \|f_1 - f_2\|_{W^{m,\infty}(I)},
    \]where $C_{m}(\|g\|_{W^{m,\infty}})$ depends only on $m$ and $\|g\|_{W^{m,\infty}(\Omega)}$.
\end{enumerate}
\end{lemma}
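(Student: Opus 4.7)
The plan is to treat the three parts as successive applications of standard multivariable differentiation identities, taking $L^\infty$ norms at every step. The key tools are the Leibniz rule, the Faà di Bruno formula, and the fundamental theorem of calculus.

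For Part (1), for each multi-index $\alpha$ with $|\alpha|\le m$, the Leibniz rule gives
$$D^\alpha(fg)=\sum_{\beta\le\alpha}\binom{\alpha}{\beta}D^\beta f\cdot D^{\alpha-\beta}g.$$
Taking $L^\infty$ norms, bounding $\|D^\beta f\|_{L^\infty}\le\|f\|_{W^{|\beta|,\infty}}$ and similarly for $g$, then grouping terms by $|\beta|=k$ via the identity $\sum_{|\beta|=k,\,\beta\le\alpha}\binom{\alpha}{\beta}=\binom{|\alpha|}{k}$, I obtain the stated sum with binomial coefficients $\binom{|\alpha|}{k}\le\binom{m}{k}$ after monotonicity of the Sobolev norms. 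The coarser bound $2^m\|f\|_{W^{m,\infty}}\|g\|_{W^{m,\infty}}$ then follows from $\sum_{k=0}^m\binom{m}{k}=2^m$.

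For Part (3), I apply the Faà di Bruno formula to $F:=f_1-f_2$ and $g$: for $|\alpha|\le m$,
$$D^\alpha(F\circ g)=\sum_{j=1}^{|\alpha|}F^{(j)}(g)\,\mathcal{P}_{\alpha,j}\bigl(\{D^\gamma g\}_{1\le|\gamma|\le|\alpha|}\bigr),$$
where $\mathcal{P}_{\alpha,j}$ is a universal polynomial in the partial derivatives of $g$ of orders between $1$ and $|\alpha|$. Taking $L^\infty$ norms (using $g(\Omega)\subset I$ to bound $|F^{(j)}(g(x))|\le\|F\|_{W^{j,\infty}(I)}$) yields
$$\|D^\alpha(F\circ g)\|_{L^\infty}\le \|F\|_{W^{m,\infty}(I)}\cdot Q_m\bigl(\|g\|_{W^{m,\infty}(\Omega)}\bigr)$$
for a polynomial $Q_m$ whose coefficients depend only on $m$. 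Setting $C_m:=Q_m$ gives the claim.

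For Part (2), I combine Parts (1) and (3) via the fundamental theorem of calculus. Since $I$ is an interval and $g(\Omega)\cup h(\Omega)\subset I$, the convex combination $\eta_t(x):=(1-t)h(x)+tg(x)$ lies in $I$ for every $t\in[0,1]$ and $x\in\Omega$, and
$$F\circ g-F\circ h=\int_0^1 F'(\eta_t)\,(g-h)\,dt.$$
Taking the $W^{m,\infty}$-norm and interchanging with the integral gives
$$\|F\circ g-F\circ h\|_{W^{m,\infty}}\le\sup_{t\in[0,1]}\bigl\|(F'\circ\eta_t)\cdot(g-h)\bigr\|_{W^{m,\infty}}.$$
Applying Part (1) to the product and then Part (3) to $F'\circ\eta_t$ (legitimate since $F'\in W^{m,\infty}(I)$ because $F\in W^{m+1,\infty}(I)$, and $\|\eta_t\|_{W^{m,\infty}}\le\|g\|_{W^{m,\infty}}+\|h\|_{W^{m,\infty}}$ uniformly in $t$) produces exactly the claimed bound with a constant of the form $2^m Q_m(\|g\|_{W^{m,\infty}}+\|h\|_{W^{m,\infty}})$.

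The main obstacle is not algebraic but technical: rigorously justifying the chain rule at the level of weak derivatives rather than classical derivatives. In the $W^{m,\infty}$ setting all partial derivatives up to order $m$ are bounded and the functions are locally Lipschitz, so a mollification argument—regularizing $F$, $g$, $h$ by convolution, applying the classical Leibniz and Faà di Bruno identities, and passing to the limit using uniform $W^{m,\infty}$-bounds and dominated convergence—closes this gap. This is standard and I would refer to a textbook treatment rather than reprove it.
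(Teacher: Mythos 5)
Your proposal is correct, but it reaches the composition-difference estimate (part 2) by a genuinely different route than the paper. The paper proves (ii) by induction on $m$: it writes $\partial_{x_i}(F\circ g-F\circ h)=(F'\circ g-F'\circ h)\,\partial_{x_i}g+F'\circ h\,(\partial_{x_i}g-\partial_{x_i}h)$ and invokes the order-$(m-1)$ estimate for the first term together with the product bound, so part (ii) is self-contained and part (iii) is dispatched in one line. You instead use the Hadamard/FTC representation $F\circ g-F\circ h=\int_0^1 F'(\eta_t)(g-h)\,dt$ with $\eta_t=(1-t)h+tg$, which reduces (2) entirely to (1) and (3); this is legitimate precisely because $I$ is an interval, so $\eta_t(\Omega)\subset I$, and it cleanly explains why $F$ needs one more derivative than the target norm. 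The cost is that you must carry a full Fa\`a di Bruno bound $\|f\circ g\|_{W^{m,\infty}}\le Q_m(\|g\|_{W^{m,\infty}})\|f\|_{W^{m,\infty}(I)}$ for part (3), but this is arguably a gain: the paper's proof of (iii) (``apply the product estimate with a constant factor'') is under-justified as written, since a composition is not a product and a chain-rule argument of exactly the kind you spell out is what is actually needed. Your closing remark about justifying the identities for weak derivatives by mollification is the right disclaimer; the only minor point worth making explicit is that exchanging $D^\alpha$ with $\int_0^1\cdot\,dt$ is justified by testing against smooth compactly supported functions and Fubini, using the uniform-in-$t$ bounds you already have.
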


\begin{proof}
(i) Follows directly from Leibniz’s rule and the binomial theorem.

(ii) We proceed by induction on $m$.

For $m=0$, the Lipschitz bound gives
\[
  |F(g(\vx)) - F(h(\vx))|
  \;\le\;
  \|F\|_{W^{1,\infty}(I)}\,|g(\vx) - h(\vx)|,
\]
so the claim holds for $m=0$.

Assume it holds for all orders up to $m-1$.  Then for general $m$,
\[
  \|F\circ g - F\circ h\|_{W^{m,\infty}}
  \;\le\;
  \|F\circ g - F\circ h\|_{L^{\infty}}
  \;+\;\sum_{i=1}^d
    \bigl\|\partial_{x_i}(F\circ g - F\circ h)\bigr\|_{W^{m-1,\infty}}.
\]
The $L^\infty$‐term is bounded by $\|F\|_{W^{1,\infty}}\|g-h\|_{L^\infty}$.  For each derivative,
\[
  \partial_{x_i}(F\circ g - F\circ h)
  = F'\circ g\,\partial_{x_i}g
    -F'\circ h\,\partial_{x_i}h
  = (F'\circ g - F'\circ h)\,\partial_{x_i}g
    +F'\circ h\,(\partial_{x_i}g-\partial_{x_i}h).
\]
By the product estimate (i) and the induction hypothesis,
\begin{align*}
  &\|(F'\circ g - F'\circ h)\,\partial_{x_i}g\|_{W^{m-1,\infty}}\\
  \le&
    2^{m-1}\,\|F'\circ g - F'\circ h\|_{W^{m-1,\infty}}\,
    \|\partial_{x_i}g\|_{W^{m-1,\infty}}\\
 \le&
    2^{m-1}C_{m-1}\bigl(\|g\|_{W^{m-1,\infty}},\|h\|_{W^{m-1,\infty}}\bigr)
    \|F'\|_{W^{m,\infty}(I)}
    \|g-h\|_{W^{m-1,\infty}}
    \|\partial_{x_i}g\|_{W^{m-1,\infty}}\\
  \le&
    C^*_m(\|g\|_{W^{m,\infty}},\|h\|_{W^{m-1,\infty}})
    \|F\|_{W^{m+1,\infty}(I)}
    \|g-h\|_{W^{m-1,\infty}}.
\end{align*}
Similarly, by (i),
\begin{align*}
  &\|F'\circ h\,(\partial_{x_i}g-\partial_{x_i}h)\|_{W^{m-1,\infty}}\le
    2^{m-1}\,\|F'\circ h\|_{W^{m-1,\infty}}
    \|\partial_{x_i}g-\partial_{x_i}h\|_{W^{m-1,\infty}}\\
  \le&
    C^{**}_m(\|h\|_{W^{m-1,\infty}})
    \|F\|_{W^{m+1,\infty}(I)}
    \|g-h\|_{W^{m,\infty}}.
\end{align*}
Combining these bounds yields the desired estimate for order~$m$.

(iii) Follows immediately by writing
\[
  f_1\circ g - f_2\circ g
  = (f_1-f_2)\circ g
\]
and applying the product estimate with a constant factor.
\end{proof}
The next lemma shows that neural networks with activation function \(\sigma\) satisfying Condition~\ref{condition2} can approximate \(x^2\) in the \(W^{m,\infty}\)-norm. The basic construction is related to Step~3 in the proof of Theorem~1 of \cite{leshno1993multilayer}. Our setting differs in two important aspects. First, since we only need approximation on a bounded interval, it is sufficient to assume local regularity of \(\sigma\); global smoothness on the whole real line is not required. Second, for the later quantitative analysis, we need a fixed-size neural network together with explicit \(C^m\)-error bounds, including control of higher-order derivatives. For this reason, we write the network in closed form and estimate the approximation error and its derivatives explicitly.

\begin{lemma}\label{sqr11}
For any $\varepsilon>0$, integer $m\ge 0$, and any bounded interval $[-M,M]$, there exists a neural network
\(
  \phi(x)
\)
with the depth 1 and width $4$ with the activation function $\sigma$ satisfying Condition \ref{condition2}, such that
\[
  \bigl\|\,x^2 - \phi(x)\bigr\|_{C^m([-M,M])}
  \;\le\;\varepsilon.
\]
\end{lemma}

\begin{proof}
Invoking Condition~\ref{condition2}, without loss of generality we may assume that the distinguished point is \(a=0\). Then there exists \(\delta_*>0\) such that
\[
\sigma\in C^\infty((-\delta_*,\delta_*))
\qquad\text{and}\qquad
\sigma''(0)\neq 0.
\]
Define
\[
a_i:=\frac{\sigma^{(i)}(0)}{i!}, \qquad i=0,1,2.
\]
Then, for every \(x\in[-\delta,\delta]\subset(-\delta_*,\delta_*)\), Taylor's formula with integral remainder gives
\[
\sigma(x)=\sum_{i=0}^{2}a_i x^i+r(x)
:=\sum_{i=0}^{2}a_i x^i+\frac{1}{2}\int_0^x \sigma^{(3)}(t)(x-t)^2\,\mathrm{d}t,
\]
where
\[
a_2=\frac{\sigma''(0)}{2}\neq 0.
\]

Choose \(\lambda\in\mathbb{R}\setminus\{0,\pm1\}\), and define
\[
f_1(x):=\sigma(\lambda x)-\sigma(x).
\]
Then the constant term is canceled, and
\[
f_1(x)=a_1(\lambda-1)x+a_2(\lambda^2-1)x^2+\bigl(r(\lambda x)-r(x)\bigr).
\]
Next define
\[
f_2(x):=f_1(x)-\lambda f_1(x/\lambda).
\]
Then the linear term is also canceled, and
\[
f_2(x)=a_2(\lambda^2-1)\bigl(1-\lambda^{-1}\bigr)x^2+h_2(x),
\]
where
\[
h_2(x):=\bigl(r(\lambda x)-r(x)\bigr)-\lambda\bigl(r(x)-r(x/\lambda)\bigr).
\]
Moreover, \(h_2\) can be written as
\[
h_2(x)=\sum_{j=1}^{4} c_{1,j}\int_{0}^{c_{2,j}x}\sigma^{(3)}(t)\bigl(c_{2,j}x-t\bigr)^2\,\mathrm{d}t,
\]
for all \(|x|\le \delta/c_*\), where
\[
c_*:=\max_{1\le j\le 4}|c_{2,j}|=\max\{1,|\lambda|,|\lambda|^{-1}\},
\]
and the constants \(c_{1,j}\) and \(c_{2,j}\) depend only on \(\lambda\). Since \(\lambda\neq 0,\pm1\) and \(a_2\neq 0\), the coefficient of \(x^2\) is nonzero. Denote
\[
a_{2,2}:=a_2(\lambda^2-1)\bigl(1-\lambda^{-1}\bigr).
\]

Then, for any \(K>0\),
\[
x^2=\frac{K^2f_2(x/K)}{a_{2,2}}-\frac{K^2h_2(x/K)}{a_{2,2}}.
\]
Define
\[
\phi(x):=\frac{K^2f_2(x/K)}{a_{2,2}}.
\]
Then \(\phi\) is a depth-\(1\), width-\(4\) neural network with activation \(\sigma\).

The approximation error is
\begin{align}
R(x):=x^2-\phi(x)
&=-\frac{K^2h_2(x/K)}{a_{2,2}} \notag\\
&=\sum_{j=1}^4 c_{1,j}\frac{K^2}{a_{2,2}}
\int_0^{c_{2,j}x/K}\sigma^{(3)}(t)\Bigl(c_{2,j}\frac{x}{K}-t\Bigr)^2\,\mathrm{d}t \notag\\
&=\sum_{j=1}^4 c_{1,j}\frac{K^2}{a_{2,2}}
\int_0^x \sigma^{(3)}\Bigl(c_{2,j}\frac{s}{K}\Bigr)\Bigl(c_{2,j}\frac{1}{K}\Bigr)^3 (x-s)^2\,\mathrm{d}s \notag\\
&=\frac{1}{K}\sum_{j=1}^4 c_{1,j}\int_0^x
\sigma^{(3)}\Bigl(c_{2,j}\frac{s}{K}\Bigr)\frac{c_{2,j}^3}{a_{2,2}}(x-s)^2\,\mathrm{d}s.
\label{eq:Rerror-sqr}
\end{align}

Now choose
\[
K=\frac{c_*M}{\delta}.
\]
Then for every \(x\in[-M,M]\), we have \(|x/K|\le \delta/c_*\), so the above representation is valid. Moreover,
\[
\left|\frac{c_{2,j}s}{K}\right|
\le \frac{|c_{2,j}|M}{K}
\le \delta
<\delta_*,
\qquad s\in[-M,M],\quad j=1,2,3,4.
\]

For \(0\le p\le 2\), differentiating \eqref{eq:Rerror-sqr} under the integral sign yields
\[
\bigl|D^pR(x)\bigr|
=
\left|
\frac{1}{K}\sum_{j=1}^{4}c_{1,j}
\int_0^x
\sigma^{(3)}\!\Bigl(\tfrac{c_{2,j}s}{K}\Bigr)
\frac{p!\,c_{2,j}^3}{2!\,a_{2,2}}
(x-s)^{2-p}\,\mathrm{d}s
\right|.
\]
Hence
\[
\bigl|D^pR(x)\bigr|
\le
\frac{C}{K}\,\|\sigma\|_{C^3((-\delta_*,\delta_*))}\,M^{3-p}
=
\mathcal{O}(\delta),
\qquad 0\le p\le 2.
\]

If \(m\ge 3\), then for \(3\le p\le m\),
\[
\bigl|D^pR(x)\bigr|
=
\left|
\frac{1}{K}
\sum_{j=1}^{4}c_{1,j}
\Bigl(\tfrac{c_{2,j}}{K}\Bigr)^{p-3}
\sigma^{(p)}\!\Bigl(\tfrac{c_{2,j}x}{K}\Bigr)
\frac{p!\,c_{2,j}^{3}}{2!\,a_{2,2}}
\right|
\le
\frac{C}{K^{p-2}}
\max_{3\le q\le m}\|\sigma^{(q)}\|_{C((-\delta_*,\delta_*))}
=
\mathcal{O}(\delta^{p-2}),
\]
again because \(|c_{2,j}x/K|<\delta_*\).

Therefore, for any $\varepsilon>0$, by choosing \(\delta>0\) sufficiently small, we obtain
\[
\|\phi-x^2\|_{C^m([-M,M])}\le \varepsilon.
\]
Consequently, \(\phi\) is a depth-\(1\), width-\(4\) \(\sigma\)-network that approximates \(x^2\) within accuracy \(\varepsilon\) in the \(C^m\)-norm.
\end{proof}

Based on Lemma~\ref{sqr11}, we can establish the following lemmas step by step. We move the detailed proofs to Appendix~\ref{app lem}.

\begin{lemma}\label{time}
    For any $\varepsilon>0$, $m\in\sN_+$ and bounded domain $[-M,M]^2$, there is a neural network $\phi$ with depth 1 and width $12$ with the activation $\sigma$ satisfying Condition \ref{condition2}, such that \begin{equation}
        \|xy-\phi(x,y)\|_{C^m([-M,M]^2)}\le \varepsilon.\notag
    \end{equation}
\end{lemma}
\begin{lemma}\label{Id}
For any $\varepsilon>0$, integer $m\ge 0$, and any bounded interval $[-M,M]$, there exists a neural network 
\(
  \phi(x)
\) with depth 1 and width $8$
with the activation $\sigma$ satisfying Condition \ref{condition2}, such that
\[
  \bigl\|\,x - \phi(x)\bigr\|_{C^m([-M,M])}
  \;\le\;\varepsilon.
\]
\end{lemma}

Next, we prove that \(\sigma\)-neural networks can approximate \(x^p\), \(\prod_{i=1}^d x_i\), and \(\vx^\alpha\) in the \(W^{m,\infty}\)-norm. Although analogous \(C^m\)-approximation results could also be derived, this would require a corresponding \(C^m\)-version of Lemma~\ref{lem:Wm_inf_bounds}. Since such estimates are not needed for the subsequent proofs in the paper, we only present the \(W^{m,\infty}\)-results here.

\begin{lemma}\label{xm+1}
    For any $\varepsilon>0$, $m\in\sN_+$, $p\ge 2$ and bounded domain $[-M,M]$, there is a neural network $\phi$ with depth $p-1$ and width $20$ with the activation $\sigma$ satisfying Condition \ref{condition2}, such that \begin{equation}
        \|x^p-\phi(x)\|_{W^{m,\infty}([-M,M])}\le \varepsilon.\notag
    \end{equation} 
\end{lemma}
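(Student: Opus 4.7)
The plan is to argue by induction on $p$, building $x^p$ from $x^{p-1}$ via a single additional multiplication against a carried-forward copy of $x$. For $p=2$ the statement is exactly Lemma~\ref{sqr11}, so only the inductive step is at issue.

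\textbf{Architecture.} For $p\ge 3$ I would run two parallel sub-channels through every hidden layer of a single $\sigma$-network. The \emph{identity channel}, of width $8$, propagates a depth-$1$ approximation of $x$ across each $\sigma$-layer using Lemma~\ref{Id}; call its output at layer $k$ the signal $\psi_k\approx x$. The \emph{power channel} at layer $1$ implements Lemma~\ref{sqr11} (width $4$) to produce $\phi_1\approx x^{2}$, and at each subsequent layer $k=2,\dots,p-2$ implements Lemma~\ref{time} (width $12$) on the previous-layer pair $(\psi_{k-1},\phi_{k-1})$ to produce $\phi_k\approx \psi_{k-1}\phi_{k-1}\approx x^{k+1}$. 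At the final hidden layer $k=p-1$ only the multiplication is needed (width $12$) and the identity channel can be dropped. The per-layer width budget is therefore at most $12+8=20$, and the network has exactly $p-1$ hidden layers, matching the claim.

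\textbf{Error propagation.} Fix a small tolerance $\varepsilon_0>0$ (to be chosen at the end) and make each elementary subnetwork from Lemmas~\ref{sqr11}, \ref{Id}, and~\ref{time} achieve error at most $\varepsilon_0$ in $W^{m,\infty}$ on a fixed bounded interval or rectangle. A routine induction on $k$ then yields $\|\psi_k-x\|_{W^{m,\infty}([-M,M])}+\|\phi_k-x^{k+1}\|_{W^{m,\infty}([-M,M])}\le C_k\varepsilon_0$ with $C_k$ depending only on $k,m,M$. The key step uses three ingredients: the product estimate Lemma~\ref{lem:Wm_inf_bounds}(i) to bound $\|\psi_{k-1}\phi_{k-1}-x\cdot x^{k}\|_{W^{m,\infty}}$ by a constant multiple of $\varepsilon_0$; the post-composition estimate Lemma~\ref{lem:Wm_inf_bounds}(iii), used in bivariate form, to bound the additional error $\|\Phi_{\mathrm{mult}}(\psi_{k-1},\phi_{k-1})-\psi_{k-1}\phi_{k-1}\|_{W^{m,\infty}}$ incurred by replacing exact multiplication with the Lemma~\ref{time} surrogate; and the analogous one-dimensional estimate for the identity channel. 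Iterating over the fixed number $p-1$ of layers yields a final error of the form $C(p,m,M)\,\varepsilon_0$, so choosing $\varepsilon_0=\varepsilon/C(p,m,M)$ closes the induction.

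\textbf{Main obstacle.} The principal bookkeeping point, which I would handle up front, is to fix the bounded intervals and rectangles on which each subnetwork must attain tolerance $\varepsilon_0$ purely in terms of $p,m,M$, independently of $\varepsilon_0$ itself. Equivalently one must show that the intermediate signals $\psi_k,\phi_k$ have $W^{m,\infty}$-norms bounded uniformly in $\varepsilon_0$, for otherwise the constants produced by Lemma~\ref{lem:Wm_inf_bounds} would grow with the tolerance and the induction would not close. Since $\|\psi_k-x\|_{W^{m,\infty}}$ and $\|\phi_k-x^{k+1}\|_{W^{m,\infty}}$ are each at most $O(\varepsilon_0)$ and $p$ is fixed, triangle inequalities keep these norms within $\|x\|_{W^{m,\infty}([-M,M])}+1$ and $\|x^{k+1}\|_{W^{m,\infty}([-M,M])}+1$ once $\varepsilon_0$ is small enough; after this observation the widths $4,8,12$ and the aggregate budget $20$ are truly independent of $\varepsilon$, and the remainder of the argument is a routine propagation of small errors through a fixed-depth network.
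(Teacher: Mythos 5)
Your proposal is correct and follows essentially the same route as the paper's proof: an identity channel built from Lemma~\ref{Id} run in parallel with a power channel built from Lemmas~\ref{sqr11} and~\ref{time}, with errors propagated layer by layer via Lemma~\ref{lem:Wm_inf_bounds} and the tolerance $\varepsilon_0$ chosen at the end as $\varepsilon/C(p,m,M)$. Your explicit attention to the uniform $W^{m,\infty}$-boundedness of the intermediate signals (so the composition constants do not depend on $\varepsilon_0$) is a point the paper treats only implicitly, but the argument is the same.
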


\begin{lemma}\label{times}
    For any $\varepsilon>0$, $d,m\in\sN_+$ and bounded domain $[-M,M]^d$, there is a neural network $\phi$ with depth $d-1$ and width $8d-4$ with the activation $\sigma$ satisfying Condition \ref{condition2}, such that \begin{equation}
        \|x_1x_2\ldots x_d-\phi(\vx)\|_{W^{m,\infty}([-M,M]^d)}\le \varepsilon.\notag
    \end{equation}
\end{lemma}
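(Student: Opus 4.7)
Proof proposal: The plan is to mimic the iterative architecture used in Lemma \ref{xm+1}, replacing the repeated squaring/identity pattern by a left-to-right accumulation of the product. Concretely, I would build a depth-$(d-1)$ network in which layer $k$ (for $k=1,\dots,d-1$) produces an approximation $y_k$ of $x_1 x_2 \cdots x_{k+1}$ as follows: feed the previous-layer approximation $y_{k-1}$ of $x_1 \cdots x_k$ and the identity-preserved copy of $x_{k+1}$ into the width-12 multiplication network $\phi_3$ from Lemma \ref{time}; in parallel, pass the still-unused coordinates $x_{k+2},\dots,x_d$ through the width-8 identity network of Lemma \ref{Id} so that they are available at the next layer.

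Width and depth accounting: At layer $1$ we use one multiplication block (width $12$) together with $d-2$ identity blocks (width $8$ each) to preserve $x_3,\dots,x_d$, giving total width $12+8(d-2)=8d-4$. Subsequent layers use fewer identity blocks and hence strictly smaller width, so the overall width is $8d-4$. The depth is exactly $d-1$ since one multiplication block is consumed per layer.

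Error analysis: Choose a common accuracy $\varepsilon_0>0$ (to be fixed at the end) for each of the elementary networks $\phi_2$ and $\phi_3$. By induction on $k$ I would show
\[
 \|y_k - x_1\cdots x_{k+1}\|_{W^{m,\infty}([-M,M]^d)} \;\le\; C_k\,\varepsilon_0,
\]
where $C_k$ depends only on $k,m,M$. The inductive step splits the error as (a) the current layer's approximation error at the exact input, bounded directly by $\varepsilon_0$, and (b) the propagated error from feeding perturbed inputs into $\phi_2$ and $\phi_3$, bounded via the composition-difference estimate (ii) of Lemma \ref{lem:Wm_inf_bounds}; the product estimate (i) then controls the $W^{m,\infty}$ norms of the intermediate quantities that appear as prefactors. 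Taking $\varepsilon_0 = \varepsilon/C_{d-1}$ finishes the proof.

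Main obstacle: The principal bookkeeping difficulty is showing that the intermediate approximations $y_k$ remain in a fixed bounded set, with $W^{m,\infty}$ norms uniformly controlled independently of $\varepsilon_0$ (say, once $\varepsilon_0\le 1$). This is needed so that at every subsequent layer the networks $\phi_2,\phi_3$ from Lemmas \ref{time} and \ref{Id} are applied on a bounded domain where their approximation guarantees hold, and so that the constants $C_m(\|g\|_{W^{m,\infty}},\|h\|_{W^{m,\infty}})$ arising from Lemma \ref{lem:Wm_inf_bounds}(ii) depend only on $d,m,M$. The verification is straightforward once one notes inductively that $\|y_k\|_{W^{m,\infty}}\le \|x_1\cdots x_{k+1}\|_{W^{m,\infty}} + 1 \le M^{k+1}+1$, but it is the step that must be handled carefully; after that, the recursion closes exactly as in the proof of Lemma \ref{xm+1}.
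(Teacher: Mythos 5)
Your proposal is correct and matches the paper's construction: the paper likewise builds a depth-$(d-1)$ network by iteratively combining the multiplication blocks of Lemma~\ref{time} with the identity blocks of Lemma~\ref{Id} (yielding width $12+8(d-2)=8d-4$), and defers the error analysis to the same inductive argument as in Lemma~\ref{xm+1}. Your filled-in error propagation via Lemma~\ref{lem:Wm_inf_bounds} and the boundedness of intermediate outputs is exactly the omitted bookkeeping.
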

\begin{proof}
The proof is illustrated by the network architecture in Fig.~\ref{fig:xprod}.

\begin{figure}[ht]
    \centering
    \includegraphics[width=0.77\linewidth]{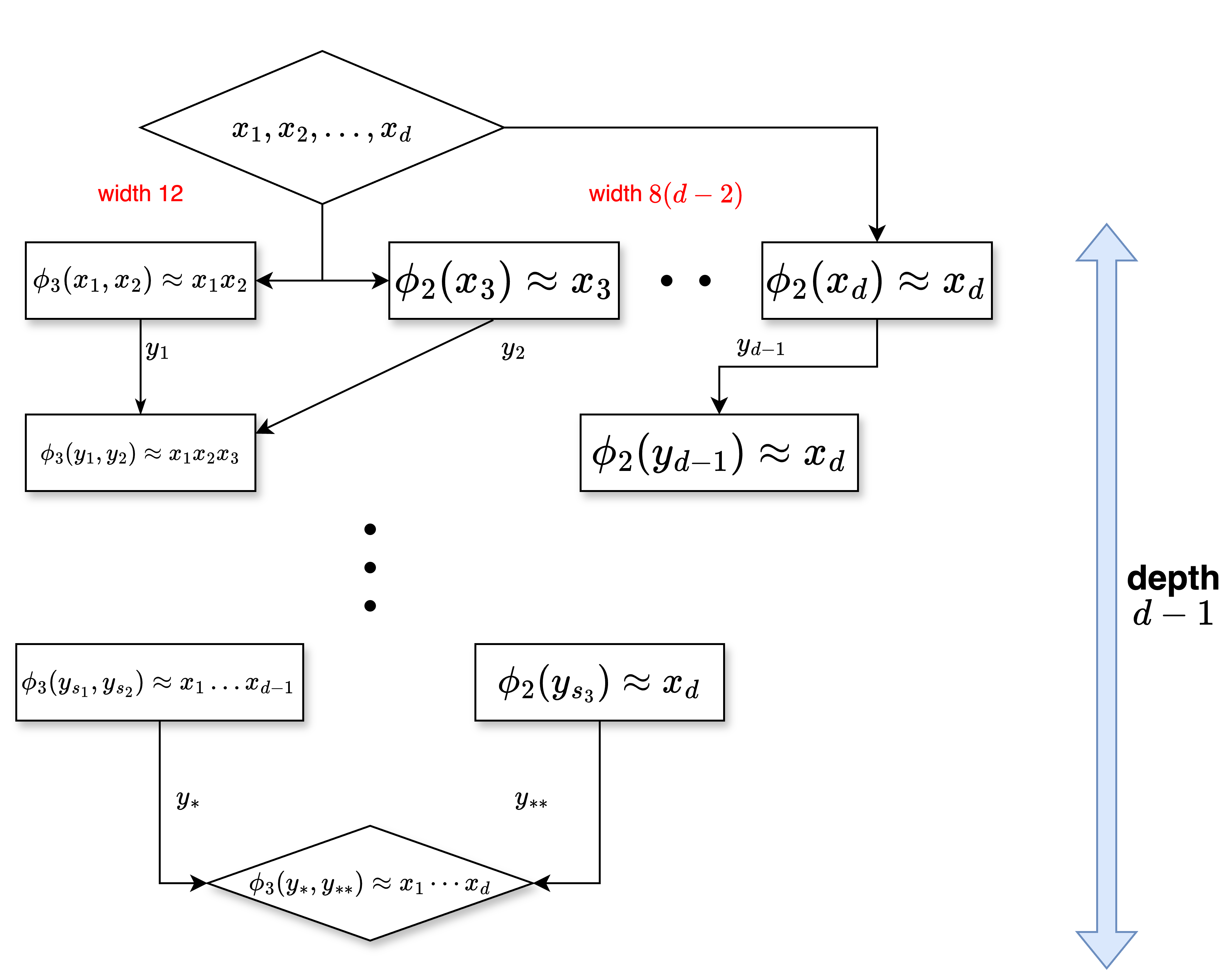}
    \caption{Network architecture used to approximate \(x_1x_2\ldots x_d\).}
    \label{fig:xprod}
\end{figure}

The detailed proof is given in Appendix~\ref{app lem}.
\end{proof}

\begin{remark}
When the dimension is large, the result in Lemma~\ref{times} can be improved to logarithmic depth in \(d\). The corresponding network architecture is illustrated in Fig.~\ref{fig:newxprod}.

\begin{figure}[ht]
    \centering
    \includegraphics[width=0.77\linewidth]{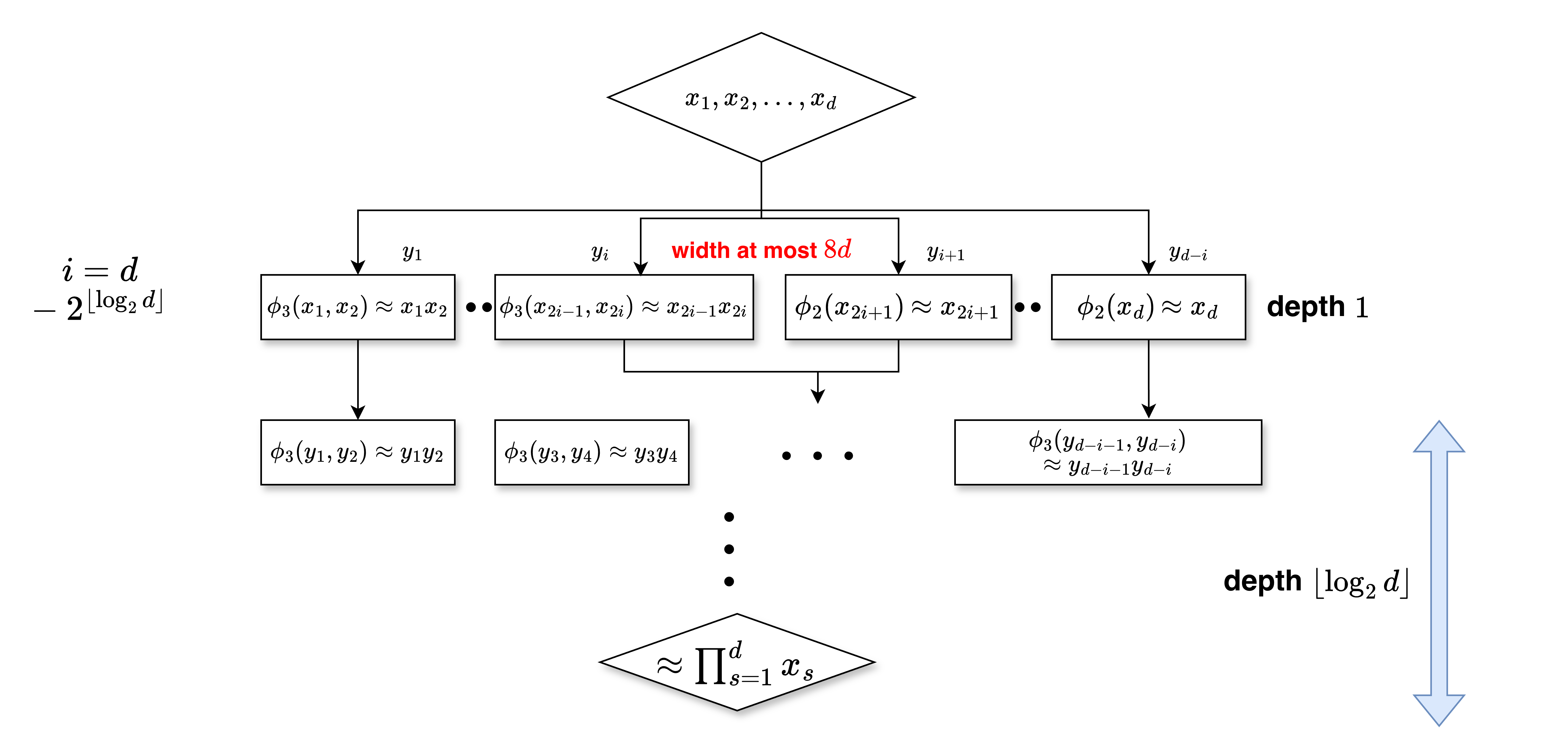}
    \caption{An alternative network architecture used to approximate \(x_1x_2\ldots x_d\).}
    \label{fig:newxprod}
\end{figure}

Set
\[
k:=2^{\lfloor \log_{2} d \rfloor},
\qquad
i:=d-k.
\]
Then \(0\le i<k\), and \(k\) is the largest power of two not exceeding \(d\).

In the first hidden layer, we reduce the list \((x_1,\dots,x_d)\) of length \(d\) to a list of length \(k\) as follows: we perform \(i\) adjacent multiplications
\[
(x_1,x_2),\ (x_3,x_4),\ \dots,\ (x_{2i-1},x_{2i}),
\]
each realized by one bilinear block, and pass the remaining inputs unchanged through identity blocks. Since each multiplication reduces the list length by one, after these \(i\) multiplications exactly \(k\) outputs remain. This layer uses
\[
12i+8(d-2i)\le 8d
\]
neurons.
\end{remark}

Similarly, we have that \begin{lemma}\label{timess} For any $\varepsilon>0$, $d,m,n\in\sN_+$, $|\valpha|\le n$ and bounded domain $[-M,M]^d$, there is a neural network $\phi$ with depth $n-1$ and width $8n-4$ with the activation $\sigma$ satisfying Condition \ref{condition2}, such that \begin{equation} \|\vx^{\valpha}-\phi(\vx)\|_{W^{m,\infty}([-M,M]^d)}\le \varepsilon.\notag \end{equation} \end{lemma}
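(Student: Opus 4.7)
The plan is to reduce the monomial $\vx^{\valpha}$ to a straight product of at most $n$ scalar factors and then invoke the iterative construction from Lemma~\ref{times} directly. Writing $k:=|\valpha|\le n$, I would express
\[
\vx^{\valpha}
=\underbrace{x_1\cdots x_1}_{\alpha_1\text{ times}}\;\underbrace{x_2\cdots x_2}_{\alpha_2\text{ times}}\;\cdots\;\underbrace{x_d\cdots x_d}_{\alpha_d\text{ times}}
\]
as a product of $k$ scalar quantities, some of which coincide. Because all $d$ input coordinates are already available at the input layer, the needed duplication of each $x_i$ is handled for free by the first affine map: one simply copies the relevant coordinate into several output neurons of the first linear transform, with no auxiliary identity sub-network required. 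This reduces the task to approximating a $k$-fold product $y_1 y_2\cdots y_k$ on $[-M,M]^k$ in the $W^{m,\infty}$-norm, which is precisely what Lemma~\ref{times} (with $k$ in place of $d$) delivers using a $\sigma$-network of depth $k-1$ and width $8k-4$.

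Since $k=|\valpha|\le n$, the resulting network has depth at most $n-1$ and width at most $8n-4$, matching the stated bounds. The error analysis is the same iterative chaining that was used in Lemma~\ref{xm+1} and Lemma~\ref{times}: at each of the $k-1$ multiplication layers one applies the product estimate and the composition-difference estimate from Lemma~\ref{lem:Wm_inf_bounds} to propagate the per-layer tolerance $\varepsilon_0$, picking up an accumulated multiplicative constant $C_{k-2}$ that depends only on the fixed sub-networks $\phi_1,\phi_2,\phi_3$ from Lemmas~\ref{sqr11}, \ref{Id}, \ref{time} and on the $W^{m,\infty}$-norms of the intermediate signals (which stay bounded on $[-M,M]$). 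Choosing $\varepsilon_0=\varepsilon/C_{k-2}$ then yields the desired bound.

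Since every ingredient has already been established, I do not expect a genuine obstacle here; the proof is essentially bookkeeping that matches the monomial structure to the product architecture of Lemma~\ref{times}. The one subtle point worth flagging is that the intermediate quantities produced by the iterative multiplication (approximations of various sub-products $x_{i_1}\cdots x_{i_j}$) must be shown to remain uniformly bounded in $W^{m,\infty}$ independently of $\varepsilon_0$ in order for the constants $C_{k-2}$ to stay under control; this follows from the fact that the exact sub-products lie in $[-M^k, M^k]$ with bounded derivatives, and the approximation tolerance $\varepsilon_0<1$ only perturbs these norms by a bounded amount. With that observation in place the proof reduces to the same template as Lemma~\ref{xm+1}, so I would simply state the construction and refer to that earlier argument, as the authors already do for Lemma~\ref{times}.
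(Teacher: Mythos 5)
Your proposal is correct and matches the paper's intended argument: the paper omits the proof of Lemma~\ref{timess}, simply saying ``Similarly'' after Lemma~\ref{times}, i.e.\ it also reduces $\vx^{\valpha}$ to a $|\valpha|$-fold product (with coordinates duplicated by the first affine map) and reuses the iterative multiplication architecture of Lemmas~\ref{time} and~\ref{Id} with the error propagation of Lemma~\ref{xm+1}. Your explicit remark about keeping the intermediate $W^{m,\infty}$-norms uniformly bounded is exactly the bookkeeping the paper leaves implicit.
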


\section{Construction of Deep Networks for \(W^{m,\infty}\)-Approximation}\label{apprch}
In this section, we construct a deep neural network with the activation function satisfying Conditions \ref{condition1} and \ref{condition2} to approximate a target function \(f\in W^{n,\infty}(\Omega)\), where \(\Omega=[0,1]^d\).  First, by applying \cite[Lemma 4.3.8]{brenner2008mathematical}, we approximate \(f\) on each subdomain \(\Omega_{\vm}\) by a local polynomial of degree \(n-1\):
\begin{equation}
  f_{J,\vm}(\vx)
  \;=\;
  \sum_{|\valpha|\le n-1} g_{f,\valpha,\vm}(\vx)\,\vx^{\valpha},\label{important equations}
\end{equation}
where each coefficient \(g_{f,\valpha,\vm}(\vx)\) is a piece-wise constant on \(\Omega_{\vm}\).  The construction of \(\{g_{f,\valpha,\vm}\}\) is given in Proposition \ref{fN}, and the proof appears in \cite{yang2023nearly}.

Each coefficient is first realised by a small ReLU subnetwork;  
Section~\ref{RE1} (via Proposition~\ref{non-di}) then replaces every such
ReLU block with an equally accurate \(W^{m,\infty}\)-block whose activation
satisfies Conditions~\ref{condition1}--\ref{condition2}.
Monomials are covered by Lemma~\ref{timess}, while Section~\ref{RE2} shows
that a fixed-width network with the same activation can approximate
\(\mathrm{ReLU}^{\,m+1}\) to arbitrary accuracy; since any polynomial can
be represented by a \(\mathrm{ReLU}^{\,m+1}\) network
\cite{he2023expressivity}, every local polynomial
\(p(\vx)=\sum_{\valpha} c_{\valpha}\,\vx^{\valpha}\) therefore admits a
\(W^{m,\infty}\)-accurate, condition-compliant realisation of fixed size.
With these ingredients in place, we build a smooth partition of unity
(also detailed in Section~\ref{RE2}) that stitches the local approximants
together, yielding a single deep network—still using the same
activation—that approximates \(f\) on the entire domain \(\Omega\) with
the desired \(W^{m,\infty}\) accuracy.

\subsection{Approximation of ReLU neural networks by neural networks with the activation function satisfying Conditions \ref{condition1} and \ref{condition2}}\label{RE1}
In \cite{yang2023nearly}, two specialized ReLU‐based architectures play a central role in achieving super-convergence.  The first is the bit‐extraction network of Bartlett et al.\ \cite{bartlett2019nearly}, which encodes $M^2$ arbitrary values with only $M$ parameters.  The second is a family of step‐function approximators built from shallow ReLU networks.  To extend these constructions to any activation satisfying Conditions \ref{condition1} and \ref{condition2}, we begin by proving that every ReLU network can be approximated arbitrarily well by a network whose activation function meets Conditions \ref{condition1} and \ref{condition2}.

After establishing the following proposition, we can construct a \(\sigma\)-neural network to approximate \(g_{f,\valpha,\vm}\) in \eqref{important equations} by using a ReLU neural network as an intermediate link. Moreover, because of the product estimate in Lemma~\ref{lem:Wm_inf_bounds}, it is also necessary to control the higher-order derivatives.

\begin{proposition}\label{non-di}
    For any $M>0$, $m\in\sN_+$ and a one-dimensional deep ReLU neural network of depth $L$ and width $N$ $\phi_{\rm{ReLU}}$ with non-differential point $\{x_i\}_{i=1}^M\subset\Omega=[-M,M]$, for any $1>\delta>0$, we denote that $\Omega_{\delta}=\Omega\backslash (\cup_{i=1}^M [-\delta+x_i,\delta+x_i])$, For any $\varepsilon>0$, there is a deep neural network $\phi_{\text{G}}$ whose activation function meets Conditions \ref{condition1} and \ref{condition2} with depth $2L$ and width $12N$ such that \begin{equation}
        \max\{\|\phi_{\rm{ReLU}}-\phi_{{G}}\|_{C(\Omega)},\|\phi_{\rm{ReLU}}-\phi_{{G}}\|_{W^{m,\infty}(\Omega_\delta)}\}\le \varepsilon.\notag
    \end{equation}
\end{proposition}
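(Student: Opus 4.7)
The plan is to replace each scalar $\mathrm{ReLU}$ unit of $\phi_{\mathrm{ReLU}}$ by a fixed-size $\sigma$-subnetwork $R$ that approximates $\mathrm{ReLU}$ both in the uniform norm on a bounded interval $[-M',M']$ and in the $W^{m,\infty}$-norm on $\{|z|\ge\delta^*\}$, and then to propagate the resulting error through the $L$ layers of the composition. Since $R$ will have depth at most $2$ and width at most $12$, parallelising across the $N$ units of each hidden layer produces $\phi_G$ with depth $2L$ and width $12N$, matching the statement.

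For the construction of $R$ I would distinguish the two cases permitted by Theorem~\ref{thm:mian}. When $\widetilde\sigma(z):=\sigma(z)/z$ satisfies Condition~\ref{condition1} (the $\mathrm{ReLU}$-shaped case), I set $R_K(z):=\sigma(Kz)/K=z\,\widetilde\sigma(Kz)$. Using $\mathrm{ReLU}(z)=zH(z)$ together with the quasi-decay bound $|\widetilde\sigma^{(k)}(Kz)-H^{(k)}(Kz)|\le C/|Kz|^{k+1}$, a direct computation shows that the $k$-th derivative of $R_K-\mathrm{ReLU}$ is $O\bigl(1/(K(\delta^*)^k)\bigr)$ on $|z|\ge\delta^*$ and $O(1/K)$ uniformly, so letting $K\to\infty$ makes both norms arbitrarily small; this block has depth $1$ and width $1$. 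When $\sigma$ itself satisfies Condition~\ref{condition1} (the S-shaped case), $\sigma(Kz)$ only approximates $H(z)$, so I would instead take $R(z):=\Phi_{\mathrm{mul}}\bigl(\Phi_{\mathrm{id}}(z),\sigma(Kz)\bigr)$, where $\Phi_{\mathrm{id}}$ is the depth-$1$ width-$8$ identity approximation from Lemma~\ref{Id} (needed to carry $z$ past the layer that produces $\sigma(Kz)$) and $\Phi_{\mathrm{mul}}$ is the depth-$1$ width-$12$ multiplier from Lemma~\ref{time}. The identity $\mathrm{ReLU}(z)=zH(z)$ combined with $|z\sigma(Kz)-zH(z)|\le|z|\min\{C/|Kz|,G\}=O(1/K)$ and the analogous decay of $\sigma^{(k)}(Kz)$ on $|z|\ge\delta^*$ then yields the same two-norm accuracy; this block has depth $2$ and width $12$.

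With $R$ in hand I replace each scalar $\mathrm{ReLU}$ in $\phi_{\mathrm{ReLU}}$ by $R$ to obtain $\phi_G$, and establish the two error estimates by induction on layers. For the uniform bound on $\Omega$, I use that $\mathrm{ReLU}$ is $1$-Lipschitz together with Lemma~\ref{lem:Wm_inf_bounds}(iii), so the layer-$\ell$ error is bounded by $\varepsilon_R+\|W_\ell\|_\infty\cdot(\text{layer-}(\ell-1)\text{ error})$, giving a final bound of the form $C(\phi_{\mathrm{ReLU}})\,\varepsilon_R$. For the $W^{m,\infty}$ bound on $\Omega_\delta$, I would enlarge $\{x_i\}$ if necessary so that it contains every input at which any intermediate pre-activation of $\phi_{\mathrm{ReLU}}$ vanishes (a finite set in dimension one); then on $\Omega_\delta$ each pre-activation $h_\ell^{(j)}(x)$ is bounded away from $0$ by some $\delta^*=\delta^*(\delta)>0$ determined by the network. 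On such sets $\mathrm{ReLU}$ is locally affine, hence in $W^{m+1,\infty}$ with finite norm, and $R$ is $W^{m,\infty}$-close to it. Iterating Lemma~\ref{lem:Wm_inf_bounds}(ii)--(iii) through the $L$ layers and splitting at each step
\[
\mathrm{ReLU}\circ g_1-R\circ g_2=(\mathrm{ReLU}\circ g_1-\mathrm{ReLU}\circ g_2)+(\mathrm{ReLU}\circ g_2-R\circ g_2)
\]
then delivers the claimed bound once $\varepsilon_R$ is chosen small compared to the accumulated constants.

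The main obstacle is precisely that $\mathrm{ReLU}\notin W^{m+1,\infty}(\mathbb R)$ at the origin, so the composition difference estimate of Lemma~\ref{lem:Wm_inf_bounds}(ii) would blow up without care. This is handled by a bootstrap on $\Omega_\delta$: if the layer-$\ell$ perturbation is smaller than $\delta^*/2$, then each pre-activation of $\phi_G$ stays on the same side of $0$ as its counterpart in $\phi_{\mathrm{ReLU}}$ throughout $\Omega_\delta$, so $\mathrm{ReLU}$ and $R$ are only ever compared on $\{|z|\ge\delta^*/2\}$, where both are $W^{m,\infty}$-close and the composition estimates remain uniformly bounded. Choosing $K$ (hence $\varepsilon_R$) small enough at the end closes the bootstrap and produces the two stated error bounds simultaneously.
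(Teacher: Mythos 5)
Your proposal is correct and follows essentially the same route as the paper: the same two-case replacement block ($\sigma(Kz)/K$ vs.\ $z\,\sigma(Kz)$ implemented via Lemmas~\ref{Id} and~\ref{time}), the same uniform and $W^{m,\infty}$-away-from-the-kink estimates derived from Condition~\ref{condition1}, and the same layer-by-layer substitution with the splitting into $(\sigma_*-\sigma_K)\circ\phi_{1,K}$ and $\sigma_*\circ\phi_1-\sigma_*\circ\phi_{1,K}$ controlled by Lemma~\ref{lem:Wm_inf_bounds}. Your explicit bootstrap keeping the perturbed pre-activations on the correct side of the origin is just a more carefully spelled-out version of the paper's observation that for $K\ge K_0$ the images of $\phi_1$ and $\phi_{1,K}$ lie in a common interval disjoint from $[-\delta_*,\delta_*]$.
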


\begin{proof}
Set
\[
  \psi_K(x)
  =
  \begin{cases}
    \displaystyle \frac{\sigma(Kx)}{K}, & \text{if }\frac{\sigma(x)}{x}\text{ satisfies Conditions~\ref{condition1} and~\ref{condition2}}, \\[1em]
    x\,\sigma(Kx), & \text{if }\sigma(x)\text{ satisfies Conditions~\ref{condition1} and~\ref{condition2}}.
  \end{cases}
\]
To handle the \(C(\Omega)\)-norm, fix \(M,\varepsilon>0\), and let  \(K\ge \left(\frac{2(CM+G)}{\varepsilon}\right)^2\) where \(C\) and \(G\) are the constants in Condition~\ref{condition1}. Set
\(
\tau:=K^{-1/2}.
\)
In the first case, we split the interval into the regions \(|x|\ge \tau\) and \(|x|\le \tau\). Then
\begin{align}
  \sup_{x\in[-M,M]}\bigl|\psi_K(x)-\mathrm{ReLU}(x)\bigr|
  &\le
  \sup_{M\ge |x|\ge \tau}\bigl|x\bigl(\tfrac{\sigma(Kx)}{Kx}-H(x)\bigr)\bigr|
  +\sup_{|x|\le \tau}\bigl|x\bigl(\tfrac{\sigma(Kx)}{Kx}-H(x)\bigr)\bigr| \notag\\
  &\le \frac{CM}{\tau K}+\tau G =(CM+G)K^{-1/2}.
\end{align}
Since
\[
K \ge \left(\frac{2(CM+G)}{\varepsilon}\right)^2,
\]
we obtain
\[
\sup_{x\in[-M,M]}\bigl|\psi_K(x)-\mathrm{ReLU}(x)\bigr|\le \frac{1}{2}\varepsilon.
\]

In the second case, a completely analogous argument shows that, for \(K\) sufficiently large,
\[
  \sup_{x\in[-M,M]}\bigl|x\,\sigma(Kx)-\mathrm{ReLU}(x)\bigr|
  \le \frac{1}{2}\varepsilon.
\]
Therefore, in either case, for any \(\varepsilon>0\), one can choose \(K \ge \left(\frac{2(CM+G)}{\varepsilon}\right)^2\) sufficiently large such that
\[
  \|\psi_K-\mathrm{ReLU}\|_{C(\Omega)}\le \frac{1}{2}\varepsilon.
\]

Next, we consider the \(W^{m,\infty}\)-part. We first approximate the ReLU activation function in the \(W^{m,\infty}\)-norm away from the small region \([-\delta,\delta]\). More precisely, for any \(M>\delta>0\), we consider the domain
\[
[-M,-\delta]\cup[\delta,M].
\]

In the first case, for any \(0\le s\le m\), choose
\[
K=4C(M+m)\delta^{-m}\varepsilon^{-1},
\]
where \(C\) is the constant in Condition~\ref{condition1}. Then
\begin{align}
&\sup_{M\ge |x|\ge \delta}
\left|
\left(
x\Bigl(\tfrac{\sigma(Kx)}{Kx}-H(x)\Bigr)
\right)^{(s)}
\right|
\notag\\
\le\;&
\sup_{M\ge |x|\ge \delta}
\left|
x\Bigl(\tfrac{\sigma(Kx)}{Kx}-H(x)\Bigr)^{(s)}
\right|
+
s\sup_{M\ge |x|\ge \delta}
\left|
\Bigl(\tfrac{\sigma(Kx)}{Kx}-H(x)\Bigr)^{(s-1)}
\right|
\notag\\
\le\;&
\frac{CM}{K\delta^s}+\frac{Cs}{K\delta^{s-1}}
\le \frac{1}{2}\varepsilon.
\end{align}

In the second case, one checks similarly that, for \(K\) sufficiently large,
\[
\sup_{M\ge |x|\ge \delta}
\bigl|(x\,\sigma(Kx)-\mathrm{ReLU}(x))^{(s)}\bigr|
\le \frac{1}{2}\varepsilon,
\qquad 0\le s\le m.
\]

Therefore, in either case, for any \(M,\delta,\varepsilon>0\), one can choose \(K\) sufficiently large such that
\[
\|\psi_K-\mathrm{ReLU}\|_{W^{m,\infty}([-M,-\delta]\cup[\delta,M])}
\le \frac{1}{2}\varepsilon.
\]

In summary, for any \(\varepsilon>0\), if we choose
\[
K\ge \max\left\{
\left(\frac{2(CM+G)}{\varepsilon}\right)^2,\,
4C(M+m)\delta^{-m}\varepsilon^{-1}
\right\},
\]
then both
\[
\|\psi_K-\mathrm{ReLU}\|_{C(\Omega)}\le \frac{1}{2}\varepsilon
\]
and
\[
\|\psi_K-\mathrm{ReLU}\|_{W^{m,\infty}([-M,-\delta]\cup[\delta,M])}
\le \frac{1}{2}\varepsilon
\]
hold.

By Lemmas~\ref{time} and~\ref{Id} for $\sigma$ satisfying Condition~\ref{condition2}, in both versions of \(\psi_K\), there exists a \(\sigma\)-network \(\phi_K\) of depth \(2\) and width \(12\) such that
\[
\|\psi_K-\phi_K\|_{C^m([-M,M])}\le \frac{1}{2}\varepsilon.
\]
In particular,
\[
\|\psi_K-\phi_K\|_{C(\Omega)}\le \frac{1}{2}\varepsilon.
\]
Consequently,
\begin{equation}
\|\phi_K-\mathrm{ReLU}\|_{C(\Omega)}\le \varepsilon,
\qquad
\|\mathrm{ReLU}-\phi_K\|_{W^{m,\infty}([-M,-\delta]\cup[\delta,M])}
\le \varepsilon,\label{converge K}
\end{equation}
which completes the construction.

Next, we show convergence at the neural-network level. To make the proof more readable, we first introduce the notation. Denote the ReLU activation by \(\sigma_*\), and denote the approximating activation \(\phi_K\) by \(\sigma_K\). Any ReLU network of depth \(L\) and width \(N\), with non-differentiable points \(\{x_i\}_{i=1}^M\subset \Omega=[-M,M]\), can be written as
\[
\phi_{\mathrm{ReLU}}(x)
=
\mathcal{L}_L\circ \sigma_*\circ \mathcal{L}_{L-1}\circ \cdots \circ \sigma_*\circ \mathcal{L}_0(x),
\qquad x\in\mathbb{R},
\]
where \(N_0=1\), \(N_1,\dots,N_L\in\mathbb{N}_+\) with \(\max\{N_1,\dots,N_L\}\le N\), \(N_{L+1}=1\), and each affine map \(\mathcal{L}_\ell(\vy)=W_\ell \vy+b_\ell\) satisfies
\[
W_\ell\in\mathbb{R}^{N_{\ell+1}\times N_\ell},
\qquad
b_\ell\in\mathbb{R}^{N_{\ell+1}}.
\]

We want to show that, as \(K\to\infty\),
\[
\phi_{\mathrm G}(x)
=
\mathcal{L}_L\circ \sigma_K\circ \mathcal{L}_{L-1}\circ \cdots \circ \sigma_K\circ \mathcal{L}_0(x),
\qquad x\in\mathbb{R},
\]
converges to \(\phi_{\mathrm{ReLU}}\) in the \(W^{m,\infty}(\Omega_\delta)\)-norm. The corresponding convergence in the \(C(\Omega)\)-norm has already been proved in \cite{zhang2024deep}, so we omit it here.

For the first layer, it follows directly from \eqref{converge K} that
\[
\lim_{K\to\infty}
\bigl\|
\mathcal{L}_1\circ \sigma_K\circ \mathcal{L}_0
-
\mathcal{L}_1\circ \sigma_*\circ \mathcal{L}_0
\bigr\|_{W^{m,\infty}(\Omega_\delta;\mathbb{R}^{N_2})}
=0.
\]

For the second layer, define
\[
\phi_1:=\mathcal{L}_1\circ \sigma_*\circ \mathcal{L}_0,
\qquad
\phi_{1,K}:=\mathcal{L}_1\circ \sigma_K\circ \mathcal{L}_0.
\]
Then
\[
\sigma_*\circ \phi_1-\sigma_K\circ \phi_{1,K}
=
(\sigma_*-\sigma_K)\circ \phi_{1,K}
+
\bigl(\sigma_*\circ \phi_1-\sigma_*\circ \phi_{1,K}\bigr).
\]

For the second term, Lemma~\ref{lem:Wm_inf_bounds} yields
\[
\bigl\|
\sigma_*\circ \phi_1-\sigma_*\circ \phi_{1,K}
\bigr\|_{W^{m,\infty}(\Omega_\delta;\mathbb{R}^{N_2})}
\le
C\,
\|\phi_1-\phi_{1,K}\|_{W^{m,\infty}(\Omega_\delta;\mathbb{R}^{N_2})},
\]
where \(C\) depends only on
\[
\|\phi_1\|_{W^{m,\infty}(\Omega_\delta;\mathbb{R}^{N_2})},
\qquad
\|\phi_{1,K}\|_{W^{m,\infty}(\Omega_\delta;\mathbb{R}^{N_2})},
\qquad
\|\sigma_*\|_{W^{m+1,\infty}(I_*)},
\]
and \(I_*\subset \mathbb{R}\setminus[-\delta_*,\delta_*]\) is a bounded interval containing the images of both \(\phi_1\) and \(\phi_{1,K}\) for all sufficiently large \(K\).

For the first term, Lemma~\ref{lem:Wm_inf_bounds} also gives
\[
\bigl\|
(\sigma_*-\sigma_K)\circ \phi_{1,K}
\bigr\|_{W^{m,\infty}(\Omega_\delta;\mathbb{R}^{N_2})}
\le
C\bigl(\|\phi_{1,K}\|_{W^{m,\infty}(\Omega_\delta;\mathbb{R}^{N_2})}\bigr)\,
\|\sigma_*-\sigma_K\|_{W^{m,\infty}(I_{\phi_{1,K}})},
\]
where \(I_{\phi_{1,K}}\) denotes the image of \(\phi_{1,K}\).

Since \(\phi_{1,K}\to \phi_1\) uniformly on \(\Omega_\delta\), and each component of \(\phi_1(\Omega_\delta)\) is bounded and stays a positive distance away from the origin, there exist \(\delta_1>0\), \(\delta_*<\delta_1\), and \(K_0>0\) such that for all \(K\ge K_0\), the image \(I_{\phi_{1,K}}\) is bounded and disjoint from \([-\delta_*,\delta_*]\). Consequently,
\[
\|\sigma_*-\sigma_K\|_{W^{m,\infty}(I_{\phi_{1,K}})}\to 0
\qquad\text{as }K\to\infty.
\]
Therefore,
\[
\lim_{K\to\infty}
\bigl\|
\sigma_K\circ \mathcal{L}_1\circ \sigma_K\circ \mathcal{L}_0
-
\sigma_*\circ \mathcal{L}_1\circ \sigma_*\circ \mathcal{L}_0
\bigr\|_{W^{m,\infty}(\Omega_\delta;\mathbb{R}^{N_2})}
=0,
\]
and hence
\[
\lim_{K\to\infty}
\bigl\|
\mathcal{L}_2\circ \sigma_K\circ \mathcal{L}_1\circ \sigma_K\circ \mathcal{L}_0
-
\mathcal{L}_2\circ \sigma_*\circ \mathcal{L}_1\circ \sigma_*\circ \mathcal{L}_0
\bigr\|_{W^{m,\infty}(\Omega_\delta;\mathbb{R}^{N_3})}
=0.
\]

Repeating the same argument layer by layer yields convergence for any finite depth \(L\). This completes the proof.
\end{proof}
 We now show that a closely analogous network can be implemented with deep networks with :

\begin{lemma}\label{point}
  Let \(N,L,s\in\mathbb{N}_+\) and \(\{\xi_i\}_{i=0}^{N^2L^2-1}\subset[0,1]\).  There exists a network \(\phi\) of depth 
  \(
    10(L+2)\,\log_2(4L)
    \quad\text{and width}\quad
    192\,s\,(N+1)\,\log_2(8N)
  \)
  whose activation function meets Conditions \ref{condition1} and \ref{condition2} satisfying
  \begin{enumerate}
    \item \(\bigl|\phi(i)-\xi_i\bigr|\le2\,N^{-2s}L^{-2s}
      \quad\text{for all }i=0,1,\dots,N^2L^2-1,\)
    \item \(-1 \le \phi(x)\le 2\quad\text{for all }x\in\mathbb{R}.\)
  \end{enumerate}
\end{lemma}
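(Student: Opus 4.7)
The plan is to reduce the lemma to the ReLU-based bit-extraction network from \cite{yang2023nearly} and then apply Proposition~\ref{non-di} to replace the activation. First, I would invoke the ReLU analog of this lemma from \cite{yang2023nearly}, which produces a network $\phi_{\mathrm{ReLU}}$ of depth $5(L+2)\log_2(4L)$ and width $16s(N+1)\log_2(8N)$ satisfying $|\phi_{\mathrm{ReLU}}(i)-\xi_i|\le N^{-2s}L^{-2s}$ for each $i=0,\ldots,N^2L^2-1$ and $0\le \phi_{\mathrm{ReLU}}(x)\le 1$ on all of $\mathbb{R}$ (the global bound being a standard feature of these constructions, enforceable by an internal clipping layer at no asymptotic cost).

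Next, I would apply Proposition~\ref{non-di} to $\phi_{\mathrm{ReLU}}$ on an interval $[-M,M]$ containing $\{0,1,\dots,N^2L^2-1\}$ with target accuracy $\varepsilon=N^{-2s}L^{-2s}$; the auxiliary parameter $\delta$ plays no role here since only the $C(\Omega)$-bound is needed. This yields a $\sigma$-network $\phi_G$ of the stated depth $10(L+2)\log_2(4L)$ and width $192s(N+1)\log_2(8N)$ satisfying $\|\phi_G-\phi_{\mathrm{ReLU}}\|_{C([-M,M])}\le\varepsilon$. Condition~(1) then follows immediately from the triangle inequality: $|\phi_G(i)-\xi_i|\le 2 N^{-2s}L^{-2s}$ for every relevant $i$.

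For condition~(2), the uniform estimate gives $\phi_G(x)\in[-\varepsilon,1+\varepsilon]$ on $[-M,M]$, but $\phi_G$ could behave arbitrarily outside this interval. I would enforce the global $[0,2]$ bound by post-composing $\phi_G$ with a fixed-size $\sigma$-subnetwork that approximates the clipping map $x\mapsto \min(2,\max(0,x))$; such a block exists by applying Proposition~\ref{non-di} to the two-ReLU realization $\mathrm{ReLU}(x)-\mathrm{ReLU}(x-2)$ of this clip to arbitrary accuracy. Since the clipping block has constant depth and width, it is absorbed into the multiplicative constants in the final depth/width formulas.

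The principal obstacle I foresee is enforcing condition~(2) globally on $\mathbb{R}$ rather than merely on the bounded interval where Proposition~\ref{non-di} delivers its estimate. Condition~(1) is an immediate corollary of the $C$-norm approximation, but the clipping post-composition must be tuned carefully so that the additional perturbation at the integer points stays below, say, $o(N^{-2s}L^{-2s})$, thereby preserving the factor $2$ in condition~(1) while making the non-negativity and upper bound genuinely global rather than approximate.
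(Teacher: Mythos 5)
Your proposal follows essentially the same route as the paper: the paper's proof simply invokes the ReLU bit-extraction network of Proposition~4.4 in \cite{lu2021deep} (with exactly the depth $5(L+2)\log_2(4L)$, width $16s(N+1)\log_2(8N)$, accuracy $N^{-2s}L^{-2s}$, and range $[0,1]$ you guessed), applies Proposition~\ref{non-di} in the sup-norm, and concludes by the triangle inequality. Your extra care about making the bound $0\le\phi\le2$ hold globally on $\mathbb{R}$ via an approximate clipping block is a point the paper's proof silently skips over (and your fix inherits the same issue one level up, since a $\sigma$-approximation of the clip is again only controlled on a bounded interval), but this does not constitute a different approach.
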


\begin{proof}
By Proposition 4.4 of \cite{lu2021deep}, for any \(N,L,s\in\mathbb{N}_+\) and any prescribed values
\(
\{\xi_i\}_{i=0}^{N^2L^2-1}\subset[0,1],
\)
there exists a ReLU FNN \(\phi_{\mathrm{ReLU}}\) with width
\(
16\,s\,(N+1)\log_2(8N)
\)
and depth
\(
5\,(L+2)\log_2(4L)
\)
such that
\begin{enumerate}
    \item
    \[
    |\phi_{\mathrm{ReLU}}(i)-\xi_i|
    \le N^{-2s}L^{-2s},
    \qquad i=0,1,\dots,N^2L^2-1,
    \]
    \item
    \[
    0\le \phi_{\mathrm{ReLU}}(x)\le 1,
    \qquad x\in\mathbb{R}.
    \]
\end{enumerate}

Next, by Proposition~\ref{non-di}, any ReLU network of fixed size can be approximated arbitrarily well in the sup-norm by a network whose activation function satisfies Conditions~\ref{condition1} and~\ref{condition2}. Applying Proposition~\ref{non-di} to \(\phi_{\mathrm{ReLU}}\), we obtain a network \(\phi\) with activation satisfying Conditions~\ref{condition1} and~\ref{condition2}, depth
\(
10(L+2)\log_2(4L),
\)
and width
\(
192\,s\,(N+1)\log_2(8N),
\)
such that
\(
\|\phi-\phi_{\mathrm{ReLU}}\|_{L^\infty(\mathbb{R})}
\le N^{-2s}L^{-2s}.
\)

Therefore, for each \(i=0,1,\dots,N^2L^2-1\),
\[
|\phi(i)-\xi_i|
\le
|\phi(i)-\phi_{\mathrm{ReLU}}(i)|
+
|\phi_{\mathrm{ReLU}}(i)-\xi_i|
\le
2\,N^{-2s}L^{-2s}.
\]
Moreover, since \(0\le \phi_{\mathrm{ReLU}}(x)\le 1\) for all \(x\in\mathbb{R}\), we also have
\[
-1\le \phi(x)\le 2,
\qquad x\in\mathbb{R},
\]
provided the sup-norm approximation error is chosen to be at most \(1\). This proves the lemma.
\end{proof}

In \cite{yang2023nearly}, the second key ReLU‐based construction uses step functions.  Here, we aim to approximate ReLU‐activated networks in the $W^{m,\infty}$ norm by networks whose activation function meets Conditions \ref{condition1} and \ref{condition2}.  

\begin{lemma}\label{lem:step}
  Let \(m,N,L\in\mathbb{N}_+\) and set
  \(
    J \;=\;\Bigl\lfloor N^{1/d}\Bigr\rfloor^2 \,\Bigl\lfloor L^{2/d}\Bigr\rfloor,~\delta\in\Bigl(0,\tfrac1{3J}\Bigr].
  \)
  Define the piecewise‐constant function
  \[
    s(x)
    = k,
    \quad
    x\in \Bigl[\tfrac{k}{J},\,\tfrac{k+1}{J} - \delta\cdot\mathbf{1}_{\{k< J-1\}}\Bigr],
    \quad
    k=0,1,\dots,J-1.
  \]
  Then for any \(\varepsilon>0\), there exists a network \(\phi\) of width \(48N+36\) and depth \(8L+10\) whose activation function meets Conditions \ref{condition1} and \ref{condition2} such that
  \[
    \bigl\|s - \phi\bigr\|_{W^{m,\infty}\!\Bigl(\bigcup_{k=0}^{J-1}
      \bigl[\tfrac{k}{J},\,\tfrac{k+1}{J}-\delta\,\mathbf{1}_{\{k< J-1\}}\bigr]
    \Bigr)}
    \;\le\;\varepsilon.
  \]
\end{lemma}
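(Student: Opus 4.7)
The plan is to reduce Lemma~\ref{lem:step} to the ReLU step-function construction from \cite{yang2023nearly} and then invoke Proposition~\ref{non-di}. The size bounds $48N+36=12(4N+3)$ and $8L+10=2(4L+5)$ strongly suggest the target: first build a ReLU network $\phi_{\mathrm{ReLU}}$ of width $4N+3$ and depth $4L+5$ that realizes $s$ \emph{exactly} on every plateau $[\tfrac{k}{J},\tfrac{k+1}{J}-\delta\mathbf{1}_{\{k<J-1\}}]$ (and is piecewise linear on the complementary gaps $(\tfrac{k+1}{J}-\delta,\tfrac{k+1}{J})$), then apply Proposition~\ref{non-di} to convert it to a $\sigma$-network.

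Concretely, I would first check (or slightly adjust) that all non-differential points $\{x_i\}$ of $\phi_{\mathrm{ReLU}}$ sit strictly \emph{interior} to the gaps, uniformly separated from the plateau endpoints $k/J$ and $(k+1)/J-\delta$. If the off-the-shelf ramps of \cite{yang2023nearly} abut a plateau boundary, a cosmetic shift that localizes every ramp into the middle half of its gap, say $\bigl(\tfrac{k+1}{J}-\tfrac{3\delta}{4},\,\tfrac{k+1}{J}-\tfrac{\delta}{4}\bigr)$, forces the separation without enlarging the network. I would then set
\[
\delta' \;=\; \tfrac{1}{2}\,\min_i \mathrm{dist}\!\Bigl(x_i,\; \bigcup_{k=0}^{J-1}\bigl\{\tfrac{k}{J},\,\tfrac{k+1}{J}-\delta\mathbf{1}_{\{k<J-1\}}\bigr\}\Bigr)>0,
\]
so that the excluded intervals $[x_i-\delta',x_i+\delta']$ lie entirely inside the gaps and are therefore disjoint from $\Omega_*:=\bigcup_{k=0}^{J-1}[\tfrac{k}{J},\tfrac{k+1}{J}-\delta\mathbf{1}_{\{k<J-1\}}]$. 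Applying Proposition~\ref{non-di} with this $\delta'$ and tolerance $\varepsilon$ then yields a $\sigma$-network $\phi$ of depth $8L+10$ and width $48N+36$ such that
\[
\|\phi_{\mathrm{ReLU}}-\phi\|_{W^{m,\infty}\!\bigl(\Omega\setminus\bigcup_i[x_i-\delta',x_i+\delta']\bigr)}\le\varepsilon.
\]
Since $\phi_{\mathrm{ReLU}}\equiv s$ on $\Omega_*$ and $\Omega_*$ lies in that complement, the claimed estimate on $\Omega_*$ is immediate.

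The principal obstacle is the \emph{separation property} in the first step: verifying that no kink of the ReLU step network lands on a plateau boundary. Without this, Proposition~\ref{non-di} would only deliver control after deleting a neighborhood of some endpoint $k/J$ or $(k+1)/J-\delta$, and the target $W^{m,\infty}$ bound on the closed plateaus would fail. Once separation is in hand, the remainder is book-keeping: $\delta'$ is a geometric constant depending only on $J$ and $\delta$ (not on $\varepsilon$), so the depth/width estimates from Proposition~\ref{non-di} translate directly into the stated $8L+10$ and $48N+36$, completing the proof.
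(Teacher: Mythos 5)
Your proposal is correct and follows essentially the same route as the paper: realize $s$ exactly by a ReLU step-function network of width $4N+3$ and depth $4L+5$ (the paper invokes Proposition~4.3 of \cite{lu2021deep} for this), then convert it via Proposition~\ref{non-di}, whose factor-$12$ width and factor-$2$ depth overhead accounts exactly for $48N+36$ and $8L+10$. Your extra attention to the location of the kinks is well placed: the paper's one-line proof only notes that there are no non-differentiable points \emph{in the interior} of the plateaus, while the off-the-shelf construction puts kinks at the plateau endpoints, so the excluded neighbourhoods $[x_i-\delta',x_i+\delta']$ would clip a sliver of each closed plateau; your relocation of the ramps into the gap interiors (equivalently, running the ReLU construction with a strictly smaller gap parameter) cleanly disposes of this detail.
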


\begin{proof}
  Proposition 4.3 of \cite{lu2021deep} constructs a ReLU network of the stated width and depth that exactly implements \(s\) on the union of intervals
  \(\bigcup_{k=0}^{J-1}[\tfrac{k}{J},\,\tfrac{k+1}{J}-\delta]\).  Since \(s\) is constant—and hence smooth—on each such interval, there are no non-differentiable points in the interior.  Applying Proposition~\ref{non-di} then yields a network whose activation function meets Conditions \ref{condition1} and \ref{condition2} that approximates the ReLU network in the \(W^{m,\infty}\) norm to within \(\varepsilon\).
\end{proof}

\subsection{Approximation of partition of unity by neural networks whose activation function meets Conditions \ref{condition1} and \ref{condition2}}\label{RE2}
In the next Lemma, we will show that the deep neural networks whose activation function meets Conditions \ref{condition1} and \ref{condition2} can approximate $\text{ReLU}^{p}$ for $p\ge m+1$.

\begin{lemma}\label{p of ReLU}
    For any $\varepsilon>0$, $m\in\sN_+$, $p\ge m+1$ and bounded domain $[-M,M]$, there is a neural network $\phi$ with depth $p$ and width $20$ whose activation function meets Conditions \ref{condition1} and \ref{condition2}, such that \begin{equation}
        \|\mathrm{ReLU}^{p}-\phi\|_{W^{m,\infty}([-M,M])}\le \varepsilon.\notag
    \end{equation}
\end{lemma}

    \begin{proof}
Set
\[
  \psi_{K,p}(x)
  =
  \begin{cases}
    \displaystyle x^{p-1}\sigma(Kx), & \text{if }\tfrac{\sigma(x)}{x}\text{ satisfies Conditions~\ref{condition1} and~\ref{condition2}}, \\[1em]
    x^{p}\sigma(Kx), & \text{if }\sigma(x)\text{ satisfies Conditions~\ref{condition1} and~\ref{condition2}}.
  \end{cases}
\]
Let \(\phi_{K,p}\) be the depth-\(p\), width-\(20\) network from
Lemmas~\ref{sqr11} and~\ref{xm+1} that approximates \(\psi_{K,p}\).
Then
\[
\|\mathrm{ReLU}^{p}-\phi_{K,p}\|_{W^{m,\infty}([-M,M])}
\le
\|\mathrm{ReLU}^{p}-\psi_{K,p}\|_{W^{m,\infty}([-M,M])}
+
\|\psi_{K,p}-\phi_{K,p}\|_{W^{m,\infty}([-M,M])}.
\]
By Lemmas~\ref{lem:Wm_inf_bounds}, \ref{sqr11}, and~\ref{xm+1}, for every fixed
\(K\), the second term can be made arbitrarily small. Thus it remains to prove
\begin{equation}\label{eq:relu-p-limit}
\lim_{K\to\infty}
\|\psi_{K,p}-\mathrm{ReLU}^{p}\|_{W^{m,\infty}([-M,M])}=0.
\end{equation}

Without loss of generality, we only prove the second case, namely
\[
\psi_{K,p}(x)=x^p\sigma(Kx),
\]
where \(\sigma\) satisfies Conditions~\ref{condition1} and~\ref{condition2}.
The first case is analogous.

We prove \eqref{eq:relu-p-limit} by induction on \(m\).

For \(m=0\), we only need to prove convergence in \(L^\infty([-M,M])\).
Since \(\mathrm{ReLU}^{p}(x)=0\) on \([-M,0]\), we have
\begin{align}
\|\psi_{K,p}-\mathrm{ReLU}^{p}\|_{L^\infty([-M,0])}
&=
\|\psi_{K,p}\|_{L^\infty([-M,0])} \notag\\
&\le
\|x^p\sigma(Kx)\|_{L^\infty([-M,-\delta])}
+
\|x^p\sigma(Kx)\|_{L^\infty([-\delta,0])} \notag\\
&\le
\frac{CM^p}{K\delta}+\delta^p,
\label{eq:base-neg}
\end{align}
where the last inequality follows from Conditions~\ref{condition1}
and~\ref{condition2}. Choosing
\[
\delta=K^{-1/(p+1)},
\]
we obtain
\[
\|\psi_{K,p}-\mathrm{ReLU}^{p}\|_{L^\infty([-M,0])}
=
\mathcal{O}\!\left(K^{-p/(p+1)}\right).
\]
Similarly,
\[
\|\psi_{K,p}-\mathrm{ReLU}^{p}\|_{L^\infty([0,M])}
=
\mathcal{O}\!\left(K^{-p/(p+1)}\right).
\]
Hence
\[
\|\psi_{K,p}-\mathrm{ReLU}^{p}\|_{L^\infty([-M,M])}\to 0
\qquad\text{as }K\to\infty.
\]
This proves \eqref{eq:relu-p-limit} for \(m=0\).

Assume now that \eqref{eq:relu-p-limit} holds for \(m-1\). We prove it for \(m\).
By the definition of the \(W^{m,\infty}\)-norm,
\[
\|\psi_{K,p}-\mathrm{ReLU}^{p}\|_{W^{m,\infty}([-M,M])}
\le
\|\psi_{K,p}-\mathrm{ReLU}^{p}\|_{L^\infty([-M,M])}
+
\left\|\frac{\mathrm d}{\mathrm dx}\bigl[\psi_{K,p}-\mathrm{ReLU}^{p}\bigr]\right\|_{W^{m-1,\infty}([-M,M])}.
\]
The \(L^\infty\)-term tends to zero by the argument above, which is independent
of \(m\). For the derivative term, we compute
\begin{align}
\frac{\mathrm d}{\mathrm dx}\bigl[x^p\sigma(Kx)\bigr]
&=
p\,x^{p-1}\sigma(Kx)+x^p(\sigma(Kx))'
=
p\,\psi_{K,p-1}+x^p(\sigma(Kx))',
\notag\\
\frac{\mathrm d}{\mathrm dx}\mathrm{ReLU}^{p}
&=
p\,\mathrm{ReLU}^{p-1}.
\notag
\end{align}
Therefore,
\begin{align*}
&\left\|
\frac{\mathrm d}{\mathrm dx}\bigl[\psi_{K,p}-\mathrm{ReLU}^{p}\bigr]
\right\|_{W^{m-1,\infty}([-M,M])}\\
\le{}&
p\,\|\psi_{K,p-1}-\mathrm{ReLU}^{p-1}\|_{W^{m-1,\infty}([-M,M])}
+
\|x^p(\sigma(Kx))'\|_{W^{m-1,\infty}([-M,M])}.
\end{align*}

Since \(p-1\ge m\), the induction hypothesis applied to the exponent \(p-1\)
and norm order \(m-1\) gives
\[
\|\psi_{K,p-1}-\mathrm{ReLU}^{p-1}\|_{W^{m-1,\infty}([-M,M])}\to 0
\qquad\text{as }K\to\infty.
\]
It remains to show that
\[
\|x^p(\sigma(Kx))'\|_{W^{m-1,\infty}([-M,M])}\to 0.
\]

By Leibniz' rule,
\[
\|x^p(\sigma(Kx))'\|_{W^{m-1,\infty}([-M,M])}
\le
\sum_{s=0}^{m-1}\sum_{q=0}^{s}
\left\|
\binom{s}{q}D^q(x^p)\,
D^{s-q+1}(\sigma(Kx))
\right\|_{L^\infty([-M,M])}.
\]
For \(0\le q\le s\le m-1\), the chain rule and Condition~\ref{condition1} imply
\begin{align}
\left|D^q(x^p)\,D^{s-q+1}(\sigma(Kx))\right|
&\le
C\,K^{s-q+1}|x|^{p-q}\frac{1}{|Kx|^{\,s-q+2}}
\notag\\
&=
C\,K^{-1}|x|^{p-s-2},
\label{eq:term-bound}
\end{align}where the constant $C$ is defined in Condition \ref{condition1}.
Since \(p\ge m+1\) and \(s\le m-1\), we have
\[
p-s-2\ge p-(m-1)-2=p-m-1\ge 0.
\]
Hence
\[
|x|^{p-s-2}\le M^{p-s-2}\le M^{p-2},
\]
and therefore \eqref{eq:term-bound} yields
\[
\left|D^q(x^p)\,D^{s-q+1}(\sigma(Kx))\right|
\le
C\,K^{-1}M^{p-2}.
\]
Summing over all \(0\le q\le s\le m-1\), we conclude that
\[
\|x^p(\sigma(Kx))'\|_{W^{m-1,\infty}([-M,M])}
\le
C\,K^{-1}M^{p-2}\to 0
\qquad\text{as }K\to\infty.
\]

Thus
\[
\left\|
\frac{\mathrm d}{\mathrm dx}\bigl[\psi_{K,p}-\mathrm{ReLU}^{p}\bigr]
\right\|_{W^{m-1,\infty}([-M,M])}\to 0,
\]
and consequently
\[
\|\psi_{K,p}-\mathrm{ReLU}^{p}\|_{W^{m,\infty}([-M,M])}\to 0.
\]
This completes the induction and proves \eqref{eq:relu-p-limit}.

Finally, choose \(K\) sufficiently large so that
\[
\|\psi_{K,p}-\mathrm{ReLU}^{p}\|_{W^{m,\infty}([-M,M])}\le \frac{\varepsilon}{2},
\]
and then choose \(\phi_{K,p}\) so that
\[
\|\phi_{K,p}-\psi_{K,p}\|_{W^{m,\infty}([-M,M])}\le \frac{\varepsilon}{2}.
\]
By the triangle inequality,
\[
\|\mathrm{ReLU}^{p}-\phi_{K,p}\|_{W^{m,\infty}([-M,M])}\le \varepsilon.
\]
This completes the proof.
\end{proof}

Now we use the Lemma \ref{p of ReLU} to show that for any deep $\operatorname{ReLU}^{m+1}$ neural network, there is a deep neural network whose activation function meets Conditions \ref{condition1} and \ref{condition2} to approximate it well in $W^{m,\infty}$-norm.

\begin{proposition}\label{sqr approixmation}
    For any deep $\operatorname{ReLU}^{m+1}$ neural network $\phi_{\mathrm{ReLU}^{m+1}}$ with depth $L$ and width $N$, for any $\varepsilon>0$ and a bounded domain $\Omega$, there is a deep neural networks $\phi_\text{G}$ with depth $(m+1)L$ and width $20N$ whose activation function meets Conditions \ref{condition1} and \ref{condition2} such that \begin{align}
        \left\|\phi_\text{G}-\phi_{\mathrm{ReLU}^{m+1}}\right\|_{W^{m,\infty}(\Omega)}\le \varepsilon.\notag
    \end{align}
\end{proposition}

\begin{proof}
  The proof is similar with Proposition \ref{non-di}.
\end{proof}

Based on such results, we can have the following proposition which shows that neural networks whose activation function meets Conditions \ref{condition1} and \ref{condition2} can approximate any polynomial in $W^{m,\infty}(\Omega)$ combining the result in \cite{he2023expressivity}.
\begin{lemma}\label{poly}
  Let \(P(\vx)\) be a polynomial of total degree \(s\) in \(\mathbb{R}^d\) on a bounded domain \(\Omega\).  For any \(\varepsilon>0\) and $m\in\sN_+$, there exists a deep neural network \(\phi\) of depth smaller than \((m+1)\log_{m+1} s\) and width \(40(s+1)^d\) whose activation function meets Conditions \ref{condition1} and \ref{condition2} such that
  \[
    \|P - \phi\|_{W^{m,\infty}(\Omega)} \;\le\; \varepsilon.
  \]
\end{lemma}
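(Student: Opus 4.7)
The plan is to reduce the problem to the two building blocks already assembled in the previous subsection: the exact representation of polynomials by $\mathrm{ReLU}^{m+1}$ networks, and the $W^{m,\infty}$-approximation of $\mathrm{ReLU}^{m+1}$ networks by networks with an activation satisfying Conditions~\ref{condition1}--\ref{condition2}. In other words, I would compose the constructive result of \cite{he2023expressivity} with Lemma~\ref{sqr approixmation} and track the resulting depth and width.

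First, I would invoke the construction of \cite{he2023expressivity}, which shows that any polynomial $P(\vx)$ of total degree $s$ on $\mathbb{R}^d$ admits an \emph{exact} realization as a $\mathrm{ReLU}^{m+1}$ network $\phi_{\mathrm{ReLU}^{m+1}}$ of depth at most $\log_{m+1} s$ and width at most $2(s+1)^d$. The monomial basis of dimension $\binom{s+d}{d}\le (s+1)^d$ explains the width bound, and the repeated squaring/raising-to-$(m{+}1)$-th-power structure accounts for the logarithmic depth. Thus in the $\mathrm{ReLU}^{m+1}$ setting one already has $P\equiv \phi_{\mathrm{ReLU}^{m+1}}$ pointwise, hence certainly in $W^{m,\infty}(\Omega)$.

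Next, since $\Omega$ is bounded and $\phi_{\mathrm{ReLU}^{m+1}}$ is a fixed network, I can apply Lemma~\ref{sqr approixmation} to this exact $\mathrm{ReLU}^{m+1}$ realization: for every $\varepsilon>0$ there is a $\sigma$-network $\phi$ of depth $(m+1)\log_{m+1} s$ and width $20\cdot 2(s+1)^d = 40(s+1)^d$ satisfying
\[
\|\phi-\phi_{\mathrm{ReLU}^{m+1}}\|_{W^{m,\infty}(\Omega)}\le \varepsilon.
\]
Combining with $P=\phi_{\mathrm{ReLU}^{m+1}}$ immediately yields $\|P-\phi\|_{W^{m,\infty}(\Omega)}\le\varepsilon$, matching the announced bounds on depth and width.

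The only nontrivial issue is bookkeeping: one must check that the width factor $20$ from Lemma~\ref{sqr approixmation} multiplied by the width of the $\mathrm{ReLU}^{m+1}$ template from \cite{he2023expressivity} stays within $40(s+1)^d$, and that the depth multiplier $(m+1)$ applied to $\log_{m+1} s$ gives $(m+1)\log_{m+1} s$ as claimed. Everything else is a direct composition of results already established, so I do not foresee any genuinely new obstacle beyond this bookkeeping.
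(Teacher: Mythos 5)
Your proposal is correct and follows essentially the same route as the paper: cite Theorem 1 of \cite{he2023expressivity} for an exact $\mathrm{ReLU}^{m+1}$ realization of $P$ with depth $\log_{m+1}s$ and width $2(s+1)^d$, then apply Lemma~\ref{sqr approixmation} to convert it to a condition-compliant $\sigma$-network with depth multiplied by $m+1$ and width multiplied by $20$, yielding exactly the stated bounds. The bookkeeping you flag is precisely all the paper does as well.
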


\begin{proof}
  By Theorem 1 in \cite{he2023expressivity}, one can construct a $\text{ReLU}^{m+1}$ network \(\phi_*\) of depth \(\log_{m+1} s\) and width \(2\,(s+1)^d\) that exactly represents the polynomial \(P(\vx)\).  Applying Proposition \ref{sqr approixmation}, which approximates the $\text{ReLU}^{m+1}$ activation by networks whose activation function meets Conditions \ref{condition1} and \ref{condition2} in the \(W^{m,\infty}\) norm, we obtain a network whose activation function meets Conditions \ref{condition1} and \ref{condition2} of depth \((m+1)\log_{m+1} s\) and width \(40\,(s+1)^d\) that approximates \(P\) up to error \(\varepsilon\).  This completes the proof.
\end{proof}

Finally, the remaining key ReLU network in the approximation scheme of \cite{yang2023nearly} is the partition of unity.  Unlike our step‐function construction, which only approximates locally and thus cannot be extended across the entire domain, the partition of unity must hold globally.  Therefore, we must adopt a different strategy: we will build a smoother partition of unity—refining the one used in \cite{yang2023nearly}—and then analyze its approximation properties.  In what follows, we first construct this smooth partition of unity and subsequently study how it can be approximated by networks whose activation function meets Conditions \ref{condition1} and \ref{condition2}.

\begin{lemma}\label{S}
For any integer \(m\ge1\), there exists a polynomial \(S_m\) of degree \(2m+1\) on \([0,1]\) such that
\[
S_m(0)=0,\quad S_m(1)=1,
\qquad
S_m^{(p)}(0)=S_m^{(p)}(1)=0
\quad (p=1,2,\dots,m),
\]
and
\[
S_m(x)+S_m(1-x)=1
\quad\text{for all }x\in[0,1].
\]
Moreover,
\[
0\le S_m(x)\le 1
\quad\text{for all }x\in[0,1].
\]
\end{lemma}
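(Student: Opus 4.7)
The plan is to construct $S_m$ explicitly as a normalized antiderivative of a symmetric bump and verify the required properties by direct computation. Specifically, I would set
\[
  S_m(x) \;=\; \frac{1}{B_m}\int_0^x t^m(1-t)^m\,\mathrm dt,
  \qquad
  B_m \;:=\; \int_0^1 t^m(1-t)^m\,\mathrm dt,
\]
so that $S_m$ is a polynomial of degree $2m+1$ (its derivative has degree $2m$) with $S_m(0)=0$ and $S_m(1)=1$ by the normalization.

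Next I would verify the vanishing of the interior derivatives at the endpoints. Since $S_m'(x)=B_m^{-1}x^m(1-x)^m$ carries a factor $x^m$ and a factor $(1-x)^m$, by Leibniz's rule every derivative $S_m^{(p+1)}(x)$ with $1\le p\le m$ is a sum of terms each containing a factor of $x^{m-j}(1-x)^{m-(p-j)}$ for some $0\le j\le p$, and all such factors vanish at $x=0$ (because $m-j\ge m-p\ge 0$ with strict inequality unless $j=p$, in which case the other factor vanishes) and similarly at $x=1$. Hence $S_m^{(p)}(0)=S_m^{(p)}(1)=0$ for $p=1,\dots,m$.

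Finally, for the symmetry identity, set $g(x):=S_m(x)+S_m(1-x)$. Differentiating gives
\[
  g'(x) \;=\; S_m'(x) - S_m'(1-x) \;=\; \frac{1}{B_m}\bigl[x^m(1-x)^m - (1-x)^m x^m\bigr] \;=\; 0,
\]
so $g$ is constant on $[0,1]$. Evaluating at $x=0$ yields $g(0)=S_m(0)+S_m(1)=0+1=1$, which gives $S_m(x)+S_m(1-x)=1$ on $[0,1]$.

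The argument is entirely routine; the only mild subtlety is checking that the $p$-th derivative of the product $x^m(1-x)^m$ vanishes at both endpoints for $p<m$, which is handled cleanly by Leibniz's rule as above. No delicate estimates or nontrivial ideas are required once the candidate $S_m$ is written down as the normalized incomplete beta function with parameters $(m+1,m+1)$.
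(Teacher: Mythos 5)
Your proposal is correct in substance but takes a genuinely different route from the paper. The paper constructs an odd polynomial $P_m(x)=\sum_{k=0}^m p_k x^{2k+1}$ on $[-1,1]$ by imposing $P_m(1)=1$ and $P_m^{(p)}(1)=0$ ($p=1,\dots,m$), asserting (without proof) that the resulting $(m+1)\times(m+1)$ linear system is uniquely solvable, and then sets $S_m(x)=\tfrac12(P_m(2x-1)+1)$, deriving the identity $S_m(x)+S_m(1-x)=1$ from the oddness of $P_m$. You instead write down the closed form $S_m(x)=B_m^{-1}\int_0^x t^m(1-t)^m\,\mathrm dt$ (the regularized incomplete beta function $I_x(m+1,m+1)$), which makes existence manifest, renders every property verifiable by direct computation, and neatly sidesteps the solvability question; your derivation of the symmetry identity via $g'(x)=0$ and $g(0)=1$ is clean. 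Since the Hermite data $S(0)=0$, $S(1)=1$, $S^{(p)}(0)=S^{(p)}(1)=0$ for $p\le m$ determine a degree-$(2m+1)$ polynomial uniquely, the two constructions in fact produce the same $S_m$. One small slip to fix: your Leibniz verification is stated for $S_m^{(p+1)}$ with $1\le p\le m$, i.e.\ for the $p$-th derivative of $x^m(1-x)^m$, and for $j=p=m$ the term $m!\,(1-x)^m$ does \emph{not} vanish at $x=0$ (so $S_m^{(m+1)}(0)\neq 0$, contrary to your parenthetical). The lemma only requires $S_m^{(p)}$ for $p\le m$, i.e.\ the $(p-1)$-th derivative of $x^m(1-x)^m$ with $p-1\le m-1$; there every Leibniz term carries $x^{m-j}$ and $(1-x)^{m-(p-1-j)}$ with both exponents at least $m-(p-1)\ge 1$, so the vanishing at both endpoints is immediate and no exceptional case arises.
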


\begin{proof}
As before, construct an odd polynomial
\[
P_m(x)=\sum_{k=0}^m p_k\,x^{2k+1}
\quad\text{on }[-1,1]
\]
satisfying
\begin{enumerate}[label=(\roman*)]
\item \(P_m^{(p)}(\pm1)=0\) for \(p=1,\dots,m\),
\item \(P_m(-1)=-1,\;P_m(1)=1\),
\item \(P_m(-x)=-P_m(x)\).
\end{enumerate}
The conditions \(P_m(1)=1\) and \(P_m^{(p)}(1)=0\) for \(p=1,\dots,m\) yield an \((m+1)\times(m+1)\) linear system for \(\{p_k\}\), which has a unique solution. Define
\[
S_m(x)=\tfrac12\bigl(P_m(2x-1)+1\bigr),
\qquad x\in[0,1].
\]
It is immediate that \(\deg S_m=2m+1\), \(S_m(0)=0\), \(S_m(1)=1\), and
\[
S_m^{(p)}(0)=S_m^{(p)}(1)=0,
\qquad p=1,\dots,m.
\]
Moreover, since \(P_m\) is odd,
\[
S_m(x)+S_m(1-x)
=\tfrac12\bigl(P_m(2x-1)+1+P_m(1-2x)+1\bigr)
=1.
\]

It remains to show that \(0\le S_m(x)\le 1\) on \([0,1]\). Since \(P_m^{(p)}(\pm1)=0\) for \(p=1,\dots,m\), the derivative \(P_m'\) vanishes to order \(m\) at both \(x=-1\) and \(x=1\). As \(P_m'\) is a polynomial of degree \(2m\), these zeros account for all zeros of \(P_m'\). Therefore, \(P_m'\) does not change sign on \((-1,1)\), and hence \(P_m\) is monotone on \([-1,1]\). Since \(P_m(-1)=-1\) and \(P_m(1)=1\), we obtain
\[
P_m([-1,1])=[-1,1].
\]
Consequently,
\[
S_m([0,1])
=\tfrac12\bigl(P_m([-1,1])+1\bigr)
=[0,1].
\]
In particular, \(0\le S_m(x)\le 1\) for all \(x\in[0,1]\).
\end{proof}

        Then we define a partition of unity \(\{s_{\vm}\}_{\vm\in\{1,2\}^d}\) on \([0,1]^d\).

\begin{definition}\label{def:sm}
Let \(m\in\mathbb{N}_+\), and fix \(J\in\mathbb{N}_+\).

\medskip
\noindent
\textbf{(1).}
Define \(s:\mathbb{R}\to[0,1]\) by
\[
s(x):=
\begin{cases}
S_m(x), & x\in[0,1],\\[2mm]
1, & x\in[1,2],\\[2mm]
S_m(3-x), & x\in[2,3],\\[2mm]
0, & \text{otherwise},
\end{cases}
\]
where \(S_m\) is the polynomial given in Lemma~\ref{S}. Then \(s\in W^{m+1,\infty}(\mathbb{R})\); see Figure~\ref{ss} for $m=1$. \begin{figure}
    \centering
    \includegraphics[width=0.7\linewidth]{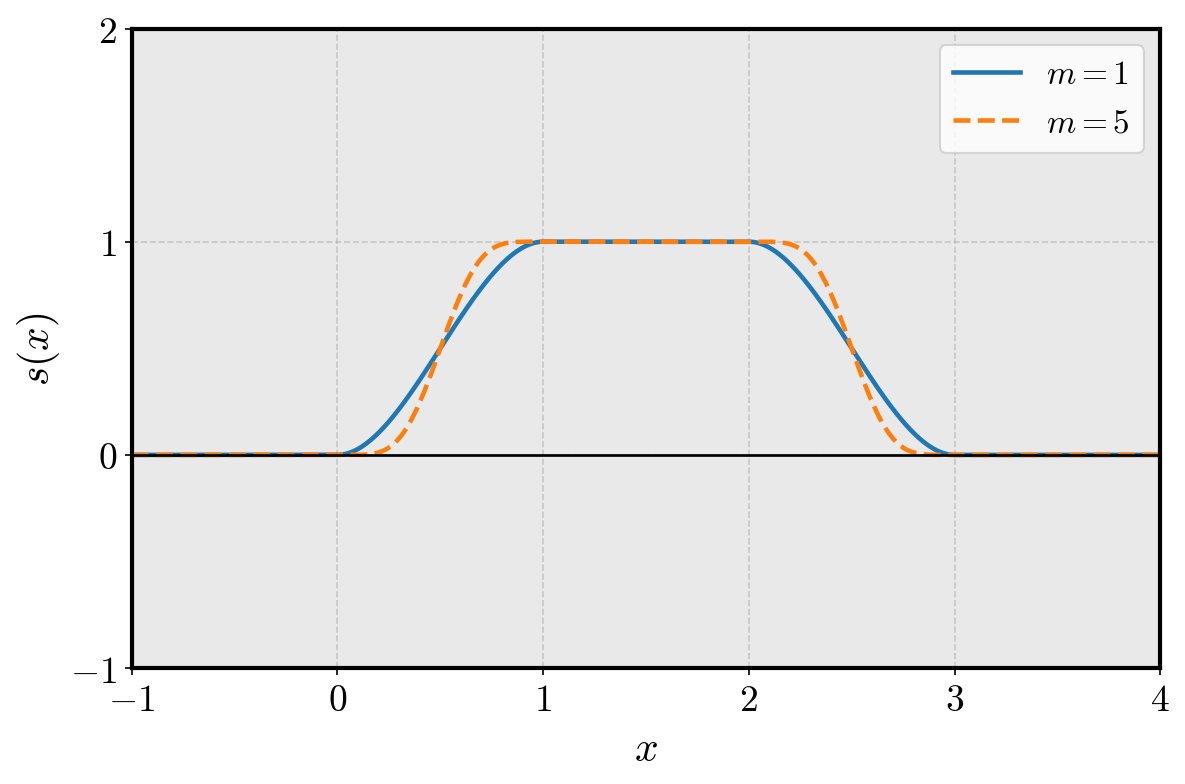}
    \caption{Plot of $s(x)$ for $m=1$ and $m=5$.}
    \label{ss}
\end{figure}

\medskip
\noindent
\textbf{(2).}
Define
\[
s_1(x):=\sum_{i=0}^{J}s(4Jx-1-4i),
\qquad
s_2(x):=s_1\!\left(x+\frac{1}{2J}\right).
\]

\medskip
\noindent
\textbf{(3).}
For any \(\vm=(m_1,\dots,m_d)\in\{1,2\}^d\), define
\[
s_{\vm}(\vx):=\prod_{j=1}^d s_{m_j}(x_j),
\qquad
\vx=(x_1,\dots,x_d)\in\mathbb{R}^d.
\]
\end{definition}

\begin{lemma}\label{lem:sm-support}
Let \(\{s_{\vm}\}_{\vm\in\{1,2\}^d}\) be given by Definition~\ref{def:sm}. Then \(\{s_{\vm}\}_{\vm\in\{1,2\}^d}\) forms a partition of unity on \([0,1]^d\), namely
\[
\sum_{\vm\in\{1,2\}^d}s_{\vm}(\vx)=1,
\qquad \vx\in[0,1]^d.
\]
Moreover, for each \(\vm\in\{1,2\}^d\),
\[
\operatorname{supp}(s_{\vm})\cap[0,1]^d\subset \Omega_{\vm},
\qquad
\Omega_{\vm}:=\prod_{j=1}^d \Omega_{m_j},
\]
where
\[
\Omega_1:=\bigcup_{i=0}^{J-1}\left[\frac{i}{J}+\frac{1}{4J},\,\frac{i+1}{J}\right],
\qquad
\Omega_2:=\left(\bigcup_{i=0}^{J}\left[\frac{i}{J}-\frac{1}{4J},\,\frac{i}{J}+\frac{1}{2J}\right]\right)\cap[0,1].
\]
\end{lemma}

\begin{proof}
We first show that
\[
s_1(x)+s_2(x)=1,\qquad x\in[0,1].
\]
Indeed, by definition,
\[
s_1(x)=\sum_{i=0}^{J}s(4Jx-1-4i),
\qquad
s_2(x)=\sum_{i=0}^{J}s(4Jx+1-4i).
\]
Fix \(x\in[0,1]\), and set
\[
y:=4Jx-1.
\]
Then \(y\in[-1,4J-1]\). Since \(\operatorname{supp}(s)\subset[0,3]\), only those terms for which
\[
y-4i\in[0,3]
\qquad\text{or}\qquad
y+2-4i\in[0,3]
\]
can contribute. One checks directly that for each \(x\in[0,1]\), exactly two such terms appear, and they are complementary. More precisely, there exists \(i\in\{0,\dots,J\}\) such that either
\[
y-4i\in[0,1]
\quad\text{and}\quad
y+2-4i\in[2,3],
\]
or
\[
y-4i\in[2,3]
\quad\text{and}\quad
y+2-4i\in[0,1],
\]
or one of them lies in \([1,2]\), where \(s\equiv 1\). Using the definition
\[
s(t)=S_m(t)\ \text{on }[0,1],\qquad
s(t)=1\ \text{on }[1,2],\qquad
s(t)=S_m(3-t)\ \text{on }[2,3],
\]
we see that the two nonzero contributions always add up to \(1\). Hence
\[
s_1(x)+s_2(x)=1,\qquad x\in[0,1].
\]

Now let \(\vx=(x_1,\dots,x_d)\in[0,1]^d\). Then
\[
\sum_{\vm\in\{1,2\}^d}s_{\vm}(\vx)
=
\sum_{\vm\in\{1,2\}^d}\prod_{j=1}^d s_{m_j}(x_j)
=
\prod_{j=1}^d \bigl(s_1(x_j)+s_2(x_j)\bigr)
=
1.
\]
Therefore, \(\{s_{\vm}\}_{\vm\in\{1,2\}^d}\) is a partition of unity on \([0,1]^d\).

Next we verify the support inclusion. Since \(\operatorname{supp}(s)\subset[0,3]\), we have
\[
s(4Jx-1-4i)\neq 0
\quad\Longrightarrow\quad
0\le 4Jx-1-4i\le 3,
\]
which is equivalent to
\[
\frac{i}{J}+\frac{1}{4J}\le x\le \frac{i+1}{J}.
\]
Hence
\[
\operatorname{supp}(s_1)\cap[0,1]
\subset
\bigcup_{i=0}^{J-1}\left[\frac{i}{J}+\frac{1}{4J},\,\frac{i+1}{J}\right]
=
\Omega_1.
\]
Similarly, since \(s_2(x)=s_1(x+\tfrac{1}{2J})\), we have
\[
s_2(x)\neq 0
\quad\Longrightarrow\quad
x+\frac{1}{2J}\in \Omega_1,
\]
which yields
\[
\operatorname{supp}(s_2)\cap[0,1]
\subset
\left(\bigcup_{i=0}^{J}\left[\frac{i}{J}-\frac{1}{4J},\,\frac{i}{J}+\frac{1}{2J}\right]\right)\cap[0,1]
=
\Omega_2.
\]

Finally, for \(\vm=(m_1,\dots,m_d)\in\{1,2\}^d\),
\[
\operatorname{supp}(s_{\vm})\cap[0,1]^d
\subset
\prod_{j=1}^d\bigl(\operatorname{supp}(s_{m_j})\cap[0,1]\bigr)
\subset
\prod_{j=1}^d \Omega_{m_j}
=
\Omega_{\vm}.
\]
This completes the proof.
\end{proof}
Here we would like to remark that the constant \(J\) will be chosen later, depending on the approximation error. The total number of partition-of-unity elements is \(2^d\), which is independent of \(J\). For the convenience of the reader, Fig.~\ref{fig:omega-sij} illustrates the domains \(\Omega_{(i,j)}\) and the corresponding partition-of-unity functions \(s_{(i,j)}\) in the case \(J=3\) and \(d=2\).

\begin{figure}[ht]
    \centering
    \begin{subfigure}[t]{0.41\textwidth}
        \centering
        \includegraphics[width=\textwidth]{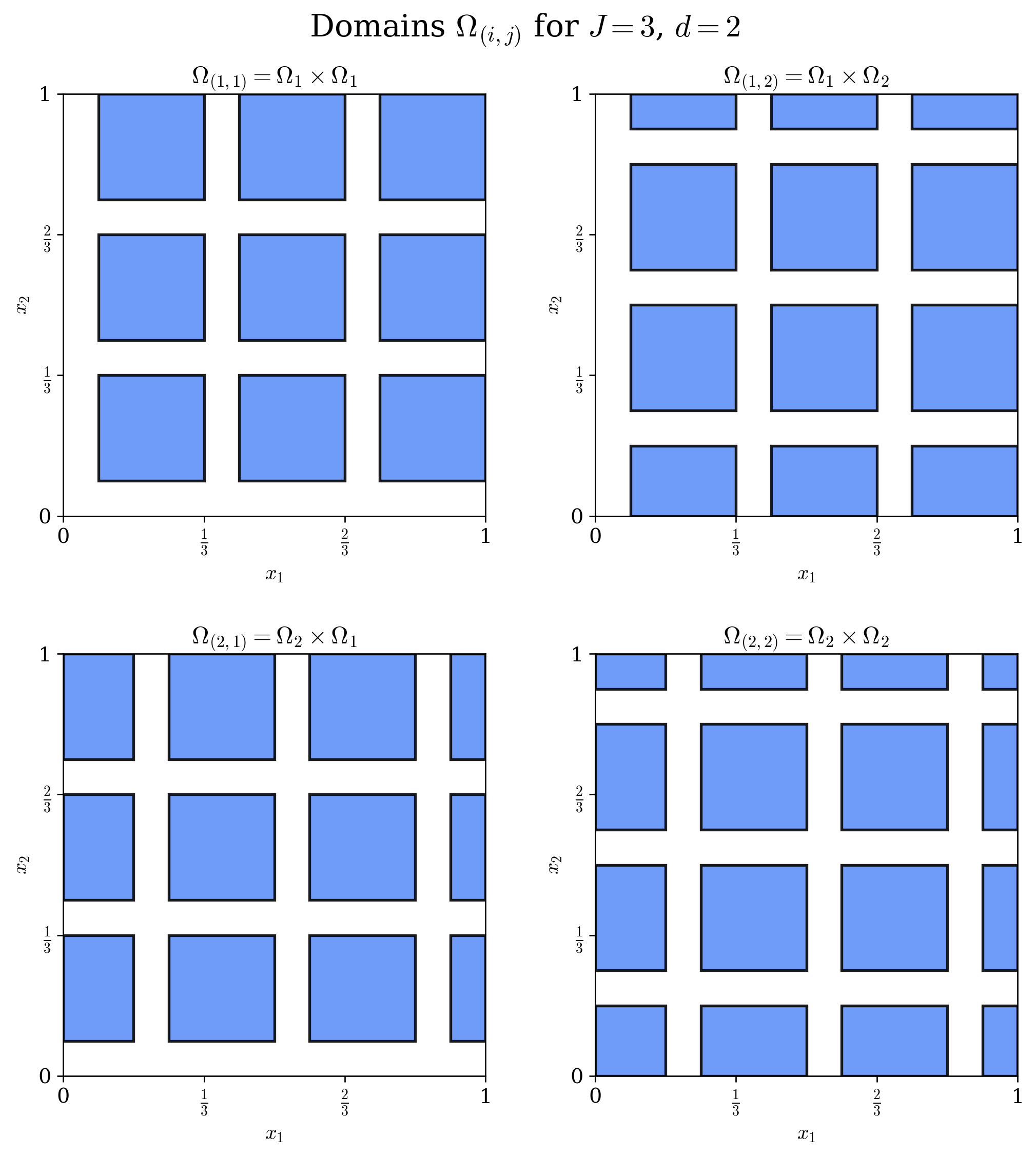}
        \caption{The domains \(\Omega_{(1,1)}\), \(\Omega_{(1,2)}\), \(\Omega_{(2,1)}\), \(\Omega_{(2,2)}\) for \(J=3\) and \(d=2\).}
        \label{fig:omega-sets}
    \end{subfigure}
    \hfill
    \begin{subfigure}[t]{0.48\textwidth}
        \centering
        \includegraphics[width=\textwidth]{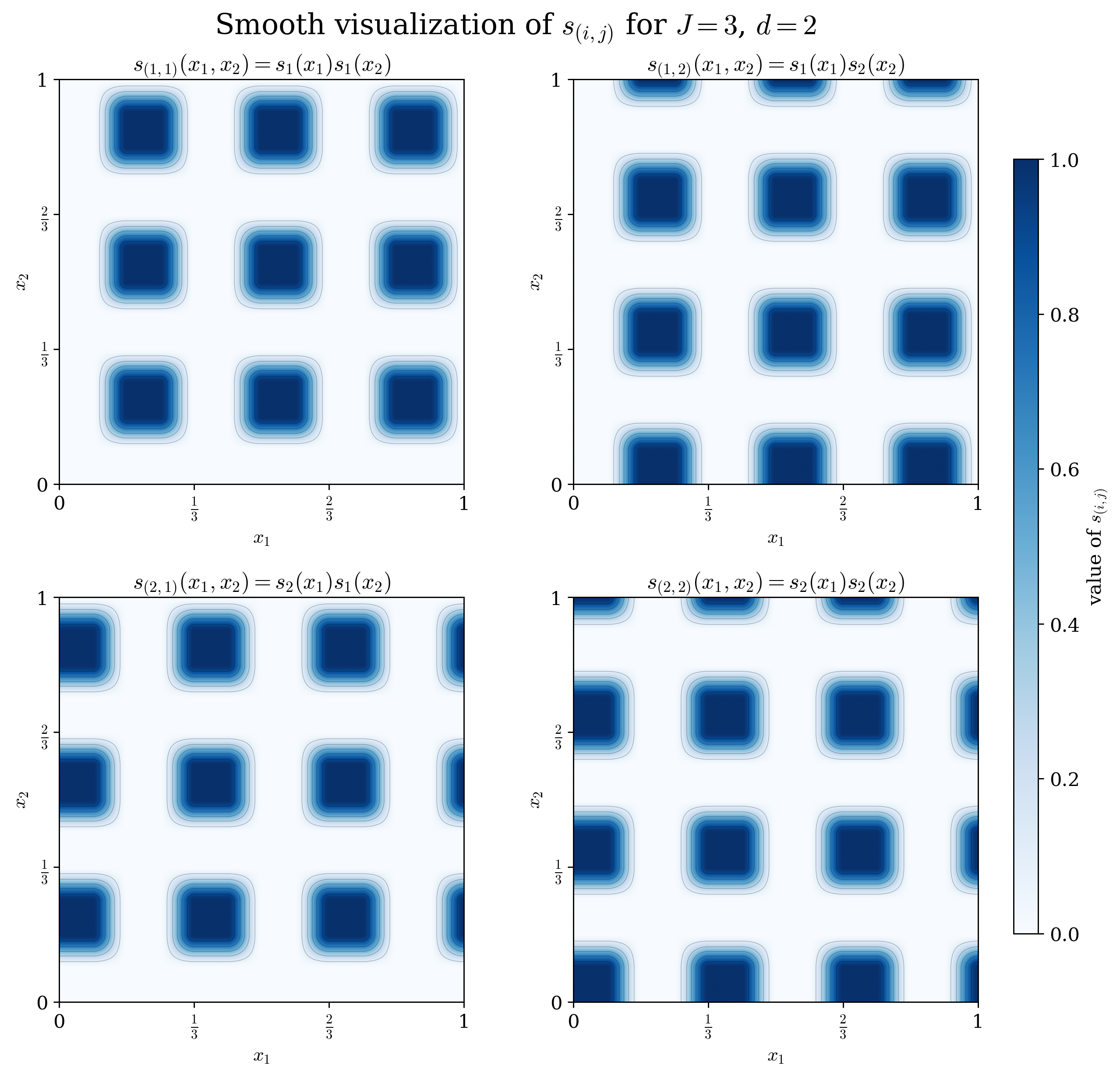}
        \caption{The corresponding partition of unity functions \(s_{(1,1)},s_{(1,2)},s_{(2,1)},s_{(2,2)}\).}
        \label{fig:sij-sets}
    \end{subfigure}
    \caption{Illustration of the domains \(\Omega_{\vm}\) and the associated partition of unity functions \(s_{\vm}\) in the case \(J=3\) and \(d=2\).}
    \label{fig:omega-sij}
\end{figure}

\begin{proposition}\label{app pou}
    For any $\vm\in\{1,2\}^d$, $N,L>0$ with $\log_2N\le L$, set $J=\lfloor N^{1/d}\rfloor^2\lfloor L^{2/d}\rfloor$, then for any $\varepsilon>0$, there is a neural network with depth $3L+2m+d-1$ and width $d(2N+120m+200)$ whose activation function meets Conditions \ref{condition1} and \ref{condition2} such that \begin{equation}
    \|\phi-s_{\vm}\|_{W^{m,\infty}(\Omega)}\le \varepsilon.\notag
\end{equation}
\end{proposition}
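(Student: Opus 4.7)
My plan is to exploit the tensor-product structure $s_{\vm}(\vx) = \prod_{j=1}^d s_{m_j}(x_j)$, reducing the problem to constructing, for each coordinate $j$, a 1D network $\phi_j$ approximating $s_{m_j}$ in $W^{m,\infty}([0,1])$, and then combining them via the multiplication network of Lemma~\ref{times}. The product estimate in Lemma~\ref{lem:Wm_inf_bounds}(i) controls the $W^{m,\infty}$ error of the product in terms of the individual 1D errors, so it suffices to arrange each factor to have error at most $\varepsilon/(d\,C)$ for a suitable constant $C = C(d,m,\|s_{m_j}\|)$.

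For each 1D function $s_i$ ($i\in\{1,2\}$), I would use the fact that $s_i(x) = \sum_{k=0}^{J} s(4Jx - 1 - 4k)$ is a sum of disjointly-supported shifted copies of the same bump $s$, and evaluate it by composing three subnetworks in series: (a) a step-function subnetwork $\phi_\mathrm{step}(x) \approx \lfloor Jx\rfloor$ produced by Lemma~\ref{lem:step}, which is $W^{m,\infty}$-accurate on the union of safe cell interiors $\bigcup_k [k/J,\,(k+1)/J-\delta]$; (b) a linear layer forming the local coordinate $t(x) = Jx - \phi_\mathrm{step}(x)$; and (c) a fixed-size subnetwork $\psi$ approximating the single-cell bump $y\mapsto s(4y-1)$. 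Subnetwork (c) is built by observing that $s$ is a $C^m$ piecewise polynomial of degree $2m+1$ constructed from $S_m$ (Lemma~\ref{S}), hence is exactly representable by a small $\mathrm{ReLU}^{m+1}$ network via \cite{he2023expressivity}; Lemma~\ref{sqr approixmation} then converts this block to a $\sigma$-network of width $O(m)$ and depth $O(m)$. Setting $\phi_j(x):=\psi(t(x))$ and invoking the composition estimates of Lemma~\ref{lem:Wm_inf_bounds}(ii)--(iii) gives $\phi_j\approx s_i$ on the safe zones.

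The hard part will be analyzing the $\delta$-thin transition strips near each cell boundary $(k+1)/J$, where $\phi_\mathrm{step}$ smoothly interpolates from $k$ to $k+1$ and $t(x)$ swings rapidly; a naive composition could let the argument of $s$ scan through the full support $[0,3]$, yielding a spurious value of order $1$, while the true $s_i$ vanishes to order $m$ at every cell boundary thanks to $S_m^{(p)}(0)=S_m^{(p)}(1)=0$ for $p\le m$. The remedy I have in mind is two-fold: first, choose $\delta$ polynomially small in $\varepsilon$ so that the true $s_i$ itself is $\mathcal{O}(\varepsilon)$ in the $W^{m,\infty}$ norm on the transition strip (quantified by Taylor expansion around the zero at the cell boundary, using the $m$-th order vanishing of $S_m$), so matching it with a near-zero approximation suffices; second, leverage the flexibility in the construction of $\phi_\mathrm{step}$ (its transition profile is not prescribed outside the safe zone) to ensure that $t(x)$ leaves the active support of $\psi$ fast enough, together with the zero-to-order-$m$ boundary behavior of $\psi$ itself, so that the composed function $\psi(t(x))$ decays at the same rate as $s_i$. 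The required bookkeeping uses Lemma~\ref{lem:Wm_inf_bounds} applied to each layer of the composition in succession.

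Finally, I would arrange the $d$ one-dimensional networks in parallel and feed their outputs into the multiplier of Lemma~\ref{times}, which contributes $d-1$ to the depth and $8d-4$ to the width. Summing the contributions — $O(L)$ from Lemma~\ref{lem:step}, $O(m)$ from the bump $\psi$ and the ReLU$^{m+1}$-to-$\sigma$ conversion of Lemma~\ref{sqr approixmation}, and $d-1$ from the product — together with $d$ parallel copies for the width, would yield the claimed depth $3L+2m+d-1$ and width $d(2N+120m+200)$ after absorbing fixed constants into the additive $200$ term and choosing the per-factor tolerance small enough to ensure the overall $W^{m,\infty}(\Omega)$ error is at most $\varepsilon$.
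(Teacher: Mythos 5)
Your overall architecture (approximate each one-dimensional factor $s_{m_j}$, then combine the $d$ factors with the multiplier of Lemma~\ref{times}) matches the paper, and your treatment of the outer bump (spline representation of $s$ through $\mathrm{ReLU}^{j}$ blocks and Lemma~\ref{sqr approixmation}) is essentially the paper's subnetwork $\phi_{1}$. The gap is in the inner function. You form the local coordinate $t(x)=Jx-\phi_{\mathrm{step}}(x)$ with $\phi_{\mathrm{step}}$ from Lemma~\ref{lem:step}; in the limit this is the sawtooth $Jx-\lfloor Jx\rfloor$, which is discontinuous at cell boundaries. Since $\phi_{\mathrm{step}}$ is a continuous network, inside each $\delta$-strip the value $t(x)$ must sweep continuously from (near) $1$ down to (near) $0$ and therefore, by the intermediate value theorem, passes through the \emph{entire} support of the single-cell bump, including its plateau where the bump equals $1$. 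Hence the composition attains values of order $1$ somewhere in every transition strip, while the true $s_{m_j}$ vanishes there: this is an $O(1)$ error already in $L^{\infty}$, and no choice of $\delta$ repairs it. Your remedy (i) only makes the \emph{target} small on the strip, not the approximant; remedy (ii) is impossible, because $t$ cannot "leave the active support fast enough" — it must traverse every intermediate value. (A secondary point: Lemma~\ref{lem:step} gives no $W^{m,\infty}$ control of $\phi_{\mathrm{step}}$ inside the strips at all, so the derivative bookkeeping you need there is not available from that lemma.)

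This is exactly the obstruction the paper flags ("the step-function construction only approximates locally and thus cannot be extended across the entire domain"), and it is why the paper's proof uses a different inner map: a continuous, $2/J$-periodic triangle ("hat") function $\psi$ of height $1/J$, realized by folding a shallow ReLU template about $3L$ times at width $2N$ and then converted to a $\sigma$-network by Proposition~\ref{non-di}. Near its kinks the triangle wave does not sweep through the interior of its range — it stays near the endpoint values $0$ and $1/J$ — and the outer bump $s(4J\cdot-1/2)$ vanishes identically (to all orders) near those endpoint values. Consequently the kinks of the inner map, and the region where the $\sigma$-approximation of $\psi$ is only $L^{\infty}$-accurate, land in the flat zero part of the outer function, and the composition error there is controlled by $\|\phi_{1}\|_{W^{m,\infty}}$ alone. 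To repair your argument you would need to replace the step/sawtooth inner map by such a folded triangle wave (or otherwise guarantee that the inner map's uncontrolled regions are mapped into a set on which the outer function is constant), which also restores the stated width $2N$ and depth $3L$ for the inner block.
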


\begin{proof}
   Denote the activation function as $\sigma$. Without loss of generalization, we consider $\vm=\vm_*:=(1,1,\ldots,1)$. We begin by approximating \(s_1\) on the interval \([0,1]\).  Inspired by \cite{yang2023nearly}, one can write
\[
  s_1\left(x+\frac{1}{8J}\right)\;=\;\bigl(s(4J\,x - 1/2)\bigr)\circ \psi(x),
  \qquad x\in[0,1],
\]
where \(\psi\colon[0,1]\to\mathbb{R}\) is the \(2/J\)–periodic “hat” function of height \(1/J\), restricted to \([0,1]\):
\[
  \psi(x) =
  \begin{cases}
    x - \frac{2k}{J}, 
      & x\in\bigl[\tfrac{2k}{J},\,\tfrac{2k+1}{J}\bigr],\\[6pt]
    \frac{2k+2}{J} - x,
      & x\in\bigl[\tfrac{2k+1}{J},\,\tfrac{2k+2}{J}\bigr],
  \end{cases}
  \quad
  k=0,1,\dots,\Bigl\lfloor\frac{J}{2}\Bigr\rfloor.
\]
Each sub-interval \(\bigl[\tfrac{2k}{J},\tfrac{2k+2}{J}\bigr]\subset[0,1]\) carries one “hat” of height \(1/J\), rising linearly from \(0\) to \(1/J\) then symmetrically falling back to \(0\).  Composing with the affine‐scaled step \(s(4J\,x+1)\) yields the partition element \(s_1\) on \([0,1]\). For $s$, we know that it is a $C^m$-function on $[-1,4]$ with $2m+1$ degree piece-wise polynomial with $6$ interpolation points $\{-1,0,1,2,3,4\}$, based on \cite{de1978practical}, which can be represented as following:\begin{equation}
s(x)=\sum_{i=0}^3\sum_{j=m+1}^{2m+1}c_{ij}\text{ReLU}^j(x-i)+p_{2m+1}(x),\notag
\end{equation}where $p(x)$ is a $2m+1$ degree polynomial. Then based on Lemma \ref{p of ReLU}, we know for any $\varepsilon_1>0$, there is a neural network $\phi_{1,1}$ with depth $2m+1$ and width smaller than $4\cdot 20\cdot (m+2)$ whose activation function meets Conditions \ref{condition1} and \ref{condition2} such that\begin{equation}
\left\|\sum_{i=0}^3\sum_{j=m+1}^{2m+1}c_{ij}\text{ReLU}^j(x-i)-\phi_{1,1}\right\|_{W^{m,\infty}([-1,4])}\le \varepsilon_1/2.\notag
\end{equation} Based on Lemma \ref{poly}, we know for any $\varepsilon_1>0$, there is a $\sigma$ neural network $\phi_{1,2}$ with depth $2m\ge (m+1)\log_{m+1}(2m+1)$ and width smaller than $20(2m+2)$ such that\begin{equation}
\|p_{2m+1}(x)-\phi_{1,2}\|_{W^{m,\infty}([-1,4])}\le \varepsilon_1/2.\notag
\end{equation} 
Then define $\phi_1(x)=\phi_{1,1}(4J\,x - 1/2)\bigr)+\phi_{1,2}(4J\,x - 1/2)\bigr)$ with depth $2m+1$ and width $a_m$ such that \begin{equation}
    \|s(4J\,x - 1/2)\bigr)-\phi_1\|_{W^{m,\infty}([0,1/J])}\le \|s(4J\,x - 1/2)\bigr)-\phi_1\|_{W^{m,\infty}([-1/8J,1/8J+1/J])} \le \varepsilon_1,\label{small epsilon1}
\end{equation}where \[a_m=20(2m+2)+80(m+2)=120m+200.\] To construct $\psi$, proceed as follows: first build a shallow ReLU network that realizes on the interval $\bigl[0,\frac{1}{\lfloor N^{1/d}\rfloor\lfloor L^{2/d}\rfloor}\bigr]$ a piecewise-linear “hat” function consisting of $\lfloor N^{1/d}\rfloor/2$ identical hats; this requires a single hidden layer of width $3\lfloor N^{1/d}\rfloor/2\le 2N$.  Next, fold that template $\lceil \log_{2}(\lfloor N^{1/d}\rfloor\lfloor L^{2/d}\rfloor)\rceil\le\frac{2}{d}\log_{2}L+\frac{1}{d}\log_{2}N\le 3L$ times, so that $\psi$ is obtained as the composition of at most $3L$ such hat functions.  In summary, $\psi$ can be implemented by a ReLU network of width $2N$ and depth $3L$, with $J=\lfloor N^{1/d}\rfloor^{2}\lfloor L^{2/d}\rfloor$ and any $L\ge\log_{2}N$. Now we consider the approximation, we divide the domain into two part (Figure~\ref{BB}):\begin{align}
    B_1:=\bigcup_{k=0,1,\dots,\Bigl\lfloor\frac{J}{2}\Bigr\rfloor}\left(\bigl[\tfrac{2k}{J},\,\tfrac{32k+1}{16J}\bigr]\cup \bigl[\tfrac{32k+15}{16J},\,\tfrac{32k+17}{16J}\bigr]\cup\bigl[\tfrac{32k+31}{16J},\,\tfrac{2k+2}{J}\bigr]\right)\bigcap [0,1],~B_2:=[0,1]\backslash B_1.\notag
\end{align}\begin{figure}
    \centering
    \includegraphics[width=0.77\linewidth]{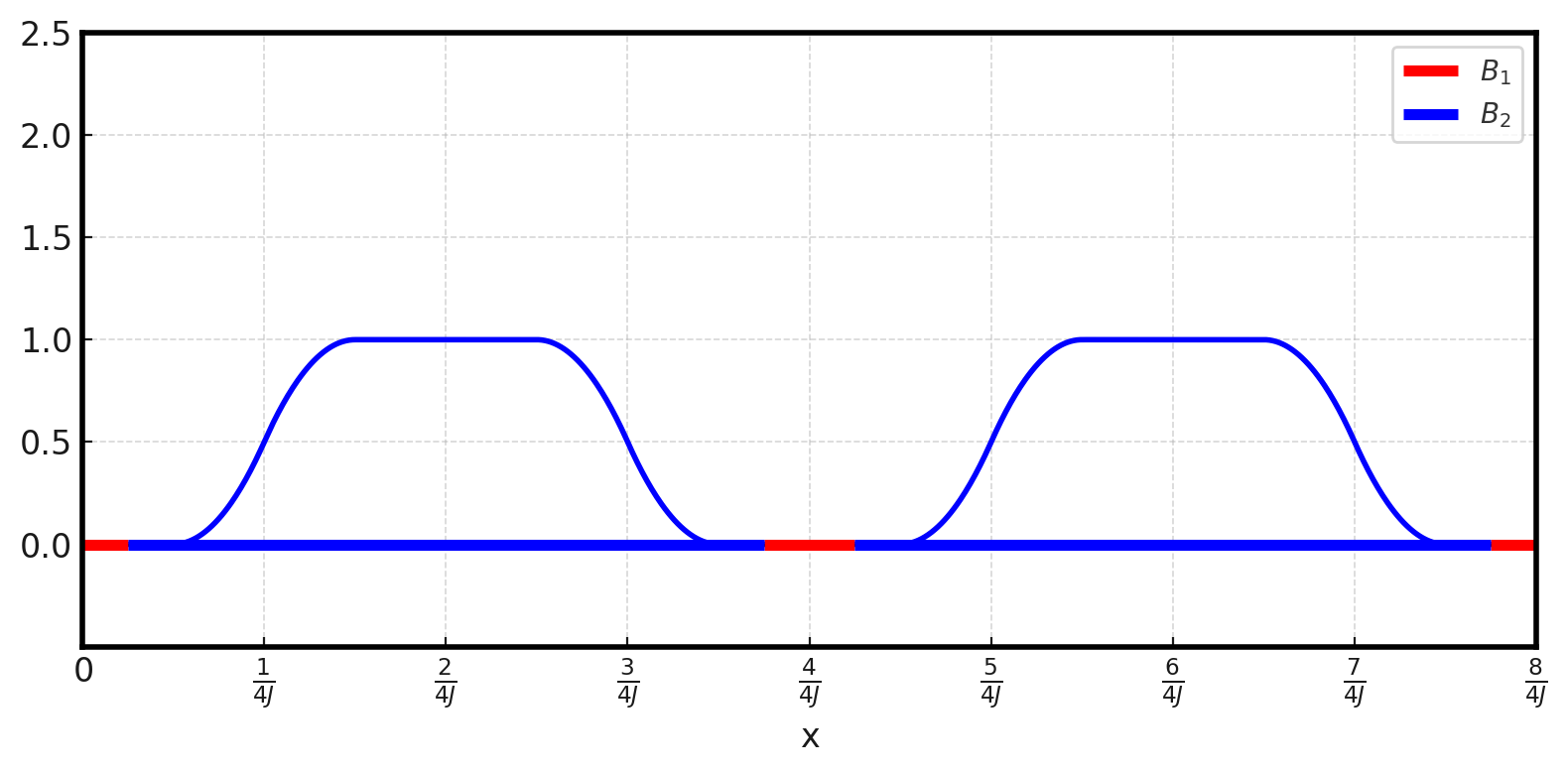}
    \caption{$B_1$ and $B_2$.}
    \label{BB}
\end{figure}It is easy to check that all the non-smooth point of $\psi$ is in $B_1$, therefore, for any $\varepsilon_2>0$, we can find a $\sigma$ neural network $\phi_2$ to make that \begin{equation}
    \|\psi-\phi_2\|_{W^{m,\infty}(B_2)}\le \varepsilon_2, \|\psi-\phi_2\|_{C(\Omega)}\le \varepsilon_2, \notag
\end{equation}based on Proposition \ref{non-di}. We choose \(\varepsilon_2<\frac{1}{32J}\), so that the values of \(\phi_2\) on \(B_1\) remain close to those of \(\psi\) on \(B_1\), within an error of order \(\frac{1}{32J}\). More precisely,
\[
\phi_2(B_1)\subset\left[-\frac{1}{32J},\frac{3}{32J}\right]\cup\left[\frac{29}{32J},\frac{33}{32J}\right].
\]
On this set, the function \(s(4Jx-\tfrac12)\) vanishes. Therefore, by \eqref{small epsilon1}, we obtain
\[
\|\phi_1\|_{W^{m,\infty}(\phi_2(B_1))}\le \varepsilon_1.
\]
Therefore, let us consider the error \begin{align}
    &\left\|\;\bigl(s(4J\,x - 1/2)\bigr)\circ \psi(x)-\phi_1\circ\phi_2\right\|_{W^{m,\infty}([0,1])}\notag\\\le& \left\|\;\bigl(s(4J\,x - 1/2)\bigr)\circ \psi(x)-\phi_1\circ\phi_2\right\|_{W^{m,\infty}(B_1)}+\left\|\;\bigl(s(4J\,x - 1/2)\bigr)\circ \psi(x)-\phi_1\circ\phi_2\right\|_{W^{m,\infty}(B_2)}.\notag
\end{align} For the first part, we have that \begin{align}
    \left\|\;\bigl(s(4J\,x - 1/2)\bigr)\circ \psi(x)-\phi_1\circ\phi_2\right\|_{W^{m,\infty}(B_1)}=\left\|\phi_1\circ\phi_2\right\|_{W^{m,\infty}(B_1)}\le C_*\varepsilon_1\|\phi_2\|_{W^{m,\infty}(B_1)}.\notag
\end{align}For the second part, we have that \begin{align}
    &\left\|\;\bigl(s(4J\,x - 1/2)\bigr)\circ \psi-\phi_1\circ\phi_2\right\|_{W^{m,\infty}(B_2)}\notag\\\le &\left\|\;\bigl(s(4J\,x - 1/2)\bigr)\circ \psi-\;\bigl(s(4J\,x - 1/2)\bigr)\circ\phi_2\right\|_{W^{m,\infty}(B_2)}+\left\|\;\bigl(s(4J\,x - 1/2)\bigr)\circ\phi_2-\phi_1\circ\phi_2\right\|_{W^{m,\infty}(B_2)}.\notag
\end{align}For the second term, it can bounded by \begin{align}
   \left\|\;\bigl(s(4J\,x - 1/2)\bigr)\circ \phi_2-\phi_1\circ\phi_2\right\|_{W^{m,\infty}(B_2)}\le &C\left\|\;\bigl(s(4J\,x - 1/2)\bigr)-\phi_1\right\|_{W^{m,\infty}([0,1/J])} \|\phi_2\|_{W^{m,\infty}(B_2)}\notag\\\le&C_*\varepsilon_1\|\phi_2\|_{W^{m,\infty}(B_2)}.\notag
\end{align} For the first term, we first choose $\varepsilon_2$ smaller than 1 such that\begin{align}
    &\left\|\;\bigl(s(4J\,x - 1/2)\bigr)\circ \psi-\;\bigl(s(4J\,x - 1/2)\bigr)\circ\phi_2\right\|_{W^{m,\infty}(B_2)}\notag\\\le&C(\|s(4J\,x - 1/2)\|_{W^{m+1,\infty}([0,1/J])},\|\psi\|_{W^{m,\infty}(B_2)},(1+\|\psi\|_{W^{m,\infty}(B_2)}))\varepsilon_2. \notag
\end{align}Therefore, for any \(\varepsilon>0\), we first choose
\[
\varepsilon_2\le
\frac{\varepsilon}{
3\,C\!\left(
\|s(4Jx-\tfrac12)\|_{W^{m+1,\infty}([0,1/J])},
\|\psi\|_{W^{m,\infty}(B_2)},
1+\|\psi\|_{W^{m,\infty}(B_2)}
\right)},
\]
so that \(\phi_2\) is fixed once this \(\varepsilon_2\) is determined. Then we choose
\[
\varepsilon_1\le \frac{\varepsilon}{3C_*\|\phi_2\|_{W^{m,\infty}(\Omega)}}.
\]
With these choices, we obtain
\begin{align}
    \left\|\bigl(s(4Jx-\tfrac12)\bigr)\circ \psi-\phi_1\circ\phi_2\right\|_{W^{m,\infty}([0,1])}
    \le \varepsilon.
    \notag
\end{align}
Here, \(\phi_1\circ\phi_2\) is a \(\sigma\)-neural network with depth \(3L+2m\) and width \(2N+120m+200\), where
\[
J=\lfloor N^{1/d}\rfloor^2\lfloor L^{2/d}\rfloor,
\qquad L\ge \log_2 N.
\]

Finally, by Lemma~\ref{times}, for any \(\varepsilon>0\), there exists a \(\sigma\)-neural network \(\phi\) with depth \(3L+2m+d-1\) and width \(d(2N+120m+200)\) such that
\begin{equation}
    \|\phi-s_{\vm_*}\|_{W^{m,\infty}(\Omega)}\le \varepsilon.
    \notag
\end{equation}
This completes the proof.
\end{proof}

\subsection{Super-convergence rate in \texorpdfstring{$W^{m,\infty}$}{W^{m,infty}} for deep networks whose activation function meets Conditions \ref{condition1} and \ref{condition2}}
Now let us establish the deep neural network whose activation function meets Conditions \ref{condition1} and \ref{condition2} to approximate the function in Sobolev spaces measure by $W^{m,\infty}$-norm. First of all, we use following lemma to approximate any $f\in W^{n,\infty}$ on the subset $\Omega_{\vm}\in\Omega$ for any $\vm\in\{1,2\}^d$. Before we show Proposition \ref{fN}, we define subsets of $\Omega_{\vm}$ for simplicity notations.
		
For any $\vm\in\{1,2\}^d$, we define \begin{equation}
			\Omega_{\vm,\vi}:=[0,1]^d\cap\prod_{j=1}^d\left[\frac{2i_j-1_{m_j< 2}}{2J},\frac{3+4i_j-2\cdot1_{m_j< 2}}{4J}\right]\notag
		\end{equation}$\vi=(i_1,i_2,\ldots,i_d)\in\{0,1\ldots,J\}^d$, and it is easy to check $\bigcup_{\vi\in\{0,1\ldots,J\}^d}\Omega_{\vm,\vi}=\Omega_{\vm}$. 
  
\begin{proposition}\label{fN}
Let \(J,n\in\mathbb{N}_+\) and let \(m\in\mathbb{N}\) satisfy \(0\le m<n\). Then, for any \(f\in W^{n,\infty}(\Omega)\) and any \(\vm\in\{1,2\}^d\), there exists a piecewise polynomial function
\[
f_{J,\vm}(\vx)=\sum_{|\valpha|\le n-1} g_{f,\valpha,\vm}(\vx)\vx^{\valpha}
\]
on \(\Omega_{\vm}\) (see Lemma~\ref{lem:sm-support}) such that
\[
\|f-f_{J,\vm}\|_{W^{s,\infty}(\Omega_{\vm})}
\le C_1(n,d)\|f\|_{W^{n,\infty}(\Omega_{\vm})}J^{-(n-s)},
\qquad s=0,1,\dots,m.
\]
Furthermore, for each multi-index \(\valpha\) with \(|\valpha|\le n-1\), the coefficient function
\(
g_{f,\valpha,\vm}:\Omega_{\vm}\to\mathbb{R}
\)
is piecewise constant with respect to the partition \(\{\Omega_{\vm,\vi}\}\), and satisfies
\[
|g_{f,\valpha,\vm}(\vx)|
\le C_2(n,d)\|f\|_{W^{n-1,\infty}(\Omega_{\vm})},
\qquad \forall\,\vx\in\Omega_{\vm}.
\]
Here \(C_1\) and \(C_2\) are constants independent of \(J\), and
\[
\Omega_{\vm,\vi}=\prod_{j=1}^d \Omega_{m_j,i_j},
\qquad
\vi=(i_1,\dots,i_d),\quad i_j\in\{0,\dots,J\}.
\]
Moreover,
\[
\Omega_{1,i}=\left[\frac{i}{J}+\frac{1}{4J},\,\frac{i+1}{J}\right]\cap [0,1],
\qquad
\Omega_{2,i}=\left[\frac{i}{J}-\frac{1}{4J},\,\frac{i}{J}+\frac{1}{2J}\right]\cap [0,1],
\]
for \(i=0,\dots,J\).
\end{proposition}

\begin{proof}
This proposition is essentially a version of the Bramble--Hilbert lemma. Therefore, the result follows from a direct application of the Bramble--Hilbert lemma. For completeness, we provide the details in the appendix.
\end{proof}

        Then next proposition is built $\sigma$ neural networks to approximate $f_{J,\vm}$ defined in Proposition \ref{fN} for any $\vm\in\{1,2\}$ on $\Omega_{\vm}$.

        \begin{proposition}\label{first}
			For any $f\in W^{n,\infty}(\Omega)$, $m, N, L\in\sN_+$ with $0\le m<n$, and $\vm=(m_1,m_2,\ldots,m_d)\in\{1,2\}^d$, there is a neural network $\phi_{\vm}$ with width $192n^{d+1}(N+1)\log_2(8N)$ and depth $10(L+2) \log _2(4 L)+8 L+n+11$ whose activation function meets Conditions \ref{condition1} and \ref{condition2} such that\begin{align}\|f-\phi_{\vm}\|_{W^{s,\infty}(\Omega_{\vm})}\le C_5(n,d)\|f\|_{W^{n,\infty}(\Omega_{\vm})}N^{-2(n-s)/d}L^{-2(n-s)/d},\label{app1}\end{align}
				for $s=0,1,\ldots,m$, where $C_5=C_1+(5n^d+2)C_2$ is the constant independent with $N,L$.
		\end{proposition}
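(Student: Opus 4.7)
The plan is to build the network $\phi_{\vm}$ by faithfully realizing, in a $\sigma$-network, the piecewise polynomial approximant $f_{J,\vm} = \sum_{|\valpha|\le n-1} g_{f,\valpha,\vm}(\vx)\,\vx^{\valpha}$ provided by Proposition~\ref{fN}, with the index choice $J=\lfloor N^{1/d}\rfloor^2\lfloor L^{2/d}\rfloor$. Since each coefficient $g_{f,\valpha,\vm}$ is constant on every cell $\Omega_{\vm,\vi}$, the task decouples into three ingredients: (i) a coordinatewise ``index encoder'' that returns $i_j$ when $x_j$ lies in the $j$-th slot corresponding to $\Omega_{\vm,\vi}$, (ii) a ``look-up table'' that maps the integer tuple $\vi$ to the constant value $g_{f,\valpha,\vm}$ for each fixed $\valpha$, and (iii) a monomial subnetwork that produces $\vx^{\valpha}$. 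These pieces are then multiplied and summed over the $O(n^d)$ multi-indices with $|\valpha|\le n-1$.

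Concretely, for each coordinate $j=1,\dots,d$, I would apply Lemma~\ref{lem:step} (with $\delta$ chosen small enough that every cell $\Omega_{\vm,\vi}$ lies in the ``good'' set on which the step function is accurately represented) to obtain a subnetwork of width $O(N)$ and depth $O(L)$ that approximates the integer index $i_j$ in $W^{m,\infty}$ on $\Omega_{\vm}$. Concatenating these $d$ subnetworks gives an approximation of the vector index $\vi(\vx)\in\{0,\dots,J\}^d$. Since $J^d = \lfloor N^{1/d}\rfloor^{2d}\lfloor L^{2/d}\rfloor^d \le N^2 L^2$, Lemma~\ref{point} supplies, for each of the $\binom{n-1+d}{d}\lesssim n^d$ multi-indices $\valpha$, a fixed-depth $O(L\log L)$, width $O(nN\log N)$ subnetwork that interpolates the required $\le J^d$ constant values $g_{f,\valpha,\vm}|_{\Omega_{\vm,\vi}}$ to within error $J^{-2n}$. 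Composing the index encoder with this look-up network, and then using Lemma~\ref{timess} (depth $n-1$, width $O(n)$) to approximate $\vx^{\valpha}$ and Lemma~\ref{time} to multiply, yields, after summation over $\valpha$, a $\sigma$-network of the stated width $192\,n^{d+1}(N+1)\log_2(8N)$ and depth $10(L+2)\log_2(4L)+8L+n+11$.

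For the error analysis, I split
\[
\|f-\phi_{\vm}\|_{W^{s,\infty}(\Omega_{\vm})} \le \|f-f_{J,\vm}\|_{W^{s,\infty}(\Omega_{\vm})} + \|f_{J,\vm}-\phi_{\vm}\|_{W^{s,\infty}(\Omega_{\vm})}.
\]
Proposition~\ref{fN} bounds the first term by $C_1\|f\|_{W^{n,\infty}(\Omega_{\vm})}J^{-(n-s)}$, which is $O(N^{-2(n-s)/d}L^{-2(n-s)/d})$ by the choice of $J$. For the second term, since every constituent subnetwork can be made arbitrarily accurate (at sizes independent of the chosen accuracy), I can drive $\|f_{J,\vm}-\phi_{\vm}\|_{W^{s,\infty}(\Omega_{\vm})}$ below any prescribed tolerance by invoking the product and composition estimates of Lemma~\ref{lem:Wm_inf_bounds}. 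In particular, the composition errors picked up from ``step encoder $\to$ look-up'' and the product errors from multiplying the piecewise-constant coefficients by the monomials are controlled in terms of the $W^{m,\infty}$-norms of the intermediate networks, which are uniformly bounded by $\|f\|_{W^{n-1,\infty}(\Omega_{\vm})}$ (via the bound on $g_{f,\valpha,\vm}$ in Proposition~\ref{fN}) and by constants depending only on $n,d$. Bookkeeping the overall constant gives $C_5=C_1+(5n^d+2)C_2$.

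The main obstacle is coordinating the step-function gaps in Lemma~\ref{lem:step} with the cell geometry of $\Omega_{\vm,\vi}$: the step function is only $W^{m,\infty}$-close to a constant on each interval $[k/J,(k+1)/J-\delta]$, so the cells $\Omega_{\vm,\vi}$ must be positioned strictly inside the good sub-intervals; this is exactly why the definition of $\Omega_{\vm}$ uses sub-intervals of width $3/(4J)$ rather than $1/J$. After this alignment, the remaining work is careful bookkeeping of constants and the width/depth bookkeeping to match the stated complexity. The summation over the $O(n^d)$ multi-indices is where the $n^{d+1}$ factor in the width enters, since each of the $\lesssim n^d$ terms contributes the factor $O(nN\log N)$ from Lemma~\ref{point}.
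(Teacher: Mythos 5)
Your proposal follows essentially the same route as the paper's proof: the same choice of $J$, the same three-ingredient decomposition (step-function index encoder from Lemma~\ref{lem:step}, bit-extraction look-up from Lemma~\ref{point}, monomials from Lemma~\ref{timess}, glued by Lemma~\ref{time}), the same error split via Proposition~\ref{fN}, and the same observation about aligning the cells $\Omega_{\vm,\vi}$ with the good sub-intervals. The only imprecision is the claim that the second error term can be driven below \emph{any} tolerance — the look-up error from Lemma~\ref{point} is fixed at $O(N^{-2n}L^{-2n})$ for given $N,L$ rather than arbitrarily small — but since this is negligible against the dominant $J^{-(n-s)}$ term, the argument goes through exactly as in the paper.
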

		
		\begin{proof}
Denote the activation function by \(\sigma\). Without loss of generality, we consider the case
\[
\vm_*=(1,1,\ldots,1)\in\{1,2\}^d.
\]

By Proposition~\ref{fN}, setting
\[
J=\lfloor N^{1/d}\rfloor^2\lfloor L^{2/d}\rfloor,
\]
we obtain
\begin{equation}\label{eq:local-poly-error}
\|f-f_{J,\vm_*}\|_{W^{s,\infty}(\Omega_{\vm_*})}
\le
C_1(n,d)\|f\|_{W^{n,\infty}(\Omega_{\vm_*})}
N^{-2(n-s)/d}L^{-2(n-s)/d},
\qquad s=0,1,\ldots,m,
\end{equation}
where
\[
f_{J,\vm_*}(\vx)=\sum_{|\valpha|\le n-1} g_{f,\valpha,\vm_*}(\vx)\vx^{\valpha},
\qquad \vx\in\Omega_{\vm_*}.
\]
Moreover, for each \(\valpha\), the function \(g_{f,\valpha,\vm_*}\) is constant on every cube
\[
\prod_{j=1}^d
\left[\frac{i_j}{J}+\frac{1}{4J},\,\frac{i_j+1}{J}\right],
\qquad
\vi=(i_1,\ldots,i_d)\in\{0,1,\ldots,J-1\}^d.
\]
Therefore, it remains to approximate \(f_{J,\vm_*}\) by a neural network.

We first approximate the coefficient functions \(g_{f,\valpha,\vm_*}\). By Lemma~\ref{lem:step}, for any \(\varepsilon_1>0\), there exists a \(\sigma\)-network \(\phi_1\) with width \(48N+36\) and depth \(8L+10\) such that
\[
\|\phi_1-s\|_{W^{m,\infty}(\Omega_1)}\le \varepsilon_1,
\]
where
\[
s(x)=k,
\qquad
x\in\left[\frac{k}{J}+\frac{1}{4J},\,\frac{k+1}{J}\right],
\qquad
k=0,1,\ldots,J-1.
\]
Here \(\Omega_1\) is defined in Lemma~\ref{lem:sm-support}. In this step, we use a translated version of the neural network constructed in Lemma~\ref{lem:step}, choosing
\[
\delta=\frac{1}{4J}\le \frac{1}{3J}.
\]
Define
\[
\vphi_2(\vx)
=
\left[
\frac{\phi_1(x_1)}{J},
\frac{\phi_1(x_2)}{J},
\ldots,
\frac{\phi_1(x_d)}{J}
\right]^\top,
\qquad
\vs(\vx)
=
\left[
\frac{s(x_1)}{J},
\frac{s(x_2)}{J},
\ldots,
\frac{s(x_d)}{J}
\right]^\top.
\]

For each \(p=0,1,\ldots,J^d-1\), let
\[
\veta(p)=(\eta_1,\eta_2,\ldots,\eta_d)\in\{0,1,\ldots,J-1\}^d
\]
be the unique multi-index satisfying
\[
\sum_{j=1}^d \eta_j J^{j-1}=p.
\]
Since \(J^d\le N^2L^2\), Proposition~\ref{point} applies. Define
\[
\xi_{\valpha,p}
=
\frac{
g_{f,\valpha,\vm_*}\!\left(\frac{\veta(p)}{J}\right)
+
C_2(n,d)\|f\|_{W^{n-1,\infty}(\Omega_{\vm_*})}
}{
2C_2(n,d)\|f\|_{W^{n-1,\infty}(\Omega_{\vm_*})}
}
\in[0,1],
\]
where \(C_2(n,d)\|f\|_{W^{n-1,\infty}(\Omega_{\vm_*})}\) is an upper bound for \(g_{f,\valpha,\vm_*}\) given by Proposition~\ref{fN}. Hence, by Proposition~\ref{point}, there exists a neural network \(\tilde{\phi}_{\valpha}\) with depth
\[
10(L+2)\log_2(4L)
\]
and width
\[
192\,n\,(N+1)\log_2(8N)
\]
such that
\[
|\tilde{\phi}_{\valpha}(p)-\xi_{\valpha,p}|
\le
2N^{-2n}L^{-2n},
\qquad
p=0,1,\ldots,J^d-1.
\]
Define
\[
\phi_{\valpha}(\vx)
=
2C_2(n,d)\|f\|_{W^{n-1,\infty}(\Omega_{\vm_*})}
\tilde{\phi}_{\valpha}\!\left(\sum_{j=1}^d x_jJ^j\right)
-
C_2(n,d)\|f\|_{W^{n-1,\infty}(\Omega_{\vm_*})}.
\]
Then
\[
\left|
\phi_{\valpha}\!\left(\frac{\veta(p)}{J}\right)
-
g_{f,\valpha,\vm_*}\!\left(\frac{\veta(p)}{J}\right)
\right|
\le
4C_2(n,d)\|f\|_{W^{n-1,\infty}(\Omega_{\vm_*})}N^{-2n}L^{-2n}.
\]

We next estimate
\[
\|\phi_{\valpha}\circ\vphi_2-g_{f,\valpha,\vm_*}\|_{W^{m,\infty}(\Omega_{\vm_*})}.
\]
By the triangle inequality,
\begin{align}
\|\phi_{\valpha}\circ\vphi_2-g_{f,\valpha,\vm_*}\|_{W^{m,\infty}(\Omega_{\vm_*})}
\le\;&
\|\phi_{\valpha}\circ\vs-g_{f,\valpha,\vm_*}\|_{W^{m,\infty}(\Omega_{\vm_*})}
\notag\\
&+
\|\phi_{\valpha}\circ\vphi_2-\phi_{\valpha}\circ\vs\|_{W^{m,\infty}(\Omega_{\vm_*})}.
\label{eq:alpha-split}
\end{align}
For the first term, note that \(\phi_{\valpha}\circ\vs-g_{f,\valpha,\vm_*}\) is piecewise constant on \(\Omega_{\vm_*}\). Hence all weak derivatives of positive order vanish almost everywhere on \(\Omega_{\vm_*}\), and therefore
\begin{align}
\|\phi_{\valpha}\circ\vs-g_{f,\valpha,\vm_*}\|_{W^{m,\infty}(\Omega_{\vm_*})}
&=
\|\phi_{\valpha}\circ\vs-g_{f,\valpha,\vm_*}\|_{L^\infty(\Omega_{\vm_*})}
\notag\\
&\le
4C_2(n,d)\|f\|_{W^{n,\infty}(\Omega_{\vm_*})}N^{-2n}L^{-2n}.
\label{eq:alpha-step}
\end{align}
For the second term, if \(\varepsilon_1<1\), then by Lemma~\ref{lem:Wm_inf_bounds},
\begin{align}
\|\phi_{\valpha}\circ\vphi_2-\phi_{\valpha}\circ\vs\|_{W^{m,\infty}(\Omega_{\vm_*})}
\le
C\bigl(
\|\phi_{\valpha}\|_{W^{m+1,\infty}([0,2]^d)},
\|s\|_{W^{m,\infty}(\Omega_{\vm_*})}
\bigr)
\frac{2\varepsilon_1}{J}.
\label{eq:alpha-compose}
\end{align}
Combining \eqref{eq:alpha-split}, \eqref{eq:alpha-step}, and \eqref{eq:alpha-compose}, we obtain
\begin{align}
\|\phi_{\valpha}\circ\vphi_2-g_{f,\valpha,\vm_*}\|_{W^{m,\infty}(\Omega_{\vm_*})}
\le\;&
C\bigl(
\|\phi_{\valpha}\|_{W^{m+1,\infty}([0,2]^d)},
\|s\|_{W^{m,\infty}(\Omega_{\vm_*})}
\bigr)\frac{2\varepsilon_1}{J}
\notag\\
&+
4C_2(n,d)\|f\|_{W^{n,\infty}(\Omega_{\vm_*})}N^{-2n}L^{-2n}.
\label{eq:alpha-final}
\end{align}

Next, by Lemma~\ref{timess}, for any \(\varepsilon_2>0\), there exists a \(\sigma\)-network \(\phi_{3,\valpha}\) with depth \(n-2\) and width \(8n-12\) such that
\[
\|\phi_{3,\valpha}(\vx)-\vx^{\valpha}\|_{W^{m,\infty}(\Omega)}\le \varepsilon_2.
\]
Also, by Lemma~\ref{time}, for any \(\varepsilon_3>0\), there exists a neural network \(\phi_4\) with width \(12\) and depth \(1\) such that
\[
\|\phi_4(x,y)-xy\|_{W^{m,\infty}((-C_3,C_3)^2)}\le \varepsilon_3,
\]
where
\[
C_3(n,d,\varepsilon_1,\varepsilon_2)
=
\max\left\{
C\bigl(
\|\phi_{\valpha}\|_{W^{m+1,\infty}([0,2]^d)},
\|s\|_{W^{m,\infty}(\Omega_{\vm_*})}
\bigr)\frac{2\varepsilon_1}{J}
+
5C_2(n,d)\|f\|_{W^{n,\infty}(\Omega_{\vm_*})},
\,
1+\varepsilon_2
\right\}.
\]

We now define the neural network
\begin{equation}
\phi_{\vm_*}(\vx)
=
\sum_{|\valpha|\le n-1}
\phi_4\bigl(\phi_{\valpha}(\vphi_2(\vx)),\phi_{3,\valpha}(\vx)\bigr)
\label{eq:phi-vmstar}
\end{equation}
to approximate \(f_{J,\vm_*}\) on \(\Omega_{\vm_*}\). Let
\begin{align*}
\fE
:=
\left\|
\sum_{|\valpha|\le n-1}
\phi_4\bigl(\phi_{\valpha}(\vphi_2(\vx)),\phi_{3,\valpha}(\vx)\bigr)
-
f_{J,\vm_*}(\vx)
\right\|_{W^{m,\infty}(\Omega_{\vm_*})}.
\end{align*}
Then
\begin{align}
\fE
\le\;&
\underbrace{
\sum_{|\valpha|\le n-1}
\left\|
\phi_4\bigl(\phi_{\valpha}(\vphi_2(\vx)),\phi_{3,\valpha}(\vx)\bigr)
-
\phi_{\valpha}(\vphi_2(\vx))\phi_{3,\valpha}(\vx)
\right\|_{W^{m,\infty}(\Omega_{\vm_*})}
}_{=:\fE_1}
\notag\\
&+
\underbrace{
\sum_{|\valpha|\le n-1}
\left\|
\phi_{\valpha}(\vphi_2(\vx))\phi_{3,\valpha}(\vx)
-
g_{f,\valpha,\vm_*}(\vx)\phi_{3,\valpha}(\vx)
\right\|_{W^{m,\infty}(\Omega_{\vm_*})}
}_{=:\fE_2}
\notag\\
&+
\underbrace{
\sum_{|\valpha|\le n-1}
\left\|
g_{f,\valpha,\vm_*}(\vx)\phi_{3,\valpha}(\vx)
-
g_{f,\valpha,\vm_*}(\vx)\vx^{\valpha}
\right\|_{W^{m,\infty}(\Omega_{\vm_*})}
}_{=:\fE_3}.
\label{eq:error-split}
\end{align}

For \(\fE_1\), using Lemma~\ref{lem:Wm_inf_bounds}, we obtain
\begin{align}
\fE_1
&\le
\sum_{|\valpha|\le n-1}
C_*\!\left(
\max\{
\|\phi_{\valpha}(\vphi_2(\vx))\|_{W^{m,\infty}(\Omega_{\vm_*})},
\|\phi_{3,\valpha}(\vx)\|_{W^{m,\infty}(\Omega_{\vm_*})}
\}
\right)
\|\phi_4(x,y)-xy\|_{W^{m,\infty}((-C_3,C_3)^2)}
\notag\\
&\le
n^d\,C_*\!\bigl(C_3(n,d,\varepsilon_1,\varepsilon_2)\bigr)\varepsilon_3
=:C_4(n,d,\varepsilon_1,\varepsilon_2)\varepsilon_3,
\label{eq:E1}
\end{align}
where \(C_*\) depends only on \(m\).

For \(\fE_2\), by the product estimate,
\begin{align}
\fE_2
&\le
\sum_{|\valpha|\le n-1}
2^m
\|\phi_{\valpha}(\vphi_2(\vx))-g_{f,\valpha,\vm_*}(\vx)\|_{W^{m,\infty}(\Omega_{\vm_*})}
\|\phi_{3,\valpha}(\vx)\|_{W^{m,\infty}(\Omega_{\vm_*})}
\notag\\
&\le
2^m(1+\varepsilon_2)n^d
\left(
C\bigl(
\|\phi_{\valpha}\|_{W^{m+1,\infty}([0,2]^d)},
\|s\|_{W^{m,\infty}(\Omega_{\vm_*})}
\bigr)\frac{2\varepsilon_1}{J}
+
4C_2(n,d)\|f\|_{W^{n,\infty}(\Omega_{\vm_*})}N^{-2n}L^{-2n}
\right).
\label{eq:E2}
\end{align}

Similarly, for \(\fE_3\),
\begin{align}
\fE_3
&\le
\sum_{|\valpha|\le n-1}
2^m
\|g_{f,\valpha,\vm_*}\|_{W^{m,\infty}(\Omega_{\vm_*})}
\|\phi_{3,\valpha}(\vx)-\vx^{\valpha}\|_{W^{m,\infty}(\Omega_{\vm_*})}
\notag\\
&\le
2^m n^d C_2(n,d)\|f\|_{W^{n,\infty}(\Omega_{\vm_*})}\varepsilon_2.
\label{eq:E3}
\end{align}

We now choose the parameters in the order \(\varepsilon_1,\varepsilon_2,\varepsilon_3\).

First, choose \(\varepsilon_1\) sufficiently small so that the term
\[
2^m(1+\varepsilon_2)n^d
C\bigl(
\|\phi_{\valpha}\|_{W^{m+1,\infty}([0,2]^d)},
\|s\|_{W^{m,\infty}(\Omega_{\vm_*})}
\bigr)\frac{2\varepsilon_1}{J}
\]
is bounded by
\[
C_2(n,d)\|f\|_{W^{n,\infty}(\Omega_{\vm_*})}N^{-2n}L^{-2n}.
\]
Next, choose \(\varepsilon_2\) sufficiently small so that
\[
\fE_3
\le
n^d C_2(n,d)\|f\|_{W^{n,\infty}(\Omega_{\vm_*})}N^{-2n}L^{-2n}.
\]
Finally, for these choices of \(\varepsilon_1\) and \(\varepsilon_2\), choose \(\varepsilon_3\) such that
\[
\fE_1
\le
C_2(n,d)\|f\|_{W^{n,\infty}(\Omega_{\vm_*})}N^{-2n}L^{-2n}.
\]
Then, by \eqref{eq:E2}, \eqref{eq:E3}, and the above choices,
\[
\fE
\le
(5n^d+2)\,C_2(n,d)\|f\|_{W^{n,\infty}(\Omega_{\vm_*})}N^{-2n}L^{-2n}.
\]
Since \(N^{-2n}L^{-2n}\) is stronger than
\[
N^{-2(n-s)/d}L^{-2(n-s)/d},
\qquad s=0,1,\ldots,m,
\]
this yields the desired rate for the approximation of \(f_{J,\vm_*}\).

It remains to estimate the width and depth of \(\phi_{\vm_*}\). From \eqref{eq:phi-vmstar}, each branch consists of the following subnetworks:
\begin{enumerate}
\item \(\phi_{3,\valpha}\), with depth \(n-2\) and width \(8n-12\);
\item \(\vphi_2\), induced by \(\phi_1\), with width \(48N+36\) and depth \(8L+10\);
\item \(\phi_{\valpha}\), with depth \(10(L+2)\log_2(4L)\) and width \(192\,n\,(N+1)\log_2(8N)\);
\item \(\phi_4\), with width \(12\) and depth \(1\).
\end{enumerate}
Since the number of multi-indices \(\valpha\) satisfying \(|\valpha|\le n-1\) is at most \(n^d\), the resulting network \(\phi_{\vm_*}\) has width
\[
192n^{d+1}(N+1)\log_2(8N)
\]
and depth
\[
10(L+2)\log_2(4L)+8L+n+11.
\]

Combining this with \eqref{eq:local-poly-error}, we conclude that there exists a neural network \(\phi_{\vm_*}\) with the above width and depth such that
\[
\|f-\phi_{\vm_*}\|_{W^{s,\infty}(\Omega_{\vm_*})}
\le
C_5(n,d)\|f\|_{W^{n,\infty}(\Omega_{\vm_*})}
N^{-2(n-s)/d}L^{-2(n-s)/d},
\qquad s=0,1,\ldots,m,
\]
where
\[
C_5=C_1+(5n^d+2)C_2
\]
is independent of \(N\) and \(L\).

The same argument applies to any \(\vm\in\{1,2\}^d\). Therefore, for each such \(\vm\), there exists a neural network \(\phi_{\vm}\) satisfying \eqref{app1}. This completes the proof.
\end{proof}

            Now we combine with partition of unity to  achieve the approximation in the whole domain:\begin{theorem}\label{firstmian}
			Let \(n,m,d\in\mathbb{N}_+\) with \(m<n\), and let \(\Omega=[0,1]^d\). Assume that the activation function \(\sigma\) satisfies Conditions~\ref{condition1} and~\ref{condition2}. Then, for any \(f\in W^{n,\infty}(\Omega)\) and any \(N,L\in\mathbb{N}_+\), there exists a \(\sigma\)-neural network \(\phi\) such that
\[
\|f-\phi\|_{W^{m,\infty}(\Omega)}
\le
C(n,d,m)\|f\|_{W^{n,\infty}(\Omega)}
N^{-2(n-m)/d}L^{-2(n-m)/d},
\]
where \(C(n,d,m)\) is independent of \(N\), \(L\), and \(f\). Moreover, \(\phi\) can be chosen to have width
\[
2^d\,192\,n^{d+1}(N+1)\log_2(8N)
\]
and depth
\[
10(L+2)\log_2(4L)+8L+n+d+12.
\]
\end{theorem}

        \begin{proof}
Denote the activation function by \(\sigma\). By Proposition~\ref{first}, for each
\(\vm\in\{1,2\}^d\), there exists a \(\sigma\)-network \(\phi_{\vm}\) such that
\begin{equation}\label{eq:local-approx-thm}
\|f-\phi_{\vm}\|_{W^{m,\infty}(\Omega_{\vm})}
\le
C_5(n,d)\|f\|_{W^{n,\infty}(\Omega)}
N^{-2(n-m)/d}L^{-2(n-m)/d},
\end{equation}
where \(C_5\) is independent of \(N\) and \(L\). Moreover, each \(\phi_{\vm}\) has width
\[
192\,n^{d+1}(N+1)\log_2(8N)
\]
and depth
\[
10(L+2)\log_2(4L)+8L+n+11.
\]

Next, by Proposition~\ref{app pou}, for any \(\varepsilon_1>0\), there exists a family of
\(\sigma\)-networks \(\{\psi_{\vm}\}_{\vm\in\{1,2\}^d}\) such that
\[
\|\psi_{\vm}-s_{\vm}\|_{W^{m,\infty}(\Omega)}\le \varepsilon_1,
\qquad
\vm\in\{1,2\}^d,
\]
where \(\{s_{\vm}\}_{\vm\in\{1,2\}^d}\) is the partition of unity given in
Lemma~\ref{lem:sm-support}. In particular,
\[
\sum_{\vm\in\{1,2\}^d}s_{\vm}(\vx)=1,
\qquad
\operatorname{supp}(s_{\vm})\cap[0,1]^d\subset \Omega_{\vm}.
\]
Each \(\psi_{\vm}\) has width
\[
d(2N+120m+200)
\]
and depth
\[
3L+2m+d-1.
\]

Since \(m<n\), from \eqref{eq:local-approx-thm} we also have
\[
\|\phi_{\vm}\|_{W^{m,\infty}(\Omega_{\vm})}
\le
\|f\|_{W^{m,\infty}(\Omega_{\vm})}
+
\|f-\phi_{\vm}\|_{W^{m,\infty}(\Omega_{\vm})}
\le
(C_5+1)\|f\|_{W^{n,\infty}(\Omega)}.
\]
Choose \(\varepsilon_1<1\). Then
\[
\|\psi_{\vm}\|_{L^\infty(\Omega)}
\le
\|s_{\vm}\|_{L^\infty(\Omega)}+\varepsilon_1
\le 2.
\]
Set
\[
A_0:=(C_5+1)\|f\|_{W^{n,\infty}(\Omega)}+2.
\]

By Lemma~\ref{time}, for any \(\varepsilon_2>0\), there exists a neural network
\(\widehat{\phi}\) with width \(12\) and depth \(1\) such that
\[
\|\widehat{\phi}(x,y)-xy\|_{W^{m,\infty}((-A_0,A_0)^2)}\le \varepsilon_2.
\]

Now define
\[
\phi(\vx):=\sum_{\vm\in\{1,2\}^d}\widehat{\phi}\bigl(\phi_{\vm}(\vx),\psi_{\vm}(\vx)\bigr).
\]
Since \(\sum_{\vm}s_{\vm}=1\) on \(\Omega\), we have
\begin{align*}
\|f-\phi\|_{W^{m,\infty}(\Omega)}
&=
\left\|
\sum_{\vm\in\{1,2\}^d} s_{\vm}f
-
\sum_{\vm\in\{1,2\}^d}\widehat{\phi}(\phi_{\vm},\psi_{\vm})
\right\|_{W^{m,\infty}(\Omega)} \\
&\le
\underbrace{
\left\|
\sum_{\vm\in\{1,2\}^d}
\bigl(s_{\vm}f-\psi_{\vm}\phi_{\vm}\bigr)
\right\|_{W^{m,\infty}(\Omega)}
}_{=: \mathcal R_1}
+
\underbrace{
\left\|
\sum_{\vm\in\{1,2\}^d}
\bigl(\psi_{\vm}\phi_{\vm}-\widehat{\phi}(\phi_{\vm},\psi_{\vm})\bigr)
\right\|_{W^{m,\infty}(\Omega)}
}_{=: \mathcal R_2}.
\end{align*}

We first estimate \(\mathcal R_1\). By the triangle inequality,
\[
\mathcal R_1
\le
\sum_{\vm\in\{1,2\}^d}
\|s_{\vm}f-\psi_{\vm}\phi_{\vm}\|_{W^{m,\infty}(\Omega)}.
\]
For each \(\vm\),
\[
s_{\vm}f-\psi_{\vm}\phi_{\vm}
=
(s_{\vm}-\psi_{\vm})\phi_{\vm}
+
s_{\vm}(f-\phi_{\vm}).
\]
Hence
\[
\|s_{\vm}f-\psi_{\vm}\phi_{\vm}\|_{W^{m,\infty}(\Omega)}
\le
\|(s_{\vm}-\psi_{\vm})\phi_{\vm}\|_{W^{m,\infty}(\Omega)}
+
\|s_{\vm}(f-\phi_{\vm})\|_{W^{m,\infty}(\Omega)}.
\]

For the first term, by Lemma~\ref{lem:Wm_inf_bounds},
\[
\|(s_{\vm}-\psi_{\vm})\phi_{\vm}\|_{W^{m,\infty}(\Omega)}
\le
2^m
\|s_{\vm}-\psi_{\vm}\|_{W^{m,\infty}(\Omega)}
\|\phi_{\vm}\|_{W^{m,\infty}(\Omega_{\vm})}
\le
2^m(C_5+1)\|f\|_{W^{n,\infty}(\Omega)}\,\varepsilon_1.
\]

For the second term, since \(\operatorname{supp}(s_{\vm})\cap[0,1]^d\subset\Omega_{\vm}\),
Lemma~\ref{lem:Wm_inf_bounds} and \eqref{eq:local-approx-thm} give
\begin{align*}
\|s_{\vm}(f-\phi_{\vm})\|_{W^{m,\infty}(\Omega)}
&\le
\sum_{r=0}^m \binom{m}{r}
\|f-\phi_{\vm}\|_{W^{r,\infty}(\Omega_{\vm})}
\|s_{\vm}\|_{W^{m-r,\infty}(\Omega)} \\
&\le
C(m,d)\|s\|_{W^{m,\infty}([0,3])}
\|f\|_{W^{n,\infty}(\Omega)}
\sum_{r=0}^m
J^{\,m-r}
N^{-2(n-r)/d}L^{-2(n-r)/d}.
\end{align*}
Since
\[
J=\lfloor N^{1/d}\rfloor^2\lfloor L^{2/d}\rfloor
\le
N^{2/d}L^{2/d},
\]
we have
\[
J^{m-r}N^{-2(n-r)/d}L^{-2(n-r)/d}
\le
N^{-2(n-m)/d}L^{-2(n-m)/d}.
\]
Therefore,
\[
\|s_{\vm}(f-\phi_{\vm})\|_{W^{m,\infty}(\Omega)}
\le
C_7(n,d,m)\|f\|_{W^{n,\infty}(\Omega)}
N^{-2(n-m)/d}L^{-2(n-m)/d},
\]
where \(C_7\) is independent of \(N\) and \(L\).

Combining the above estimates and summing over \(\vm\in\{1,2\}^d\), we obtain
\begin{equation}\label{eq:R1-est}
\mathcal R_1
\le
2^d\left[
2^m(C_5+1)\|f\|_{W^{n,\infty}(\Omega)}\,\varepsilon_1
+
C_7(n,d,m)\|f\|_{W^{n,\infty}(\Omega)}
N^{-2(n-m)/d}L^{-2(n-m)/d}
\right].
\end{equation}

Next we estimate \(\mathcal R_2\). By the triangle inequality,
\[
\mathcal R_2
\le
\sum_{\vm\in\{1,2\}^d}
\|\psi_{\vm}\phi_{\vm}-\widehat{\phi}(\phi_{\vm},\psi_{\vm})\|_{W^{m,\infty}(\Omega)}.
\]
Applying Lemma~\ref{lem:Wm_inf_bounds} to the composition
\((\phi_{\vm},\psi_{\vm})\mapsto \widehat{\phi}(\phi_{\vm},\psi_{\vm})\), we obtain
\[
\|\psi_{\vm}\phi_{\vm}-\widehat{\phi}(\phi_{\vm},\psi_{\vm})\|_{W^{m,\infty}(\Omega)}
\le
C_8\bigl(
\|\phi_{\vm}\|_{W^{m,\infty}(\Omega_{\vm})}
+
\|\psi_{\vm}\|_{W^{m,\infty}(\Omega)}
\bigr)\varepsilon_2.
\]
Since \(\|\phi_{\vm}\|_{W^{m,\infty}(\Omega_{\vm})}\le (C_5+1)\|f\|_{W^{n,\infty}(\Omega)}\) and
\(\|\psi_{\vm}\|_{W^{m,\infty}(\Omega)}\le \|s_{\vm}\|_{W^{m,\infty}(\Omega)}+\varepsilon_1\le C(m,d)J^m+\varepsilon_1\), there exists a constant \(C_9(n,d,m,\|f\|_{W^{n,\infty}(\Omega)})\), independent of \(N,L\), such that
\[
\mathcal R_2
\le
2^d C_9(n,d,m,\|f\|_{W^{n,\infty}(\Omega)}) J^m \varepsilon_2.
\]
Using again \(J\le N^{2/d}L^{2/d}\), we get
\[
J^m \le N^{2m/d}L^{2m/d}.
\]
Hence, choosing
\[
\varepsilon_2
=
N^{-2n/d}L^{-2n/d},
\]
we obtain
\begin{equation}\label{eq:R2-est}
\mathcal R_2
\le
2^d C_9(n,d,m,\|f\|_{W^{n,\infty}(\Omega)})
N^{-2(n-m)/d}L^{-2(n-m)/d}.
\end{equation}

Finally, choose \(\varepsilon_1\) so that
\[
\varepsilon_1\le N^{-2(n-m)/d}L^{-2(n-m)/d}.
\]
Then \eqref{eq:R1-est} and \eqref{eq:R2-est} imply
\[
\|f-\phi\|_{W^{m,\infty}(\Omega)}
\le
C_6(n,d,m)\|f\|_{W^{n,\infty}(\Omega)}
N^{-2(n-m)/d}L^{-2(n-m)/d},
\]
where \(C_6\) is independent of \(N\) and \(L\).

It remains to estimate the network size. Each branch
\[
\widehat{\phi}\bigl(\phi_{\vm}(\vx),\psi_{\vm}(\vx)\bigr)
\]
uses one copy of \(\phi_{\vm}\), one copy of \(\psi_{\vm}\), and one multiplication network
\(\widehat{\phi}\). Since
\[
d(2N+120m+200)+12
\le
192\,n^{d+1}(N+1)\log_2(8N)
\]
after enlarging the constant if necessary, the width of each branch is bounded by
\[
192\,n^{d+1}(N+1)\log_2(8N).
\]
There are \(2^d\) branches running in parallel, so the total width is bounded by
\[
2^d\,192\,n^{d+1}(N+1)\log_2(8N).
\]

Moreover, since
\[
3L+2m+d-1
\le
10(L+2)\log_2(4L)+8L+n+11
\]
for all \(L\in\mathbb N_+\), the depth of each \(\psi_{\vm}\) can be absorbed into that of
\(\phi_{\vm}\). After padding shallower subnetworks with identity layers, applying the
multiplication network \(\widehat{\phi}\) adds one layer, and summing the \(2^d\) branch outputs
can be carried out in at most \(d\) additional layers. Hence the total depth is bounded by
\[
10(L+2)\log_2(4L)+8L+n+d+12.
\]
This completes the proof.
\end{proof}

            \section{Approximation results for $\text{ReLU}^{k}$ in Sobolev spaces}\label{reluk}

In Theorem~\ref{firstmian} we proved \(W^{m,\infty}\) super-convergence for any activation satisfying Conditions~\ref{condition1} and~\ref{condition2}.  Another popular choice in PDE applications is \(\mathrm{ReLU}^k\) (e.g.\ \cite{weinan2018deep}).  A close inspection of our proof shows it actually requires only two approximation properties of the activation \(\sigma\) (needed to ensure Propositions~\ref{non-di} and~\ref{sqr approixmation}): {for any \(M,\varepsilon,\delta>0\), there exist \(\sigma\)-networks \(\phi_1\) and \(\phi_2\) with fixed depth and width, independent of \(M\), \(\delta\), and \(\varepsilon\), while the network parameters may depend on \(M\), \(\delta\), and \(\varepsilon\), such that
\begin{enumerate}
  \item
  \[
    \bigl\|\phi_{1}-\mathrm{ReLU}^{m+1}\bigr\|_{W^{m,\infty}([-M,M])}
    \le \varepsilon.
  \]

  \item
  \[
    \bigl\|\phi_{2}-\mathrm{ReLU}\bigr\|_{L^{\infty}([-M,M])}
    \le \varepsilon,
    \qquad
    \bigl\|\phi_{2}-\mathrm{ReLU}\bigr\|_{W^{m,\infty}([-M,-\delta]\cup[\delta,M])}
    \le \varepsilon.
  \]
\end{enumerate}}
That is, \(\sigma\) must approximate
\(\mathrm{ReLU}^{\,m+1}\) in \(W^{m,\infty}\) on any finite interval and
approximate \(\mathrm{ReLU}\) uniformly away from an arbitrarily small
neighbourhood of the origin.  The next corollary shows that these two
properties alone guarantee the same super-convergence rate as in
Theorem~\ref{firstmian}.

\begin{corollary}\label{cor:poly_rate_general_sigma}
Let \(f\in W^{n,\infty}(\Omega)\) with \(0\le m<n\), and fix \(N,L\in\mathbb N_{+}\) with
\(\log_{2}N\le L\).  Suppose the activation~\(\sigma\) admits
\(\sigma\)-networks \(\phi_{1}\) and \(\phi_{2}\) that satisfy
\textnormal{1}–\textnormal{2} above, of fixed width
\(N_{1},N_{2}\) and fixed depth \(L_{1},L_{2}\), respectively.
Then there exists a \(\sigma\)-network
\(\phi\) of width \(C_{8}N\log N\) and depth
\(C_{9}L\log L\) such that
\[
  \|f-\phi\|_{W^{m,\infty}(\Omega)}
  \;\le\;
  C_{10}\,
  \|f\|_{W^{n,\infty}(\Omega)}\,
  N^{-2(n-m)/d}\,L^{-2(n-m)/d},
\]
where the constants \(C_{i}\) is independent of \(N\)~and~\(L\), while
\(C_{8}\) and \(C_{9}\) depend only on \(N_{1},N_{2}\) and
\(L_{1},L_{2}\), respectively.
\end{corollary}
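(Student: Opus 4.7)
The plan is to revisit the proof of Theorem~\ref{firstmian} and verify that every place where Conditions~\ref{condition1}--\ref{condition2} were invoked can in fact be replaced by invocations of properties~1 and~2 of the corollary, producing the same super-convergence rate with only the width/depth constants inflated by factors depending on $N_1,N_2,L_1,L_2$.

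First, I would rebuild the arithmetic building blocks (multiplication, identity, $x^p$, monomials $\vx^\valpha$, and polynomial approximation). In the proof of Theorem~\ref{firstmian} these were only used as fixed-size polynomial approximants in $W^{m,\infty}$, and they are in fact special cases of Lemma~\ref{poly}. Under the hypotheses of the corollary I would reprove Lemma~\ref{poly} directly from property~1 and the exact $\mathrm{ReLU}^{m+1}$-representation of polynomials of \cite{he2023expressivity}: every $\mathrm{ReLU}^{m+1}$ activation in that representation is replaced by a copy of $\phi_1$, and the Sobolev composition estimate in Lemma~\ref{lem:Wm_inf_bounds}(iii) propagates the per-node error through the constant-depth polynomial network. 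The result is a $\sigma$-network of width proportional to $N_1$ and depth proportional to $L_1$ that approximates any fixed-degree polynomial to arbitrary $W^{m,\infty}$ accuracy, which covers Lemmas~\ref{sqr11}, \ref{time}, \ref{Id}, \ref{xm+1}, \ref{times}, \ref{timess}, and \ref{sqr approixmation} without any further use of Conditions~\ref{condition1}--\ref{condition2}.

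Next, I would use property~2 to recover the analogue of Proposition~\ref{non-di}. Tracing its proof, the only activation-specific ingredient was the existence of a fixed-size $\sigma$-network that approximates $\mathrm{ReLU}$ uniformly on $[-M,M]$ and in $W^{m,\infty}$ on $[-M,-\delta]\cup[\delta,M]$; property~2 provides exactly this, with the size bounds $N_2$ and $L_2$ in place of the absolute constants used before. The bit-extraction construction (Lemma~\ref{point}) and the step-function construction (Lemma~\ref{lem:step}) then carry over verbatim, since both were obtained by composing a standard ReLU construction of \cite{lu2021deep} with the result of Proposition~\ref{non-di}. The smooth partition of unity of Proposition~\ref{app pou} also carries over, because its only activation-specific ingredients were the $W^{m,\infty}$-approximation of $\mathrm{ReLU}^{m+1}$ (supplied now by property~1) and the approximation of a ReLU-type inner network (supplied now by property~2).

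With these replacements in hand, I would reassemble the proof exactly as in Theorem~\ref{firstmian}: approximate $f$ on each subdomain $\Omega_\vm$ by the piecewise polynomial $f_{J,\vm}$ of Proposition~\ref{fN}, realize the piecewise-constant coefficients by the bit-extraction and step-function sub-networks, encode the monomials $\vx^\valpha$ by the polynomial sub-networks described above, and glue everything together by the smooth partition of unity. Each sub-network's width and depth are multiplied by factors depending only on $N_1,N_2$ and $L_1,L_2$, yielding final width $C_8 N\log N$ and depth $C_9 L\log L$. The main obstacle I expect is bookkeeping: one must verify, using Lemma~\ref{lem:Wm_inf_bounds} at every level of the hierarchical construction, that the error constants from each substituted sub-network telescope into a single constant $C_{10}$ independent of $N$ and $L$, and that the inflation factors $N_1,N_2,L_1,L_2$—which enter multiplicatively—never absorb a hidden dependence on $N$ or $L$. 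Since these parameters are assumed fixed by the hypothesis of the corollary, the telescoping closes and the desired rate $N^{-2(n-m)/d}L^{-2(n-m)/d}$ follows.
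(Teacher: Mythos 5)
Your proposal is correct and follows essentially the same route the paper takes: the paper gives no separate proof of Corollary~\ref{cor:poly_rate_general_sigma}, asserting only that properties~1--2 are exactly what is needed to reinstate Proposition~\ref{non-di} and Lemma~\ref{sqr approixmation} and then rerun the construction of Theorem~\ref{firstmian}, which is precisely your plan. The only detail worth making explicit in your write-up is that the partition of unity also needs $\mathrm{ReLU}^{j}$ for $m+1<j\le 2m+1$ (Lemma~\ref{p of ReLU}), which under your hypotheses follows from property~1 via the identity $\mathrm{ReLU}^{j}=\mathrm{ReLU}^{m+1}\cdot x^{\,j-m-1}$ together with your polynomial and multiplication sub-networks.
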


For \(\mathrm{ReLU}^{m+1}\) itself, the first property is immediate, and the second follows by the finite‐difference formula
\begin{equation}\label{differentReLU}
  \phi_{m+1}(x)
  := \frac{1}{(m+1)!\,(-t)^{m}}
    \sum_{\ell=0}^{m}
      (-1)^\ell\binom{m}{\ell}\,
      \mathrm{ReLU}^{\,m+1}(x+\ell\,t),\notag
\end{equation}
with \(t\) taken sufficiently small.

\begin{lemma}
For any \(\delta,\varepsilon>0\), there exists \(t_0>0\) such that for all \(0<|t|<t_0\),
\[
  \|\phi_{m+1} - \mathrm{ReLU}\|_{L^\infty([-M,M])}
  \;\le\;\varepsilon,
  \quad
  \|\phi_{m+1} - \mathrm{ReLU}\|_{W^{m,\infty}([-M,-\delta]\cup[\delta,M])}
  \;\le\;\varepsilon.
\]
\end{lemma}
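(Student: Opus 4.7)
The main idea is to recognize that $\phi_{m+1}$ is the rescaled $m$-th forward finite difference of $g(x):=\mathrm{ReLU}^{m+1}(x)$. Indeed, since $(-t)^m=(-1)^m t^m$ and $(-1)^\ell(-1)^m=(-1)^{m-\ell}$, the stated formula rewrites as
\[
  \phi_{m+1}(x)
  \;=\;\frac{1}{(m+1)!\,t^m}\sum_{\ell=0}^m(-1)^{m-\ell}\binom{m}{\ell}g(x+\ell t)
  \;=\;\frac{\Delta_t^m g(x)}{(m+1)!\,t^m}.
\]
Two structural facts about $g$ will drive the entire argument: $g\in C^m(\mathbb R)$ with $g^{(m)}(y)=(m+1)!\,\mathrm{ReLU}(y)$, and $g$ vanishes identically on $(-\infty,0]$ while coinciding with the polynomial $y^{m+1}$ on $[0,\infty)$.

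For the global $L^\infty$ bound I would invoke the generalized mean-value theorem for finite differences: because $g\in C^m$, for every $x$ there exists $\xi\in[x,x+mt]$ (or $[x+mt,x]$ when $t<0$) such that $\Delta_t^m g(x)=t^m g^{(m)}(\xi)$. Substituting this into the identity above gives $\phi_{m+1}(x)=\mathrm{ReLU}(\xi)$, and the $1$-Lipschitz continuity of $\mathrm{ReLU}$ yields
\[
  \bigl|\phi_{m+1}(x)-\mathrm{ReLU}(x)\bigr|\;\le\;|\xi-x|\;\le\;m\,|t|,
\]
which settles the $L^\infty$ claim on all of $[-M,M]$ as soon as $|t|<\varepsilon/m$.

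For the $W^{m,\infty}$ estimate on $[-M,-\delta]\cup[\delta,M]$ I would impose the additional restriction $|t|<\delta/(m+1)$, which forces every translate $x+\ell t$ (for $x$ in this set and $\ell\in\{0,\ldots,m\}$) to stay strictly on the same side of the origin as $x$. On $[-M,-\delta]$ each $g(x+\ell t)$ is then identically zero, so $\phi_{m+1}\equiv 0=\mathrm{ReLU}$ there and the error vanishes exactly. On $[\delta,M]$ we instead have $g(x+\ell t)=(x+\ell t)^{m+1}$; since $f(y):=y^{m+1}$ satisfies $f^{(m)}(y)=(m+1)!\,y$, the mean-value theorem applied to the polynomial $f$ gives $\Delta_t^m f(x)=(m+1)!\,t^m(x+\alpha_m t)$ for some $\alpha_m\in[0,m]$ independent of $x$, because the left-hand side is linear in $x$ with $x$-coefficient pinned to $1$ by the leading-order expansion. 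Consequently $\phi_{m+1}(x)-\mathrm{ReLU}(x)=\alpha_m t$ is a \emph{constant} function of $x$ on $[\delta,M]$; every derivative vanishes, so its full $W^{m,\infty}$ norm equals $|\alpha_m t|\le m|t|$.

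Choosing $t_0=\min\{\delta/(m+1),\,\varepsilon/m\}$ then delivers both estimates simultaneously. The argument is essentially bookkeeping: the only delicate point is verifying that $x+\ell t$ never crosses zero on the two side intervals, which is precisely what collapses the globally $C^m$-but-not-smoother function $g$ to either the zero polynomial or a genuine smooth polynomial, on which exact finite-difference identities apply. I do not foresee any substantive obstacle beyond this.
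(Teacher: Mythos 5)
Your proposal is correct, and it takes a more explicit route than the paper's. The paper handles the $L^\infty$ part by citing a uniform-convergence result for divided differences of $C^k$ functions (Proposition 11 of \cite{zhang2024deep}), and the $W^{m,\infty}$ part by commuting the finite difference with $D^k$ and re-applying that convergence result on the side intervals where $\mathrm{ReLU}^{m+1}$ is smooth; both steps are soft, qualitative limits in $t$. You instead use the finite-difference mean value theorem to write $\phi_{m+1}(x)=\mathrm{ReLU}(\xi)$ with $|\xi-x|\le m|t|$, which upgrades the $L^\infty$ statement to the quantitative bound $m|t|$, and for the Sobolev part you observe that once $|t|$ is small enough that all translates $x+\ell t$ stay on one side of the origin, the finite difference is computed \emph{exactly}: it vanishes on $[-M,-\delta]$ and equals the constant $\tfrac{m}{2}t$ (your $\alpha_m t$; one can check $\alpha_m=m/2$ from the Stirling-number expansion of $\Delta_t^m[y^{m+1}]$) on $[\delta,M]$, so all derivatives of the error vanish and no convergence argument is needed at all. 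The only points worth stating carefully in a write-up are the sign cases $t>0$ and $t<0$ when verifying that $x+\ell t$ does not cross the origin (both work under your restriction $|t|<\delta/(m+1)$), and the justification that $\alpha_m$ is independent of $x$, which you correctly derive from linearity of $\Delta_t^m[y^{m+1}]$ in $x$. Your version buys explicit rates $O(|t|)$ and self-containedness at the cost of a slightly longer computation; the paper's buys brevity by outsourcing to a cited proposition.
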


\begin{proof}
The uniform ($L^\infty$) convergence is proved in Proposition 11 of \cite{zhang2024deep}: for any integer $k\ge0$ and any $\varrho\in C^k\bigl((-M-\delta_*,\,M+\delta_*)\bigr)$,
  \begin{equation}
\frac{\sum_{\ell=0}^k(-1)^{\ell}\binom{k}{\ell} \varrho(x+\ell t)}{(-t)^k} \rightrightarrows \varrho^{(k)}(x) \quad \text { as } t \rightarrow 0 \quad \text { for any } x \in[-M, M] .\notag
\end{equation}
where $\rightrightarrows$ denotes uniform convergence.

For the $W^{m,\infty}$ estimate, observe that finite differences commute with differentiation.  Thus for each $0\le k\le m$ and $x\in[\delta,M]$,
\[
  D^k\phi_{m+1}(x)
  - D^k\mathrm{ReLU}(x)
  = \frac{1}{(m+1)!\,(-t)^m}\,
    \Delta_t^m\bigl[D^k(\mathrm{ReLU}^{\,m+1})\bigr](x)
    - D^k\mathrm{ReLU}(x).
\]
Since $\mathrm{ReLU}^{\,m+1}\in C^k\bigl([\delta-\delta_*,\,M+\delta_*]\bigr)$ for any $0<\delta_*<\delta$, the same finite‐difference argument gives
\[
  D^k\phi_{m+1}(x)
  - D^k\mathrm{ReLU}(x)
  \rightrightarrows 0 \quad \text { as } t \rightarrow 0
  \quad\text{uniformly on }[\delta,M].
\]
An identical argument applies on $[-M,-\delta]$.  It follows that
\[
  \|\phi_{m+1} - \mathrm{ReLU}\|_{W^{m,\infty}([-M,-\delta]\cup[\delta,M])}
  \;\le\;\varepsilon
\]
for all sufficiently small $|t|$.  This completes the proof.
\end{proof}

Therefore, we verify that the activation \(\text{ReLU}^{m+1}\) also satisfies the two approximation properties required in the convergence proof.  By following the same construction as in the previous chapter, we obtain the following result:
\begin{corollary}\label{powerReLU}
			For any $f\in W^{n,\infty}(\Omega)$, $0\le m<n$, $N, L\in\sN_+$ with $\log_2N\le L$, there is a $\mathrm{ReLU}^{m+1}$ neural network $\phi$ with with depth $C_{11}L\log L$ and width $C_{12}N\log N$ such that\begin{align}\|f-\phi\|_{W^{m,\infty}(\Omega)}\le C_{13}\|f\|_{W^{n,\infty}(\Omega)}N^{-2(n-m)/d}L^{-2(n-m)/d},\notag\end{align}
				where $C_i$ is the constant independent with $N,L$.
		\end{corollary}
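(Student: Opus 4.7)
The plan is to apply Corollary~\ref{cor:poly_rate_general_sigma} directly with $\sigma = \mathrm{ReLU}^{m+1}$, so the entire task reduces to exhibiting fixed-size $\mathrm{ReLU}^{m+1}$-networks $\phi_1$ and $\phi_2$ that realize the two approximation primitives (1) and (2) listed just above the corollary, with widths and depths independent of $M$ and $\varepsilon$. Once this is done, the conclusion and the exact scaling $C_{11}L\log L$ in depth, $C_{12}N\log N$ in width, and error bound $C_{13}\|f\|_{W^{n,\infty}}N^{-2(n-m)/d}L^{-2(n-m)/d}$ are immediate consequences of Corollary~\ref{cor:poly_rate_general_sigma}.

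For property (1), I would simply take $\phi_1(x) = \mathrm{ReLU}^{m+1}(x)$ itself, realized as a depth-$1$, width-$1$ $\mathrm{ReLU}^{m+1}$-network. Since the activation is exactly the target, the $W^{m,\infty}$-error is zero, and in particular bounded by $\varepsilon$, independently of $M$. This is the main place where using $\mathrm{ReLU}^{m+1}$ as the activation substantially simplifies matters compared with the general smooth setting treated in Lemma~\ref{sqr of ReLU}.

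For property (2), I would use the finite-difference network
\[
  \phi_2(x) \;=\; \phi_{m+1}(x) \;=\; \frac{1}{(m+1)!\,(-t)^m}\sum_{\ell=0}^{m} (-1)^\ell\binom{m}{\ell}\,\mathrm{ReLU}^{\,m+1}(x+\ell t),
\]
which is a depth-$1$, width-$(m+1)$ $\mathrm{ReLU}^{m+1}$-network whose size depends only on $m$, not on $M$ or $\varepsilon$. The required bounds — the uniform $L^\infty$ estimate on $[-M,M]$ and the $W^{m,\infty}$ estimate on $[-M,-\delta]\cup[\delta,M]$ — are precisely the content of the lemma immediately preceding the corollary; their proof rests on the commutativity of finite differences with differentiation together with the fact that $\mathrm{ReLU}^{m+1}$ is smooth on any set bounded away from the origin, so that the standard divided-difference approximation $\rightrightarrows \varrho^{(k)}$ applies to $\varrho = \mathrm{ReLU}^{m+1}$ locally.

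With both primitives in hand, the proof concludes by invoking Corollary~\ref{cor:poly_rate_general_sigma}: the constants $C_{11}, C_{12}$ depend only on the fixed widths ($1$ and $m+1$) and fixed depths ($1$ and $1$) of $\phi_1$ and $\phi_2$, and $C_{13}$ inherits its $N,L$-independence from that corollary. The only nontrivial step — the $W^{m,\infty}$-convergence of the divided-difference scheme away from the kink — has already been dispatched by the preceding lemma, so no new obstacle arises; the remaining content of the proof is bookkeeping of constants and network sizes. Hence the main intellectual hurdle, and the only place where one must be careful, is verifying that the interchange of finite differences and derivatives can be combined with local smoothness of $\mathrm{ReLU}^{m+1}$ to yield the two-sided $W^{m,\infty}$-bound uniformly in $|t| \to 0$, which is already granted.
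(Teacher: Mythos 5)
Your proposal is correct and follows essentially the same route as the paper: the paper likewise notes that property (1) is immediate for $\mathrm{ReLU}^{m+1}$ (the activation is the target itself), establishes property (2) via the finite-difference network $\phi_{m+1}$ and the lemma on uniform convergence of divided differences away from the kink, and then invokes Corollary~\ref{cor:poly_rate_general_sigma}. No gaps.
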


\section{Conclusion}
In this paper, we establish super-convergence results in the $W^{m,\infty}$-norm for deep neural networks equipped with a wide range of commonly used and general activation functions. These results significantly outperform classical numerical methods such as finite element and spectral methods, and substantially extend existing super-convergence theory from ReLU-based networks to more general architectures. Our theorems are constructive and non-asymptotic. Moreover, we propose two intuitive and insightful conditions that serve as practical criteria to determine whether a given activation function is compatible with the established super-convergence framework. These conditions not only offer new guidance for designing more effective activation functions, but also provide a novel pathway for developing approximation theory in broader contexts.

Looking ahead, we identify three main directions for future research. First, to alleviate the curse of dimensionality, it is natural to consider Sobolev subspaces such as Korobov spaces. In~\cite{yang2024near}, $H^1$-approximation rates were obtained by leveraging the fact that ReLU networks can represent hat functions. Since this strategy does not directly apply to smooth activation functions, one should instead utilize smoother sparse-grid bases to approximate functions in Korobov spaces. Second, it remains essential to analyze the generalization error, which entails understanding the complexity of neural network hypothesis spaces when equipped with general activations. Lastly, we anticipate that similar super-convergence results can be extended to broader classes of neural network architectures, including convolutional neural networks~\cite{zhou2020universality,he2022approximation} and Transformers~\cite{yun2019transformers}.


\acks{The authors are grateful to the two anonymous referees for their careful reading 
and valuable comments, which have helped improve the presentation of this paper. The second author acknowledges support from the start-up funding provided by the Yau Mathematical Sciences Center at Tsinghua University and from the NSFC Young Scientists Fund (Category C), Grant No.~12501606.}


\newpage
\appendix
\section{Activation functions satisfying the two conditions}\label{activtion section}

We begin by verifying that all widely adopted activation functions—particularly those defined by default in PyTorch—satisfy both Condition~\ref{condition1} and Condition~\ref{condition2} for some suitable smoothness order $m$.

\subsection{S-shaped commonly used activation functions and two conditions}
Broadly, these activation functions can be categorized into two main classes. The first class comprises both non-smooth and smooth S-shaped activation functions.

\paragraph{Non-smooth S-shaped activations:}
$$
\mathrm{Softsign}(x) = \frac{x}{1 + |x|}, \quad  {\rm HardSigmoid}(x) = \max\left(0, \min\left(1, \frac{x+1}{2}\right)\right),
$$
$$
{\rm HardTanh}(x) = \max(\alpha_-, \min(\alpha_+, x)), \quad \text{with default } \alpha_- = -1, \, \alpha_+ = 1.
$$
First, ${\rm HardSigmoid}(x)$ and ${\rm HardTanh}(x)$ are hard approximations of ${\rm Sigmoid}(x)$ and ${\rm Tanh}(x)$, respectively, and are implemented by default in PyTorch. These activations are designed as efficient, piecewise-linear approximations to smooth nonlinearities, aiming to facilitate fast and quantization-friendly inference on resource-constrained platforms such as mobile and edge devices. Similar to ${\rm ReLU}(x)$, ${\rm LeakyReLU}(x)$, they belong to the class of piecewise-linear functions, and therefore do not satisfy Condition~\ref{condition2}. As a result, they naturally lie in the local Sobolev space $W^{1,\infty}_{\rm loc}(\mathbb R)$. 
Accordingly, neural network functions composed with such piecewise-linear activations are at most in the Sobolev space $W^{1,\infty}$, implying that they lack the capacity to approximate second-order (or higher) derivatives. Consequently, one can only expect approximation results in $W^{m,\infty}$ norms for $m \leq 1$. For detailed discussions on $W^{m,\infty}$ error estimates with ${\rm ReLU}$ and other piecewise-linear activations, we refer the reader to \cite{yang2023nearlys,yang2023nearly}.

For $\sigma(x) = {\rm Softsign}(x)$, it satisfies Condition~\ref{condition2} automatically, and moreover, it also satisfies the inequality \eqref{eq:Cond1} in Condition~\ref{condition1} for any $m < \infty$, since
$$
\left| \frac{1}{2}(\sigma(x) + 1) - H(x) \right| \sim \frac{1}{|x|} \quad \text{as } x \to \pm \infty
$$
and
$$
\left|\sigma^{(k)}(x)\right| \sim \frac{1}{|x|^{k+1}} \quad \text{as } x \to \pm \infty
$$
for any $k\ge 1$. However, the global regularity of ${\rm Softsign}(x)$ is limited, with ${\rm Softsign}(x) \in W_{\rm loc}^{2,\infty}(\mathbb{R})$ only. Consequently, the $W^{m,\infty}$ approximation estimate in Theorem~\ref{thm:mian} holds for ${\rm Softsign}$ only when $0 \le m \le 2$.

\paragraph{Smooth S-shaped activations:}
$$
\mathrm{Sigmoid}(x) = \frac{1}{1 + e^{-x}}, \quad 
\mathrm{Tanh}(x) = \frac{e^{x} - e^{-x}}{e^{x} + e^{-x}}, \quad
\arctan(x),
$$
$$
\mathrm{dSiLU}(x) = \frac{d}{dx} (x \cdot \mathrm{Sigmoid}(x)), \quad
\mathrm{SRS}(x) = \frac{x}{x/\alpha + e^{-x/\beta}} \quad \text{for } \alpha, \beta  \in (0, \infty).
$$
First, we observe that all the activation functions under consideration satisfy Condition~\ref{condition2} trivially. Regarding Condition~\ref{condition1}, we further assume that the parameters $\alpha$ and $\beta$ in the soft-root-sign (SRS) activation function~\cite{zhou2020soft} satisfy $\beta < \alpha e$, which ensures that ${\rm SRS}(x) \in C^{\infty}(\mathbb{R})$.
Under this assumption, all of the aforementioned activation functions belong to $C^{\infty}(\mathbb{R})$, and hence lie naturally in the local Sobolev space $W^{m,\infty}_{\rm loc}(\mathbb{R})$ for any $m < \infty$. Thus, to verify Condition~\ref{condition1}, it suffices to analyze the decay rates of their derivatives at infinity.

More specifically, {for $\sigma(x) = \mathrm{Sigmoid}(x)$}, we have
$$
\sigma^{(k)}(x) = \sigma(x)(1 - \sigma(x)) Q_k(\sigma(x), 1 - \sigma(x)),
$$
where $Q_{k}(x,y)$ is a two-variable polynomial of degree $k-1$ defined recursively. The above formula is sufficient to conclude that 
$$
\left| \sigma^{(k)}(x) \right| \le C_k e^{-|x|} \quad \text{as } x \to \pm \infty.
$$

Consequently, one can prove a similar result for ${\rm dSiLU}$ activation since 
$$
{\rm dSiLU}(x) = \sigma(x) + x \sigma'(x)
$$
and it leads to
$$
\frac{d^k}{dx^k} {\rm dSiLU}(x) = a_k \sigma^{(k)}(x) + b_k x \sigma^{(k+1)}(x),
$$
with 
$$
\left| \frac{d^k}{dx^k} {\rm dSiLU}(x) \right| \le C_k |x| e^{-|x|} \quad \text{as } x \to \pm \infty.
$$

For $T(x) = {\rm Tanh}(x)$, a similar result holds:
$$
T^{(k)}(x) = (1 - T^2(x)) R_k(T(x)),
$$
where $R_k$ is a polynomial of degree $2k - 3$ (for $k \ge 2$) with integer coefficients. As a result, one can have
$$
\left| T^{(k)}(x) \right| \le C_k |x| e^{-2|x|} \quad \text{as } x \to \pm \infty,
$$
since $T(x) \in (-1,1)$.

For the activation function $\sigma(x) = \arctan(x)$, one can prove that
$$
\left|\sigma^{(k)}(x)\right| \sim \frac{1}{|x|^{k+1}} \quad \text{as } x \to \pm \infty
$$
inductively for all $k \ge 1$. 

For $\sigma(x) = {\rm SRS}(x)$ with $\beta < \alpha e$, one can check separately that
$$
\left|\sigma^{(k)}(x)\right| \le \frac{C_k}{|x|^{k+1}} \quad \text{as } x \to \infty
$$
and 
$$
\left|\sigma^{(k)}(x)\right| \le \left|P_k(x)\right|e^{x} \quad \text{as } x \to -\infty,
$$
where $P_k$ is a polynomial of degree less than $2^k - 1$.

\subsection{${\rm ReLU}$-shaped commonly used activation functions and two conditions}
Correspondingly, the second class consists of ReLU-shaped activation functions, which can be further divided into non-smooth and smooth variants. A key feature of these activation functions $\sigma(x)$ is that it is the ratio $\sigma(x)/x$ — rather than $\sigma(x)$ itself, as in the case of S-shaped activations — that satisfies Condition~\ref{condition1} for some $m \in \mathbb{N}$.

\paragraph{Non-smooth ReLU-shaped activations:}
$$
\mathrm{ELU}(x) = \begin{cases}
    x, & x \ge 0, \\
    \alpha(e^x - 1), & x < 0
\end{cases}, \quad
\mathrm{CELU}(x) = \begin{cases}
    x, & x \ge 0, \\
    \beta(e^{x/\beta} - 1), & x < 0
\end{cases},
$$
$$
\mathrm{SELU}(x) = \lambda \begin{cases}
    x, & x \ge 0, \\
    \alpha(e^x - 1), & x < 0
\end{cases}, \quad
\text{with } \alpha \in \mathbb{R}, \ \beta, \lambda \in (0, \infty).
$$
First, we note that all the above activation functions $\sigma(x)$ satisfy Condition~\ref{condition2} trivially. As for Condition~\ref{condition1}, $\sigma(x)/x$ satisfies the decay estimates in \eqref{eq:Cond1} in Condition~\ref{condition1} for any $k < \infty$ since 
$$
\sigma(x)/x = 1 \quad \text{for all } x >0 
$$
and $\frac{\sigma(x)}{x} = \frac{e^x-1}{x} = \sum_{k=1}^\infty \frac{x^{k-1}}{k!}$ is infinity differentiable at $x=0^-$. 
Moreover, for $x<0$, we have
$$
\widetilde \sigma(x) = \sigma(x)/x  = \frac{e^x - 1}{x} = \frac{e^x}{x} - \frac{1}{x} \quad \text{for all } x < 0 
$$
which leads to 
$$
\left| \widetilde \sigma^{(k)}(x)\right| = \left|\frac{d^k}{dx^k}\left( \frac{e^x}{x}\right) + (-1)^k \frac{(k-1)!}{x^{{k+1}}}\right| \le P_k(x) e^x + (k-1)! \frac{1}{|x|^{k+1}}
$$
as $x \to -\infty$. 
However, similar to ${\rm Softsign}(x)$, the global regularity of those activation functions is limited. More precisely, we have 
$$
{\rm ELU}(x), {\rm SELU}(x) \in W_{\rm loc}^{1,\infty}(\mathbb R)
$$
if $\alpha \neq 1$ while 
$$
{\rm ELU}(x), {\rm SELU}(x), {\rm CELU}(x) \in W_{\rm loc}^{2,\infty}(\mathbb R)
$$
if $\alpha = 1$ in ${\rm ELU}(x)$ and ${\rm SELU}(x)$ and arbitrary $\beta$ in ${\rm CELU}(x)$.

\paragraph{Smooth ReLU-shaped activations:}
$$
\mathrm{Softplus}(x) = \ln(1 + e^x), \quad
\mathrm{SiLU}(x) \ (\text{also known as Swish}) = x \cdot \mathrm{Sigmoid}(x),
$$
$$
\mathrm{Mish}(x) = x \cdot \mathrm{Tanh}(\mathrm{Softplus}(x)), \quad
\mathrm{GELU}(x) = x \cdot \mathrm{erf}(x) = x \cdot \int_{-\infty}^x \frac{1}{\sqrt{2\pi}} e^{-t^2/2} \, dt.
$$
For these activation functions, each is in $C^{\infty}(\mathbb{R})$ and satisfies Condition~\ref{condition2} automatically. To verify their admissibility, it remains to check that Condition~\ref{condition1} holds for all $k < \infty$. This verification is straightforward for functions such as $\sigma(x) = \mathrm{SiLU}(x)$ (also known as ${\rm Swish}(x)$), ${\rm Mish}(x)$, and ${\rm GELU}(x)$, since their normalized forms
$\widetilde{\sigma}(x) := \frac{\sigma(x)}{x}$
are given by common smooth functions with exponentially decaying derivatives:
$$
\widetilde{\sigma}(x) = {\rm Sigmoid}(x),\quad {\rm Tanh}({\rm Softplus}(x)),\quad \text{and } ~{\rm erf}(x),
$$
respectively. In each case, all derivatives $\widetilde{\sigma}^{(k)}(x)$ decay exponentially as $x \to \pm \infty$ for any $k < \infty$.

For the Softplus activation function, we first center it by defining 
$$
\sigma(x) = \mathrm{Softplus}(x) - \ln 2.
$$ 
We begin with the identity
$$
\sigma(x) = \ln(1 + e^x) - \ln 2 = 
\begin{cases}
x + \sigma(-x) & \text{if } x \ge 0, \\
\sigma(x) & \text{if } x < 0,
\end{cases}
$$
and note that its derivative is
$$
\sigma'(x) = \frac{e^x}{1 + e^x} = \mathrm{Sigmoid}(x) =: S(x).
$$
Now, let us define the normalized function $\widetilde{\sigma}(x) := \frac{\sigma(x)}{x}$ and verify that it satisfies Condition~\ref{condition1} for any $k < \infty$. 
We begin by checking its smoothness at $x = 0$. For any $x$ sufficiently close to $0$, we observe that
$$
\widetilde{\sigma}(x) = \frac{\mathrm{Softplus}(x) - \mathrm{Softplus}(0)}{x - 0} = \sum_{k=1}^\infty \frac{h^{(k)}(0)}{k!} x^{k-1},
$$
where $h(x) := \mathrm{Softplus}(x) = \ln(1 + e^x)$. Since $h(x)$ is smooth around $x = 0$, the series expansion guarantees the smoothness of $\widetilde{\sigma}(x)$ in a neighborhood of $x = 0$.

For $|x| > 0$, its derivative is
$$
\widetilde{\sigma}'(x) = \frac{xS(x) - \sigma(x)}{x^2} = 
\begin{cases}
(S(x) - 1)x^{-1} + \sigma(-x) x^{-2} & \text{if } x > 0, \\
S(x)x^{-1} + \sigma(x) x^{-2} & \text{if } x < 0.
\end{cases}
$$
Using similar arguments for the derivatives of ${\rm Sigmoid}(x)$ along with Leibniz’s rule, we obtain:
$$
\begin{cases}
\left| \frac{d^k}{dx^k} \left((S(x) - 1)x^{-1}\right) \right| \le e^{-x} \cdot |P_k(x^{-1})| & \text{as } x \to \infty, \\
\left| \frac{d^k}{dx^k} \left(S(x)x^{-1}\right) \right| \le e^{x} \cdot |P_k(x^{-1})| & \text{as } x \to -\infty,
\end{cases}
$$
for some polynomial $P_k$ of degree at most $k+1$.
Similarly, for the logarithmic terms, an inductive argument yields:
$$
\begin{cases}
\left| \frac{d^k}{dx^k} \left(\sigma(-x)x^{-2} \right) \right| \le e^{-x} \cdot |R_k(x^{-1})| + \frac{|\sigma(-x)|}{|x|^{k+2}} & \text{as } x \to \infty, \\
\left| \frac{d^k}{dx^k} \left(\sigma(x)x^{-2} \right) \right| \le e^{x} \cdot |R_k(x^{-1})| + \frac{|\sigma(x)|}{|x|^{k+2}} & \text{as } x \to -\infty,
\end{cases}
$$
for some polynomial $R_k$ of degree at most $k+1$. Here, the last term arises from the $k$-th order derivative of $x^{-2}$ in the application of Leibniz’s rule to 
$\frac{d^k}{dx^k} \left(\sigma(-x)\, x^{-2} \right)$ and $\frac{d^k}{dx^k} \left(\sigma(x)\, x^{-2} \right)$.

All the activation functions discussed above exhibit sufficient smoothness and asymptotic decay to satisfy the required conditions, thereby enabling theoretical guarantees for their approximation properties under the proposed framework. 

\section{Proofs in Section \ref{preliminaries}}\label{app lem}
\begin{proof}[Proof of Lemma~\ref{time}]
By Lemma~\ref{sqr11}, for any \(\varepsilon>0\), there exists a depth-\(1\), width-\(4\) neural network \(\phi_1\), where \(4\) is independent of \(\varepsilon\), such that
\[
\|x^2-\phi_1(x)\|_{C^m([-2M,2M])}\le \frac{\varepsilon}{3}.
\]
Define
\[
\phi(x,y):=\frac{\phi_1(x+y)-\phi_1(x)-\phi_1(y)}{2}.
\]
Since
\[
xy=\frac{(x+y)^2-x^2-y^2}{2},
\]
we have
\[
xy-\phi(x,y)
=
\frac{\bigl((x+y)^2-\phi_1(x+y)\bigr)-\bigl(x^2-\phi_1(x)\bigr)-\bigl(y^2-\phi_1(y)\bigr)}{2}.
\]
Therefore, by the triangle inequality and the fact that \(x+y\in[-2M,2M]\) for \((x,y)\in[-M,M]^2\), it follows that
\[
\|xy-\phi(x,y)\|_{C^m([-M,M]^2)}
\le \varepsilon.
\]
This completes the proof.
\end{proof}

\begin{proof}[Proof of Lemma~\ref{Id}]
By Lemma~\ref{sqr11}, for any \(\varepsilon>0\), there exists a depth-\(1\), width-\(4\) \(\sigma\)-network \(\phi_1\) such that
\[
\|x^2-\phi_1(x)\|_{C^m([-M,M+1])}\le 2\varepsilon.
\]
Define
\[
\phi(x):=\frac{\phi_1(x+1)-\phi_1(x)-1}{2}.
\]
Since \(\bigl((x+1)^2-x^2-1\bigr)/2=x\), a direct application of the triangle inequality yields
\[
\|x-\phi\|_{C^m([-M,M])}\le \varepsilon.
\]
\end{proof}

\begin{proof}[Proof of Lemma~\ref{xm+1}]
The construction iteratively combines the networks from
Lemmas~\ref{time} and~\ref{Id}; see Fig.~\ref{fig:xp}. We now verify that
its approximation error can be made arbitrarily small.

\begin{figure}[ht]
    \centering
    \includegraphics[width=0.7\linewidth]{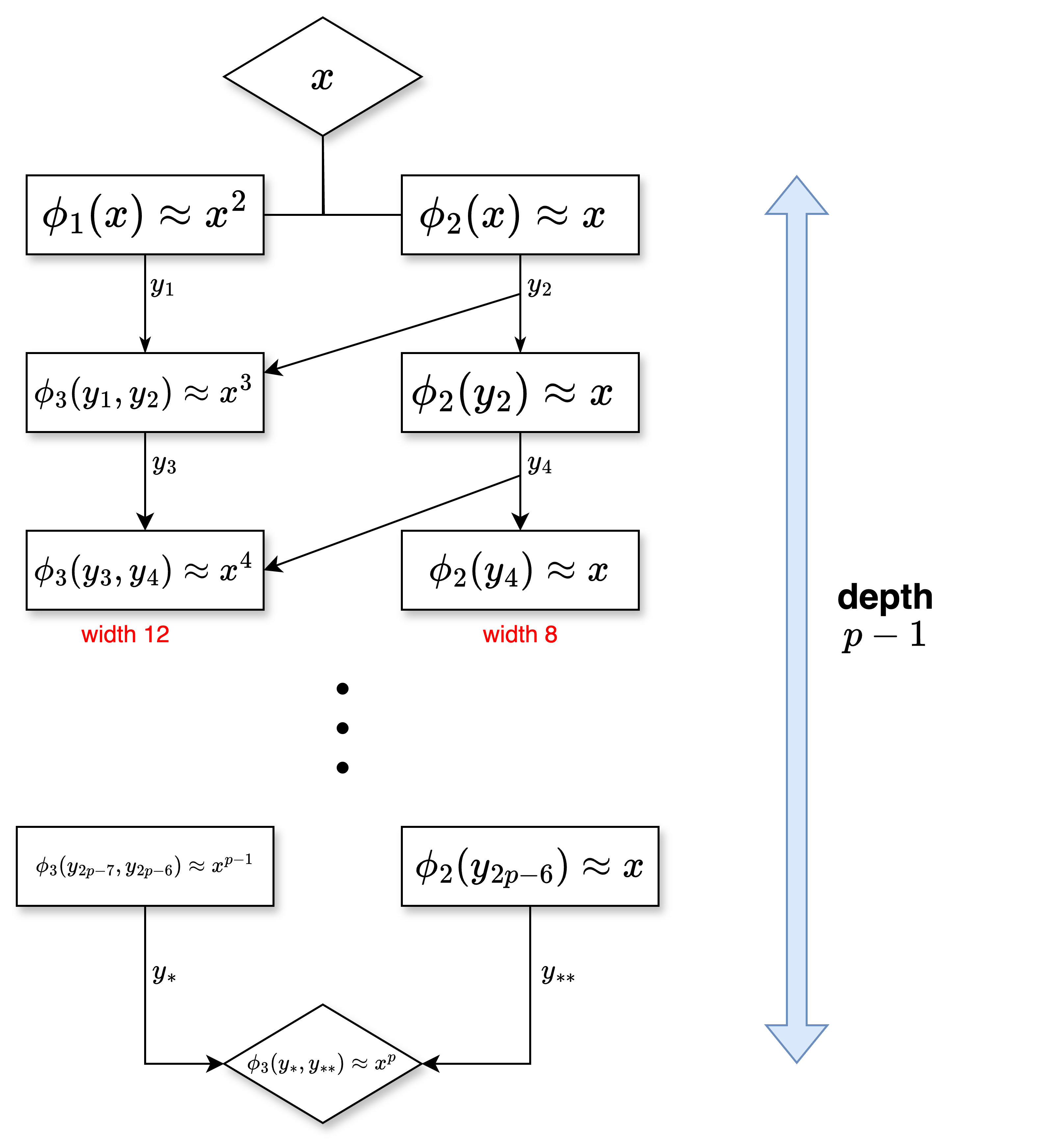}
    \caption{Network architecture used to approximate \(x^p\).}
    \label{fig:xp}
\end{figure}

Fix \(m\) and \(M\). By Lemmas~\ref{sqr11}, \ref{time}, and~\ref{Id}, for any
\(0<\varepsilon_0<1\), there exist networks \(\phi_1,\phi_2,\phi_3\) such that
\[
\|\phi_1-x^2\|_{W^{m,\infty}([-M,M])}\le \varepsilon_0,\qquad
\|\phi_2-x\|_{W^{m,\infty}([-M,M])}\le \varepsilon_0,
\]
and
\[
\|\phi_3-xy\|_{W^{m,\infty}([-M,M]^2)}\le \varepsilon_0.
\]

We first define
\[
y_1:=\phi_1(x),\qquad y_2:=\phi_2(x).
\]
Then
\[
\|y_1-x^2\|_{W^{m,\infty}([-M,M])}\le \varepsilon_0,
\qquad
\|y_2-x\|_{W^{m,\infty}([-M,M])}\le \varepsilon_0.
\]

For the second layer, define
\[
y_3:=\phi_3(y_1,y_2),\qquad y_4:=\phi_2(y_2).
\]
By Lemma~\ref{lem:Wm_inf_bounds}, there exists a constant \(C_1>0\),
depending only on \(\phi_1,\phi_2,\phi_3\), such that
\[
\|y_4-x\|_{W^{m,\infty}([-M,M])}
=
\|\phi_2(y_2)-x\|_{W^{m,\infty}([-M,M])}
\le C_1\varepsilon_0,
\]
and
\[
\|y_3-x^3\|_{W^{m,\infty}([-M,M])}
=
\|\phi_3(y_1,y_2)-x^3\|_{W^{m,\infty}([-M,M])}
\le C_1\varepsilon_0.
\]

For the third layer, define
\[
y_5:=\phi_3(y_3,y_4),\qquad y_6:=\phi_2(y_4).
\]
Applying Lemma~\ref{lem:Wm_inf_bounds} again, there exists a constant
\(C_2>0\), depending only on \(C_1\), \(\phi_2\), \(\phi_3\), and the
\(W^{m,\infty}\)-bounds of \(y_3\) and \(y_4\), such that
\[
\|y_6-x\|_{W^{m,\infty}([-M,M])}\le C_2\varepsilon_0,
\qquad
\|y_5-x^4\|_{W^{m,\infty}([-M,M])}\le C_2\varepsilon_0.
\]

Repeating the same argument, at the \(k\)-th layer we obtain two outputs
\(y_{2k-1}\) and \(y_{2k}\) satisfying
\[
\|y_{2k-1}-x^{k+1}\|_{W^{m,\infty}([-M,M])}\le C_k\varepsilon_0,
\qquad
\|y_{2k}-x\|_{W^{m,\infty}([-M,M])}\le C_k\varepsilon_0,
\]
where \(C_k>0\) depends only on the previous constants and the
\(W^{m,\infty}\)-bounds of the intermediate outputs.

In particular, after \(p-2\) layers, we obtain
\[
y_*:=y_{2p-7}\approx x^{p-1},
\qquad
y_{**}:=y_{2p-6}\approx x,
\]
and the final output
\[
\phi:=\phi_3(y_*,y_{**})
\]
satisfies
\[
\|\phi-x^p\|_{W^{m,\infty}([-M,M])}\le C_{p-2}\varepsilon_0.
\]
Finally, choosing
\[
\varepsilon_0=\frac{\varepsilon}{C_{p-2}}
\]
yields
\[
\|\phi-x^p\|_{W^{m,\infty}([-M,M])}\le \varepsilon.
\]
This completes the proof.
\end{proof}

\begin{proof}[Proof of Lemma~\ref{times}]
The construction iteratively combines the networks from
Lemmas~\ref{time} and~\ref{Id}; see Fig.~\ref{fig:xprod}. More precisely, the
left branch recursively applies the multiplication network \(\phi_3\) to build
successive approximations of \(x_1x_2\), \(x_1x_2x_3\), \(\ldots\),
\(x_1x_2\cdots x_{d-1}\), while the right branch recursively applies the
identity network \(\phi_2\) to propagate an approximation of \(x_d\) through
the layers. In the final layer, these two outputs are combined by \(\phi_3\)
to approximate \(x_1x_2\cdots x_d\).

The verification that the approximation error can be made arbitrarily small is
the same as in the proof of Lemma~\ref{xm+1}, by iteratively applying
Lemma~\ref{lem:Wm_inf_bounds}. We therefore omit the details here.
\end{proof}
\section{Proof of Proposition \ref{fN}}
Before we show Proposition \ref{fN}, we define subsets of $\Omega_{\vm}$ for simplicity notations.
		
		For any $\vm\in\{1,2\}^d$, we define \begin{equation}
			\Omega_{\vm,\vi}:=[0,1]^d\cap\prod_{j=1}^d\left[\frac{2i_j-1_{m_j\le 2}}{2J},\frac{3+4i_j-2\cdot1_{m_j\le 2}}{4J}\right]\notag
		\end{equation}$\vi=(i_1,i_2,\ldots,i_d)\in\{0,1\ldots,J\}^d$, and it is easy to check $\bigcup_{\vi\in\{0,1\ldots,J\}^d}\Omega_{\vm,\vi}=\Omega_{\vm}$. 
		
	Before the proof, we must introduce the partition of unity, and average Taylor polynomials.
		
		\begin{definition}[The partition of unity]\label{unity} Let $d,J\in\sN_+$, then
			\[\Psi=\left\{h_{\vi}:{\vi}\in\{0,1,\ldots,J\}^d\right\}\] with $h_{\vi}:\sR^d\to\sR$ for all ${\vi}\in\{0,1,\ldots,J\}^d$ is called the partition of unity $[0,1]^d$ if it satisfies
			
			(i): $0\le h_{\vi}(\vx)\le 1$ for every $h_{\vi}\in\Psi$.
			
			(ii): $\sum_{h_{\vi}\in\Psi}h_{\vi}=1$ for every $x\in [0,1]^d$.
		\end{definition}	
		
		\begin{definition}\label{average}
			Let $n\ge 1$ and $f\in W^{n,\infty}(\Omega)$, $\vx_0\in\Omega$ and $r>0$ such that for the ball $B(\vx_0):=B(\vx_0)_{r,|\cdot|}$ which is a compact subset of $ \Omega$. The corresponding Taylor
			polynomial of order $n$ of $f$ averaged over $B$ is defined for \begin{equation}
				Q^nf(x):=\int_{B}	T^n_{\vy}f(\vx)b_r(\vy)\,\D \vy\notag
			\end{equation} where \begin{align}
				T^n_{\vy}{f}(\vx)&:=\sum_{|\valpha|\le n-1}\frac{1}{\valpha!}D^{\valpha}{f}(\vy)(\vx-\vy)^{\valpha},\notag\\b_r(\vx)&:=\begin{cases}
					\frac{1}{c_r}e^{-\left(1-\left(| \vx-\vx_{0}| / r\right)^{2}\right)^{-1}},~&|\vx-\vx_0|<r, \\
					0, ~&|\vx-\vx_0|\le r ,
				\end{cases}\quad c_r=\int_{\sR^d}e^{-\left(1-\left(| \vx-\vx_{0}| / r\right)^{2}\right)^{-1}}\,\D x.\notag
			\end{align}
		\end{definition}
		
		\begin{lemma}\label{average coe}
			Let $n\ge 1$ and $f\in W^{n,\infty}(\Omega)$, $\vx_0\in\Omega$ and $r>0$ such that for the ball $B(\vx_0):=B_{r,|\cdot|}(\vx_0)$ which is a compact subset of $ \Omega$. The corresponding Taylor
			polynomial of order $n$ of $f$ averaged over $B$ can be read as \[Q^nf(\vx)=\sum_{|\valpha|\le n-1}c_{f,\valpha}\vx^{\valpha}.\]
			Furthermore, \begin{align}
				\left|c_{f,\valpha}\right|\le C_2(n,d)\|{f}\|_{W^{n-1,\infty}(B)}.\notag
			\end{align} where $C_2(n,d)=\sum_{|\valpha+\vbeta|\le n-1}\frac{1}{\valpha!\vbeta!}$.
		\end{lemma}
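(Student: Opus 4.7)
The plan is to expand the averaged Taylor polynomial explicitly as a polynomial in $\vx$, read off the coefficients $c_{f,\valpha}$, and then bound each one by $C_2(n,d)\,\|f\|_{W^{n-1,\infty}(B)}$ via elementary estimates on the integrand. The only analytical ingredients needed are the multi-index binomial formula, the normalisation $\int_{B} b_r(\vy)\,\mathrm d\vy = 1$ (which holds by the choice of $c_r$), and the fact that $\vy \in B \subset [0,1]^d$.

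First, I would apply the multi-index binomial identity
\[
  (\vx-\vy)^{\valpha}
  \;=\;
  \sum_{\vbeta+\vgamma=\valpha}
    \frac{\valpha!}{\vbeta!\,\vgamma!}\,
    \vx^{\vbeta}\,(-\vy)^{\vgamma}
\]
inside the definition of $T^n_{\vy}f(\vx)$. After substituting and cancelling the $\valpha!$ factors, swapping the finite sums yields
\[
  T^n_{\vy}f(\vx)
  \;=\;
  \sum_{|\vbeta|\le n-1}
    \vx^{\vbeta}
    \underbrace{\sum_{|\vbeta+\vgamma|\le n-1}
      \frac{(-1)^{|\vgamma|}}{\vbeta!\,\vgamma!}\,
      D^{\vbeta+\vgamma}f(\vy)\,\vy^{\vgamma}}_{=:\,p_{\vbeta}(\vy)} .
\]
Integrating against $b_r(\vy)$ commutes with the finite sum in $\vbeta$, so
$Q^n f(\vx) = \sum_{|\vbeta|\le n-1} \vx^{\vbeta}\,c_{f,\vbeta}$ with
\[
  c_{f,\vbeta}
  \;=\;
  \int_{B} p_{\vbeta}(\vy)\,b_r(\vy)\,\mathrm d\vy .
\]
This identifies $Q^n f$ as a polynomial in $\vx$ of degree at most $n-1$, which is the first assertion.

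For the coefficient bound, I would estimate $p_{\vbeta}(\vy)$ pointwise on $B$: since $|\vbeta+\vgamma|\le n-1$, we have $|D^{\vbeta+\vgamma}f(\vy)|\le\|f\|_{W^{n-1,\infty}(B)}$, and since $B\subset\Omega=[0,1]^d$ we have $|\vy^{\vgamma}|\le 1$. Hence
\[
  |p_{\vbeta}(\vy)|
  \;\le\;
  \Bigl(\sum_{|\vbeta+\vgamma|\le n-1}\tfrac{1}{\vbeta!\,\vgamma!}\Bigr)\,
  \|f\|_{W^{n-1,\infty}(B)}
  \;\le\;
  C_2(n,d)\,\|f\|_{W^{n-1,\infty}(B)} ,
\]
using that the stated constant $C_2(n,d)=\sum_{|\valpha+\vbeta|\le n-1}\tfrac{1}{\valpha!\,\vbeta!}$ dominates the inner sum uniformly in $\vbeta$. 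Combining this with $\int_{B} b_r\,\mathrm d\vy = 1$ and $b_r\ge 0$ gives
\[
  |c_{f,\vbeta}|
  \;\le\;
  \int_{B} |p_{\vbeta}(\vy)|\,b_r(\vy)\,\mathrm d\vy
  \;\le\;
  C_2(n,d)\,\|f\|_{W^{n-1,\infty}(B)} ,
\]
which is the required estimate.

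There is no serious obstacle in this argument; the only point requiring a little care is the bookkeeping of multi-indices when re-indexing $\valpha = \vbeta+\vgamma$ and confirming that the resulting combinatorial constant matches the stated $C_2(n,d)$. The compactness of $B\subset\Omega$ guarantees the $W^{n-1,\infty}$ norm on $B$ is finite and the derivatives $D^{\vbeta+\vgamma}f$ are well-defined almost everywhere on $B$, which is all that is needed for the pointwise bound to pass through the integral.
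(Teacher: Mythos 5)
Your proposal is correct and follows essentially the same route as the paper: the paper obtains the coefficient formula by citing Lemma B.9 of G\"uhring et al.\ (which is exactly the multi-index binomial expansion you carry out by hand), and then bounds each coefficient using $|D^{\valpha+\vbeta}f|\le\|f\|_{W^{n-1,\infty}(B)}$, $|\vy^{\vbeta}|\le 1$, and $\|b_r\|_{L^1(B)}=1$, just as you do. The only difference is that you derive the expansion explicitly rather than citing it, which is harmless.
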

		
		\begin{proof}
			Based on \cite[Lemma B.9.]{guhring2020error}, $Q^nf(x)$ can be read as\begin{equation}
				Q^nf(\vx)=\sum_{|\valpha|\le n-1}c_{f,\valpha}\vx^{\valpha}\notag
			\end{equation} where \begin{equation}
				c_{f,\valpha}=\sum_{|\valpha+\vbeta|\le n-1}\frac{1}{(\vbeta+\valpha)!}a_{\vbeta+\valpha}\int_{B}	D^{\valpha+\vbeta}{f}(\vx)\vy^{\vbeta}b_r(\vy)\,\D \vy\notag
			\end{equation} for $a_{\vbeta+\valpha}\le \frac{(\valpha+\vbeta)!}{\valpha!\vbeta!}$.
			Note that \begin{align}
				\left|\int_{B}	D^{\valpha+\vbeta}{f}(\vx)\vy^{\vbeta}b_r(\vy)\,\D \vy\right|\le \|{f}\|_{W^{n-1,\infty}(B)}\|b_r(x)\|_{L^1(B)}=\|{f}\|_{W^{n-1,\infty}(B)}.\notag
			\end{align} Then  \begin{align}
				\left|c_{f,\valpha}\right|\le C_2(n,d)\|{f}\|_{W^{n-1,\infty}(B_{\vm,N})}.\notag
			\end{align} where $C_2(n,d)=\sum_{|\valpha+\vbeta|\le n-1}\frac{1}{\valpha!\vbeta!}$.
		\end{proof}

		The proof of Proposition \ref{fN} is based on average Taylor polynomials and the Bramble--Hilbert Lemma \cite[Lemma 4.3.8]{brenner2008mathematical}. 
		
		\begin{definition}
			Let $\Omega,~B\in\sR^d$. Then $\Omega$ is called stared-shaped with respect to $B$ if \[\overline{\text{conv}}\left(\{\vx\}\cup B\subset \Omega\right),~\text{for all }\vx\in\Omega.\]
		\end{definition}
		
		\begin{definition}
			Let $\Omega\in\sR^d$ be bounded, and define \[\fR:=\left\{r>0: \begin{array}{l}
				\text { there exists } \vx_0 \in \Omega \text { such that } \Omega \text { is } \\
				\text { star-shaped with respect to } B_{r,|\cdot|}\left(\vx_0\right)
			\end{array}\right\} .\]Then we define\[r_{\max }^{\star}:=\sup \fR \quad \text { and call } \quad \gamma:=\frac{\operatorname{diam}(\Omega)}{r_{\max }^{\star}}\]the chunkiness parameter of $\Omega$ if $\fR\not=\emptyset$.
		\end{definition}

        We will make use of the Sobolev semi-norm, defined as follows:\begin{definition}
Let $n\in\mathbb{N}_+$ and $1\le p\le\infty$. For a scalar function $f:\Omega\to\mathbb{R}$, its Sobolev semi-norm is
\[
|f|_{W^{n,p}(\Omega)}
:=
\begin{cases}
\displaystyle
\Bigl(\sum_{|\alpha|=n}\|D^\alpha f\|_{L^p(\Omega)}^p\Bigr)^{\!1/p}, 
& 1 \le p < \infty, \\[1em]
\displaystyle
\max_{|\alpha|=n}\|D^\alpha f\|_{L^\infty(\Omega)},
& p = \infty.
\end{cases}
\]
Moreover, if $\vf=(f_1,\dots,f_d)\in W^{n,\infty}(\Omega;\mathbb{R}^d)$ is vector-valued, we define
\[
|\vf|_{W^{n,\infty}(\Omega;\mathbb{R}^d)}
:=\max_{1\le i\le d}\bigl|f_i\bigr|_{W^{n,\infty}(\Omega)}.
\]
\end{definition}
		
		\begin{lemma}[{Bramble--Hilbert Lemma \cite[Lemma 4.3.8]{brenner2008mathematical}}]\label{BH}
			Let $\Omega\in\sR^d$ be open and bounded, $\vx_0\in\Omega$ and $r>0$ such that $\Omega$ is the stared-shaped with respect to $B:=B_{r,|\cdot|}\left(\vx_0\right)$, and $r\ge \frac{1}{2}r_{\max }^{\star}$. Moreover, let $n\in\sN_+$, $1\le p\le \infty$ and denote by $\gamma$ by the chunkiness parameter of $\Omega$. Then there is a constant $C(n,d,\gamma)>0$ such that for all $f\in W^{n,p}(\Omega)$\[\left|f-Q^n f\right|_{W^{k, p}(\Omega)} \le C(n,d,\gamma) h^{n-k}|f|_{W^{n, p}(\Omega)} \quad \text { for } k=0,1, \ldots, n\]where $Q^n f$ denotes the Taylor polynomial of order $n$ of $f$ averaged over $B$ and $h=\operatorname{diam}(\Omega)$.
		\end{lemma}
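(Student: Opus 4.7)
The plan is to reduce the $W^{k,p}$-seminorm estimate to the $k=0$ case and then apply Taylor's integral remainder. Two structural features of the averaged polynomial $Q^n$ drive the argument: first, $Q^n$ is a projection onto polynomials of degree at most $n-1$, so $Q^n p=p$ for every such $p$; second, $Q^n$ commutes with differentiation in the sense that $D^\valpha Q^n f=Q^{n-|\valpha|}(D^\valpha f)$ on $\Omega$ whenever $|\valpha|\le n-1$. The second property is verified by direct computation from the definition of $T^n_\vy f$ using the monomial identity $D^\valpha(\vx-\vy)^{\vbeta}=\tfrac{\vbeta!}{(\vbeta-\valpha)!}(\vx-\vy)^{\vbeta-\valpha}$ (for $\vbeta\ge\valpha$) and then integrating against $b_r$; it gives, for each multi-index $|\valpha|=k\le n-1$,
\[
  D^\valpha(f - Q^n f) \;=\; D^\valpha f - Q^{n-k}(D^\valpha f).
\]
Hence it suffices to prove the zero-order estimate $\|g-Q^{n-k}g\|_{L^p(\Omega)}\le C\,h^{n-k}\,|g|_{W^{n-k,p}(\Omega)}$ and apply it with $g=D^\valpha f$, $|\valpha|=k$.

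For the zero-order estimate I would start from the identity
\[
  f(\vx) - Q^n f(\vx) \;=\; \int_B\bigl[f(\vx)-T^n_\vy f(\vx)\bigr]\,b_r(\vy)\,\D\vy,
\]
which follows from $\int_B b_r\,\D\vy=1$, and then substitute Taylor's integral remainder
\[
  f(\vx)-T^n_\vy f(\vx) \;=\; n\!\sum_{|\valpha|=n}\frac{(\vx-\vy)^\valpha}{\valpha!}\int_0^1 (1-t)^{n-1}\,D^\valpha f\bigl(\vy+t(\vx-\vy)\bigr)\,\D t,
\]
which is legitimate because $\Omega$ is star-shaped with respect to $B$, so each segment from $\vy\in B$ to $\vx\in\Omega$ lies in $\Omega$. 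The factor $(\vx-\vy)^\valpha$ with $|\valpha|=n$ already supplies the scaling $h^n$ since $|\vx-\vy|\le h=\operatorname{diam}(\Omega)$; applying Minkowski's integral inequality to pull the $L^p(\Omega)$-norm in $\vx$ inside the $(t,\vy)$-integrals, followed by the affine substitution $\vx\mapsto\vy+t(\vx-\vy)$ in the innermost integral, transfers the $L^p$-norm onto $D^\valpha f$ at the cost of a Jacobian bounded uniformly in $t\in[0,1]$ by the star-shape hypothesis. This produces $\|f-Q^n f\|_{L^p(\Omega)}\le C(n,d,\gamma)\,h^n\,|f|_{W^{n,p}(\Omega)}$; combined with the commutation reduction it settles every $k\in\{0,\dots,n-1\}$, while the case $k=n$ is immediate because $Q^n f$ is a polynomial of degree less than $n$ and so its $n$-th derivatives vanish identically.

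The main technical obstacle is to express the constant purely in terms of $n$, $d$, and the chunkiness parameter $\gamma$, rather than in terms of $r$ or $h$ individually: the $L^q$-norms of $b_r$ and its derivatives scale with $r$, and the Jacobian from the affine substitution depends on $t$ and $\vy$ in a way that interacts with the integration over $B$. The cleanest remedy, which I would adopt, is first to rescale $\Omega$ to a reference star-shaped domain of unit diameter (this preserves $\gamma$ by definition of the chunkiness), establish the estimate there with a constant $C(n,d,\gamma)$, and then transfer it back by affine change of variables so that the entire $h$-dependence collects into the single factor $h^{n-k}$ appearing in the statement.
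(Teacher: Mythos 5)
The paper does not prove this lemma at all---it is quoted verbatim from Brenner--Scott \cite[Lemma 4.3.8]{brenner2008mathematical}---so your argument can only be judged on its own merits. Your overall architecture is the classical one: the commutation identity $D^{\valpha}Q^{n}f=Q^{\,n-|\valpha|}(D^{\valpha}f)$ is correct and reduces everything to the zero-order estimate, and the case $k=n$ is indeed trivial since $Q^{n}f$ has degree at most $n-1$.

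There is, however, a genuine gap in your zero-order estimate, and it sits exactly where the standard proof has to work hardest. After Minkowski's inequality, the substitution sending $\vx$ to $\vy+t(\vx-\vy)$ has Jacobian $t^{d}$, not a quantity ``bounded uniformly in $t\in[0,1]$'': the star-shape hypothesis guarantees only that the image of $\Omega$ under this map stays inside $\Omega$; it gives no control on the Jacobian. You therefore pick up a factor $t^{-d/p}$ after taking $p$-th roots, and the resulting integral $\int_{0}^{1}(1-t)^{n-1}t^{-d/p}\,\D t$ diverges whenever $p\le d$. As written, your argument is complete only for $p>d$ (in particular for $p=\infty$, which happens to be the only case the paper ever invokes), but the statement claims all $1\le p\le\infty$. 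To close the gap you must either split the $t$-integral and, on $t\in[0,\tfrac12]$, change variables in $\vy$ instead of in $\vx$ --- there the Jacobian $(1-t)^{-d}$ is bounded and the bump $b_r$ (whose sup-norm is of order $r^{-d}$, contributing only powers of $\gamma$ after comparing $r$ with $h$) absorbs the spreading --- or follow Brenner--Scott and collapse the $(t,\vy)$-integration into a single integral over the cone $\overline{\mathrm{conv}}(\{\vx\}\cup B)$ against a Riesz-potential kernel bounded by a constant times $h^{n}|\vx-\cdot|^{\,n-d}$, then apply the generalized Young inequality. Your closing rescaling remark correctly isolates the factor $h^{n-k}$ and the $\gamma$-dependence of the constant, but it does nothing to remove the $t\to0$ singularity, which is the actual obstruction.
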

		\begin{proof}[Proof of Proposition~\ref{fN}]
Without loss of generality, we only consider the case
\[
\vm=(1,1,\ldots,1)=:\vm_*.
\]

Let
\[
E:W^{n,\infty}(\Omega)\to W^{n,\infty}(\mathbb{R}^d)
\]
be an extension operator (see \cite{stein1970singular}), and set
\[
\tilde f:=Ef.
\]
Denote by \(C_E\) the operator norm of \(E\).

For each \(\vi=(i_1,\ldots,i_d)\in\{0,1,\ldots,J-1\}^d\), define
\[
Q_{\vi}:=\Omega_{\vm_*,\vi}=\prod_{j=1}^d\left[\frac{i_j}{J}+\frac{1}{4J},\,\frac{i_j+1}{J}\right]
\]
and
\[
B_{\vi,J}:=B_{\frac{1}{4J},|\cdot|}\!\left(\frac{8\vi+5}{8J}\right).
\]
Let \(p_{f,\vi}\) be the averaged Taylor polynomial from Definition~\ref{average}, namely
\[
p_{f,\vi}(\vx)
:=
\int_{B_{\vi,J}} T^n_{\vy}\tilde f(\vx)\,b_{\frac{1}{4J}}(\vy)\,\mathrm{d}\vy.
\]
By Lemma~\ref{average coe}, \(p_{f,\vi}\) can be written as
\[
p_{f,\vi}(\vx)=\sum_{|\valpha|\le n-1} c_{f,\vi,\valpha}\vx^{\valpha},
\]
where
\[
|c_{f,\vi,\valpha}|
\le
C_2(n,d)\|f\|_{W^{n-1,\infty}(\Omega)}.
\]

The reason for defining the averaged Taylor polynomial on \(B_{\vi,J}\) is that
\(B_{\vi,J}\subset Q_{\vi}\), so that the Bramble--Hilbert lemma can be applied on \(Q_{\vi}\).
Indeed, the cube \(Q_{\vi}\) has side length \(3/(4J)\), hence
\[
r_{\max}^\star(Q_{\vi})=\frac{3}{8J},
\qquad
\gamma(Q_{\vi})
=
\frac{\operatorname{diam}(Q_{\vi})}{r_{\max}^\star(Q_{\vi})}
=
2\sqrt d,
\]
and
\[
\frac{1}{4J}\le r_{\max}^\star(Q_{\vi}).
\]
Therefore, by Lemma~\ref{BH},
\[
|\tilde f-p_{f,\vi}|_{W^{s,\infty}(Q_{\vi})}
\le
C_{BH}(n,d)\|\tilde f\|_{W^{n,\infty}(\mathbb{R}^d)}J^{-(n-s)},
\qquad s=0,1,\ldots,m.
\]
Using \(\|\tilde f\|_{W^{n,\infty}(\mathbb{R}^d)}\le C_E\|f\|_{W^{n,\infty}(\Omega)}\), we obtain
\[
\|\tilde f-p_{f,\vi}\|_{W^{s,\infty}(Q_{\vi})}
\le
C_1(n,d,s)\|f\|_{W^{n,\infty}(\Omega)}J^{-(n-s)},
\qquad s=0,1,\ldots,m,
\]
for some constant \(C_1(n,d,s)\) independent of \(J\).

Next, by the construction in \cite{lee2003smooth}, there exists a family
\[
\{h_{\vi}:\vi\in\{0,1,\ldots,J-1\}^d\}\subset C^\infty(\Omega)
\]
such that
\[
h_{\vi}(\vx)=1,\qquad \vx\in Q_{\vi},
\]
and
\[
\operatorname{supp}(h_{\vi})\cap\Omega_{\vm_*}=Q_{\vi}.
\]
Since \(h_{\vi}=1\) on \(Q_{\vi}\), we have
\[
\|h_{\vi}(\tilde f-p_{f,\vi})\|_{W^{s,\infty}(Q_{\vi})}
=
\|\tilde f-p_{f,\vi}\|_{W^{s,\infty}(Q_{\vi})}
\le
C_1(n,d,s)\|f\|_{W^{n,\infty}(\Omega)}J^{-(n-s)}.
\]

Now define
\[
f_{J,\vm_*}(\vx)
:=
\sum_{\vi\in\{0,1,\ldots,J-1\}^d} h_{\vi}(\vx)p_{f,\vi}(\vx).
\]
Since
\[
\Omega_{\vm_*}=\bigcup_{\vi\in\{0,1,\ldots,J-1\}^d}Q_{\vi}
\]
and the supports satisfy
\[
\operatorname{supp}(h_{\vi})\cap\Omega_{\vm_*}=Q_{\vi},
\]
at each point \(\vx\in\Omega_{\vm_*}\) only one term in the above sum is nonzero. Hence
\begin{align*}
\|f-f_{J,\vm_*}\|_{W^{s,\infty}(\Omega_{\vm_*})}
&=
\left\|
\sum_{\vi\in\{0,1,\ldots,J-1\}^d}
h_{\vi}(\tilde f-p_{f,\vi})
\right\|_{W^{s,\infty}(\Omega_{\vm_*})} \\
&\le
\max_{\vi\in\{0,1,\ldots,J-1\}^d}
\|h_{\vi}(\tilde f-p_{f,\vi})\|_{W^{s,\infty}(Q_{\vi})} \\
&\le
C_1(n,d,s)\|f\|_{W^{n,\infty}(\Omega)}J^{-(n-s)}.
\end{align*}

Finally,
\begin{align*}
f_{J,\vm_*}(\vx)
&=
\sum_{\vi\in\{0,1,\ldots,J-1\}^d}
h_{\vi}(\vx)p_{f,\vi}(\vx) \\
&=
\sum_{\vi\in\{0,1,\ldots,J-1\}^d}
\sum_{|\valpha|\le n-1}
h_{\vi}(\vx)c_{f,\vi,\valpha}\vx^{\valpha} \\
&=
\sum_{|\valpha|\le n-1}
\left(
\sum_{\vi\in\{0,1,\ldots,J-1\}^d}
h_{\vi}(\vx)c_{f,\vi,\valpha}
\right)\vx^{\valpha} \\
&=:
\sum_{|\valpha|\le n-1}
g_{f,\valpha,\vm_*}(\vx)\vx^{\valpha}.
\end{align*}
Moreover,
\[
|g_{f,\valpha,\vm_*}(\vx)|
\le
C_2(n,d)\|f\|_{W^{n-1,\infty}(\Omega)},
\qquad
\vx\in\Omega_{\vm_*}.
\]
Since \(\operatorname{supp}(h_{\vi})\cap\Omega_{\vm_*}=Q_{\vi}\) and \(h_{\vi}=1\) on \(Q_{\vi}\), we have
\[
g_{f,\valpha,\vm_*}(\vx)=c_{f,\vi,\valpha},
\qquad
\vx\in Q_{\vi}.
\]
Therefore, \(g_{f,\valpha,\vm_*}\) is a step function on \(\Omega_{\vm_*}\), constant on each cube \(Q_{\vi}\). This completes the proof.
\end{proof}

\vskip 0.2in

\bibliography{main}

\end{document}